\documentclass[10pt]{article}
\usepackage[a4paper,margin=1.8cm]{geometry}
\usepackage{lmodern}
\usepackage{rhineai}

\usepackage{amsmath,amsfonts,bm}
\newcommand{\E}{\mathbb E}
\newcommand{\KL}{D_{\mathrm{KL}}}
\newcommand{\Var}{\mathrm{Var}}

\usepackage[hidelinks]{hyperref}
\usepackage{url}
\usepackage{booktabs}
\usepackage{array}
\usepackage{xcolor,colortbl}
\definecolor{ourrowpink}{HTML}{FCD0E9}
\usepackage{float}
\usepackage{placeins}
\usepackage{amsthm}
\usepackage{graphicx}
\usepackage[caption=false,font=footnotesize]{subfig}
\usepackage{wrapfig}
\usepackage{needspace}
\usepackage{tcolorbox}
\tcbuselibrary{skins}
\definecolor{owtsampletext}{HTML}{205958}
\newtcolorbox{owtsamplebox}[1]{
  enhanced,
  title={#1},
  colback=black!3,
  colbacktitle=black!3,
  colframe=black!45,
  coltitle=black,
  colupper=owtsampletext,
  fonttitle=\small\bfseries,
  fontupper=\footnotesize,
  titlerule=0pt,
  boxrule=0.5pt,
  arc=5pt,
  left=7pt,
  right=7pt,
  top=3pt,
  bottom=3pt,
  toptitle=4pt,
  bottomtitle=2pt,
  boxsep=0pt,
  before skip=4pt,
  after skip=4pt,
  before upper={\setlength{\parindent}{0pt}\setlength{\parskip}{1pt}}
}
\let\appendixaddcontentsline\addcontentsline
\renewcommand{\addcontentsline}[3]{}
\definecolor{takeawayBG}{HTML}{FFEEF1}
\definecolor{takeawayFrame}{HTML}{960018}
\newtcolorbox{takeaway}{
  colback=takeawayBG,
  colframe=takeawayFrame,
  boxrule=0.7pt,
  arc=3pt,
  left=6pt,
  right=6pt,
  top=3pt,
  bottom=3pt,
  boxsep=0pt,
  before skip=4pt,
  after skip=4pt,
  fontupper=\normalfont,
  before upper={{\color{takeawayFrame}\textbf{Takeaway.}}\ }
}

\newtheorem{proposition}{Proposition}
\newtheorem{lemma}{Lemma}

\usepackage{xcolor}
\newif\ifshowtodos
\showtodostrue

\title{Spherical Brownian Diffusion for Language Modeling}

\author{Andrea Miele\rhineaiaff{1}, Ilija Bogunovic\rhineaiaff{1}}
\rhineaiaffiliations{\rhineaiaff{1}University of Basel}

\rhineaicode{\url{https://github.com/andreamiele/sbd}}
\rhineaicorrespondence{\href{mailto:andrea.miele.pro@gmail.com}{andrea.miele.pro@gmail.com}}

\begin{document}

\rhineaifrontmatter{%
Continuous diffusion language models involve two coupled design choices: how to corrupt continuous token representations and how to invert that corruption at generation time. Recent work has explored language generation on the unit sphere through geodesic and von Mises--Fisher (vMF) paths, raising the question of which forward process to use on this geometry and how much that choice matters relative to inference. We introduce {\color{rhineaiAccent}\textbf{Spherical Brownian Diffusion}}, using Brownian motion as the canonical isotropic diffusion on the sphere. Its heat semigroup provides a principled forward process, an additive dimension-free noise clock, and an exact ancestral bridge, while a clean-token posterior learned with cross-entropy supports multiple samplers. To separate corruption from inference, we derive high-dimensional approximation rates between Brownian and moment-matched vMF corruption and use this relationship for controlled substitutions. On Sudoku, Brownian and moment-matched vMF give nearly identical results across four samplers and five budgets under otherwise identical settings. In contrast, with the corruption process fixed, sampler choice produces a 25.0-point spread at a sampler budget of 180 calls. Using the ancestral bridge, we split this gain into a memory effect from re-noising the state, present on Sudoku and GSM8K, and a token-selection effect on GSM8K and largest at small budgets. Across reasoning tasks, categorical refinement improves performance, whereas unconditional text generation benefits from stochastic updates that preserve diversity.
}

\vspace{-1ex}
\section{Introduction}
\vspace{-1ex}
Most language models are autoregressive \citep{vaswani2017attention}: they
generate tokens sequentially and cannot revise earlier commitments. Diffusion models instead update many positions in parallel and can revise them throughout generation. 
In continuous diffusion language models
\citep{li2022diffusionlmimprovescontrollabletext,dieleman2022continuous,
gulrajani2023likelihood,yang2026replaid,chen2026langflow}, tokens are
represented in a continuous space. Beyond choosing how to represent these
discrete data, two coupled design choices remain: how to corrupt the
representations and how to invert that corruption at inference
\citep{dieleman2022continuous,deschenaux2026hyperspherical,
chemseddine2026spherical}.

The representation of discrete data on a continuous space creates a trade-off: one-hot states represent tokens exactly, but their dimension scales with the vocabulary and their distances carry little semantic structure \citep{deschenaux2026hyperspherical}; lower-dimensional embeddings avoid both issues but require choosing a geometry. The unit sphere is a natural bounded choice, and recent spherical models use a geodesic path (S-FLM) \citep{deschenaux2026hyperspherical} or a von Mises-Fisher (vMF) path
\citep{chemseddine2026spherical}. Because each
pairs its forward process with a different inference procedure, the effect
of the corruption kernel itself is hard to isolate. We ask how the kernel
interacts with training and inference, and how much it matters relative to
the inference rule. \looseness=-1

We introduce \textbf{Spherical Brownian Diffusion} (SBD), which corrupts token embeddings with Brownian motion on the unit sphere; its transitions are given by the spherical heat kernel, which we call \emph{Heat}.
Up to time scaling, this is the canonical isotropic diffusion on the sphere. SBD is trained with cross-entropy to predict the clean-token posterior
(Figure~\ref{fig:spherical-brownian-overview}), giving a single model that
supports both continuous reverse-time SDE/ODE sampling and categorical
refinement. This lets us vary the inference rule without changing the learned
denoiser; for refinement, q-argmax selects the posterior mode and re-noises its embedding with Heat.

\emph{Heat} also admits an exact ancestral bridge: the distribution of a less noisy state given the current state and a clean token. The bridge controls how much of the current state the sampler keeps: we add extra noise before each reverse step and bridge back, so more noise means less memory, at no extra network cost. Other spherical paths we consider do not allow this, so Heat serves both as a reference process and as a tool for separating state memory from token selection. To isolate the effect of the kernel itself, each kernel must be paired with compatible inference dynamics. We therefore introduce a conditional quantile map, built on the radial-transport framework of \citet{chemseddine2026spherical}, that handles multimodal paths without stiff velocity integration. \looseness=-1

\textbf{Results.} Using matched transport, crossed training--sampling substitutions, and the Heat bridge, we study the roles of the corruption kernel in training, continuous transport, and re-noising for categorical refinement (Sections~\ref{sec:kernel-path}, \ref{sec:kernel-mismatch} and \ref{sec:kernel-refinement} respectively). We find that when training and sampling use compatible continuous transport, the path itself matters little: Heat, moment-matched vMF, and distant mixture kernels reach similar accuracy. 
Moment-matched vMF provides a principled nearby control, as our theory shows that it approaches Heat at the critical noise scale. 
In contrast, sampling with a kernel different from the one used in training is costly: replacing Heat by a mixture kernel on a Heat-trained GSM8K model reduces accuracy to near zero, while a short matched fine-tune largely restores it. 
Kernel shape also interacts with refinement: mixture re-noising helps q-argmax but hurts confidence gating.
Inference has the larger effect overall. With Heat fixed, four inference rules span 25 points on hard Sudoku. The ancestral bridge attributes much of this gap to state memory: retaining the previous state preserves noise on Sudoku and token commitments on GSM8K. This cost grows with finer step grids on both Sudoku and GSM8K.
Across Sudoku, GSM8K, and OpenWebText, SBD is competitive with recent continuous models, with the strongest method depending on the inference regime; q-argmax also improves a released S-FLM model, showing that the refinement gain transfers beyond Heat.

\begin{figure}[!t]
  \centering

  \includegraphics[width=\linewidth]{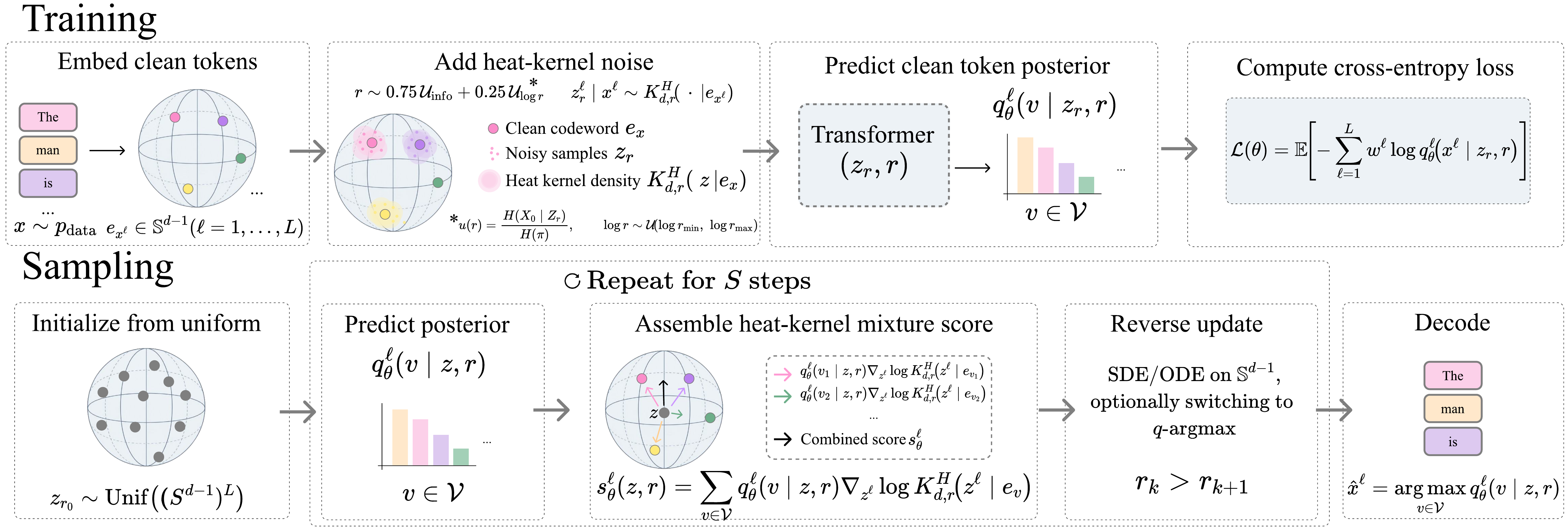}

\caption{Overview of SBD. \textbf{Training (top):} token embeddings are
  corrupted with the spherical heat kernel, and a Transformer predicts the
  clean-token posterior with cross-entropy. \textbf{Sampling (bottom):} the
  posterior either builds the score for reverse SDE/ODE updates or drives
  categorical refinement, which selects tokens and re-noises their
  codewords.}

  \label{fig:spherical-brownian-overview}
\end{figure}

\textbf{Contributions.}
\textbf{(1)} We introduce \textbf{SBD}, to our knowledge the first language
diffusion model whose forward corruption is given directly by the spherical heat semigroup on token embeddings. Its heat semigroup offers a
principled corruption process, a clean-token posterior supporting continuous
and categorical sampling, and an exact ancestral bridge controlling trajectory
memory (Section~\ref{sec:spherical-brownian-diffusion}).
\textbf{(2)} To compare spherical corruption processes without confounding
kernel shape, time parameterization, and inference, we develop a conditional
quantile-map integrator, \textbf{heat--vMF approximation rates}, and a shape
residual (Section~\ref{sec:spherical-brownian-diffusion}).
\textbf{(3)} Using these tools, we show that \textbf{inference can matter more
than the spherical kernel} in our controlled settings, and isolate state memory
as a key factor limiting refinement, with a cost that grows with budget
(Sections~\ref{sec:kernel-matters}--\ref{sec:experiments}).\looseness=-1

\section{Background}
\label{sec:background}

\textbf{Notation.}
Let $\mathbf x=(x^1,\ldots,x^L)\in\mathcal V^L$ be a token sequence, with
token embeddings $\mathbf e_v\in\mathbb S^{d-1}$. Then
$\mathbf z\in(\mathbb S^{d-1})^L$, and the tangent projection is
$P_{\mathbf z}=I-\mathbf z\mathbf z^\top$.

\subsection{Non-autoregressive generation}
Autoregressive models generate tokens from left to right. Masked discrete
diffusion models instead corrupt tokens and learn to reconstruct them in
parallel \citep{sahoo2024simple,austin2021structured}. Continuous
alternatives embed tokens in a Euclidean or manifold-valued space and learn a
reverse diffusion or flow \citep{dieleman2022continuous,lipman2023flow,
zhang2026latentgeometry}. These models can revise several positions at each
step, but their behavior depends on the latent geometry and on how continuous
states are mapped back to tokens.

\subsection{Spherical paths}
S-FLM places token embeddings on a sphere and follows deterministic
interpolating paths \citep{deschenaux2026hyperspherical}. A related approach
uses von Mises--Fisher (vMF) corruption \citep{chemseddine2026spherical}. For
a clean codeword $\mathbf e$, a vMF conditional has density proportional to
$\exp(\kappa\langle\mathbf z,\mathbf e\rangle)$ and score
\begin{equation}
  \nabla_{\mathbf z}\log p_{\mathrm{vMF}}(\mathbf z\mid\mathbf e)
  =\kappa(\mathbf e-\langle\mathbf z,\mathbf e\rangle\mathbf z).
  \label{eq:vmf-background}
\end{equation}
Its radial coefficient is constant in the cosine
\citep{mardia2000directional}. Spherical Brownian motion
has the same tangent direction but a cosine-dependent coefficient, which is
the main geometric difference studied here.

\subsection{Spherical Brownian diffusion}
\label{sec:background-brownian}
Brownian motion is the canonical isotropic diffusion on the sphere. We use the
dimension-free coordinate $r=-\log\rho$, where $\rho$ is the retained mean
cosine, and intrinsic heat time $\tau_d(r)=2r/(d-1)$. Here, $\rho$ is the
average cosine similarity between noisy and clean embeddings, and $\tau_d$ is
the elapsed time of standard Brownian motion on the sphere. The coordinate
$r$ is comparable across dimensions within one kernel family, but equal $r$
does not imply equal token information across families; analogous time-warping
ideas were used in CDCD \citep{dieleman2022continuous}. Section~
\ref{sec:kernel-matters} therefore also uses a measured information clock. In
the Brownian clock, the forward process satisfies
\begin{equation}
  \mathrm d\mathbf Z_r=-\mathbf Z_r\,\mathrm dr
  +\sqrt{\tfrac{2}{d-1}}P_{\mathbf Z_r}\,\mathrm d\mathbf W_r.
  \label{eq:spherical-brownian-background}
\end{equation}
Its transition density is the spherical heat kernel $K^H_{d,r}$, which solves
$\partial_rK^H_{d,r}=\Delta_{\mathbb S^{d-1}}K^H_{d,r}/(d-1)$ and converges to
the uniform distribution as $r\to\infty$. Score-based diffusion reverses this
process using the score of the noisy marginal
\citep{song2020score,lou2024scaling}.

For discrete data, the marginal score is a posterior average of conditional
heat-kernel scores:
\begin{equation}
  \nabla_{\mathbf z^\ell}\log p_r(\mathbf z)
  =\sum_{v\in\mathcal V}p(x^\ell=v\mid\mathbf Z_r=\mathbf z)
  \nabla_{\mathbf z^\ell}\log
  K^H_{d,r}(\mathbf z^\ell\mid\mathbf e_v).
  \label{eq:posterior-mixture-score-background}
\end{equation}
A path-level comparison with S-FLM and vMF is given in
Appendix~\ref{app:implementation-details}
(Table~\ref{tab:spherical-method-comparison}).

\section{Spherical Brownian Diffusion}
\label{sec:spherical-brownian-diffusion}

This section develops Spherical Brownian Diffusion (SBD) and the tools we use to study the role of the corruption kernel. We first define the Heat forward process and show how a clean-token posterior determines its marginal score (Section~\ref{sec:sbd-forward}). We then describe training and two ways we use this posterior at inference: standard continuous reverse dynamics and categorical refinement. Our main inference tool is an exact Heat bridge, which lets us vary state memory independently of token selection. Finally, to compare Heat with other spherical corruptions without confounding the forward kernel with the sampler, we introduce matched transport and identify moment-matched vMF as a principled nearby-kernel control. These tools support the controlled kernel study in Section~\ref{sec:kernel-matters}.

\subsection{Forward process and marginal score}
\label{sec:sbd-forward}

Our starting point is to use spherical Brownian motion as the tokenwise corruption process. Given a clean sequence $\mathbf x$, we corrupt each token embedding
independently with the spherical heat kernel introduced in
Section~\ref{sec:background-brownian}. At noise level $r$, we have:
$
  \mathbf z^\ell \mid x^\ell
  \sim K^H_{d,r}(\cdot \mid \mathbf e_{x^\ell}).$
Dependencies between positions are modeled by the Transformer rather than by
the forward process.

We sample the heat kernel directly through its cosine distribution. We first
draw the cosine $c$ with respect to the clean codeword, sample a uniform
tangent direction $\mathbf v$, and set
$\mathbf z^\ell
  =
  c\mathbf e_{x^\ell}
  + \sqrt{1-c^2}\,\mathbf v.$

Because the heat kernel is isotropic, its conditional score depends only on
$c_v^\ell=\langle \mathbf z^\ell,\mathbf e_v\rangle$:
\begin{equation}
  \nabla_{\mathbf z^\ell}\log
  K^H_{d,r}(\mathbf z^\ell\mid\mathbf e_v)
  =
  a^H_{d,r}(c_v^\ell)
  (\mathbf e_v-c_v^\ell\mathbf z^\ell),
  \qquad
  a^H_{d,r}(c)=\partial_c\log K^H_{d,r}(c).
  \label{eq:heat-radial-score}
\end{equation}

Rather than learning the continuous score directly, we train a Transformer to
predict the clean-token posterior
$q^\ell_\theta(v \mid \mathbf z,r)$ with cross-entropy. This follows the
score-interpolation view of continuous diffusion for categorical data
\citep{dieleman2022continuous}: the marginal score is the posterior average
of the known conditional scores. On the sphere, the corresponding
posterior-weighted tangent-sum representation was also used by
\citet{chemseddine2026spherical}. For Heat,
\begin{equation}
  \mathbf s^\ell_\theta(\mathbf z,r)
  =
  \sum_{v\in\mathcal V}
  q^\ell_\theta(v\mid\mathbf z,r)\,
  a^{\mathrm H}_{d,r}(c^\ell_v)
  \bigl(\mathbf e_v-c^\ell_v\mathbf z^\ell\bigr).
  \label{eq:marginal-score}
\end{equation}
Thus, a single categorical posterior provides the score required by continuous samplers and can also be used directly for categorical refinement.

\subsection{Training and codebooks}
\label{sec:sbd-training}

We train this posterior directly by denoising classification. For clean data $\mathbf x$ and a
sampled Heat-corrupted state $\mathbf z$, where
$p_r^H(\mathbf z\mid\mathbf x)=\prod_{\ell=1}^L
K^H_{d,r}(\mathbf z^\ell\mid\mathbf e_{x^\ell})$, we predict the clean
tokens with cross-entropy,
\begin{equation}
  \mathcal L_{\mathrm{CE}}(\theta)
  = \E_{\mathbf x,\,r,\,\mathbf z}
    \Big[-\textstyle\sum_{\ell=1}^{L} w_\ell
    \log q_\theta^\ell(x^\ell\mid\mathbf z,r)\Big],
  \label{eq:sbd-training-loss}
\end{equation}
where $w_\ell$ is a task-dependent loss weight. We use frozen or learned unit-norm codebooks; their construction and the
task-specific noise distributions are described in
Appendix~\ref{app:implementation-details}.

\subsection{Continuous dynamics and posterior refinement}
\label{sec:reverse-sampling}

Having learned $q_\theta$, we next describe how SBD generates. We consider two uses of the same posterior. First, it can be converted into a marginal score and used with standard reverse-SDE or probability-flow ODE dynamics.
Second, it can act directly on token predictions through categorical
refinement. We introduce q-argmax for the latter and, exploiting the Heat semigroup, an ancestral bridge that lets us interpolate between retaining and discarding the current continuous state.

\textbf{Continuous reverse dynamics.}
As a standard continuous baseline, we convert the posterior into the marginal score using Equation~\ref{eq:marginal-score}. We then integrate either the reverse
SDE or the probability-flow ODE on the sphere. The reverse SDE is stochastic,
whereas the ODE is deterministic given its initialization. Implementation
details are given in Appendix~\ref{app:implementation-details}. These are
the standard score-based reverse-time dynamics for a diffusion process
\citep{song2020score}.

\textbf{Categorical posterior refinement.}
Rather than converting the posterior into a continuous score, we can act on its token predictions directly. We introduce \emph{q-argmax}, which selects the most likely token at each position and
re-noises its codeword at the next radius:
\begin{equation}
  \widehat x_k^\ell=\arg\max\nolimits_{v\in\mathcal V}\,
  q_\theta^\ell(v\mid\mathbf z_k,r_k),
  \qquad
  \mathbf z_{k+1}^\ell\sim K^H_{d,r_{k+1}}\big(\cdot\mid\mathbf e_{\widehat x_k^\ell}\big).
  \label{eq:q-argmax-step}
\end{equation}
The token choice is deterministic given the current state, while re-noising
keeps the trajectory stochastic. Recomputing the posterior at every step
allows later iterations to revise earlier predictions. This is related to
predict-project-re-noise updates and to Diffusion-LM's repeated return to
token embeddings, but here the projection is explicitly the clean-token
posterior mode followed by a fresh Heat draw
\citep{sahoo2024simple,li2022diffusionlmimprovescontrollabletext}.

\textbf{Ancestral bridge and restart.}
q-argmax changes two aspects of ancestral sampling at once: it replaces
posterior sampling by a mode decision and discards the previous continuous state by re-noising from scratch. To separate these effects, we exploit the Heat semigroup to construct an exact ancestral bridge. For Heat, the semigroup gives the \emph{bridge} density $B^H_{s\leftarrow R}$
of the state at radius $s<R$, given the current state $y$ at radius $R$ and
the clean token $x$:
\begin{equation}
  B^H_{s\leftarrow R}(\mathbf z\mid\mathbf y,\mathbf e_x)
  = K^H_{d,s}(\mathbf z\mid\mathbf e_x)\,
    K^H_{d,R-s}(\mathbf y\mid\mathbf z)\,
    \big/\,K^H_{d,R}(\mathbf y\mid\mathbf e_x).
  \label{eq:heat-ancestral-bridge}
\end{equation}
Sampling $x$ from the exact posterior and then from this bridge gives the
exact finite-time reverse conditional; numerically, this is exact up to the
heat-table accuracy.

We use the bridge to control how much state memory is retained. After
evaluating $q_\theta(\cdot\mid\mathbf z_{r_k},r_k)$, we select
$\widehat x_k$ either by argmax or by temperature-scaled posterior sampling,
add $\Delta\geq0$ forward noise, and bridge back:
\begin{equation}
  \widetilde{\mathbf z}\sim
  K^H_{d,\Delta}(\cdot\mid\mathbf z_{r_k}),\qquad
  \mathbf z_{r_{k+1}}\sim
  B^H_{r_{k+1}\leftarrow r_k+\Delta}
  (\cdot\mid\widetilde{\mathbf z},\mathbf e_{\widehat x_k}).
  \label{eq:heat-restart-step}
\end{equation}
This requires no additional network evaluations. $\Delta=0$ retains the
ancestral bridge state, while increasing $\Delta$ removes memory and approaches
the memoryless refinement update. Token temperature independently controls
whether selection is stochastic ($T=1$) or approaches argmax ($T\to0$).

We also consider \textbf{confidence-gated refinement}, which keeps sufficiently
confident tokens fixed while continuing to refine the remaining positions.
This is related to confidence-based remasking and token retention in masked
diffusion, but our gate uses the clean-token posterior at the current noisy
state to decide which positions to hold fixed
\citep{sahoo2024simple,agarwal2026posteriorrefinement}.
Unless stated otherwise, the final sequence is decoded from one last posterior
evaluation,
$\widehat x^\ell
  =
  \arg\max_{v\in\mathcal V}
  q_\theta^\ell(v\mid\mathbf z,r_{\min}).$
Further sampler details and NFE accounting are given in
Appendix~\ref{app:implementation-details}. \looseness=-1

\subsection{Matched transport across isotropic kernels}
\label{sec:isotropic-transport}

The previous sections define SBD for Heat. To determine whether its behavior comes from the corruption kernel itself, however, we need to compare Heat with alternative spherical kernels under compatible inference dynamics. Using one kernel with another kernel's reverse dynamics would confound the forward process with sampler mismatch. We therefore construct matched transport for each isotropic kernel.

We build on Riemannian flow matching and the radial-transport framework for isotropic spherical paths \citep{chen2023riemannian,chemseddine2026spherical}. For sharply multimodal paths, for which direct velocity integration becomes
difficult, we introduce a conditional quantile-map update, $c
  \longmapsto
  F_{r'}^{-1}\!\left(F_r(c)\right),$
which is the exact one-dimensional monotone transport for a single
conditional kernel. Averaging the resulting spherical displacements under the
clean-token posterior gives a first-order update. Derivations and validation
are in Appendix~\ref{app:valid-sampler}.

\subsection[Heat--vMF comparison as a nearby-kernel control]{Heat\textendash vMF comparison as a nearby-kernel control}
\label{sec:heat-vmf-comparison}

To isolate the effect of the corruption kernel, we need a principled nearby alternative to Heat. vMF is a natural choice: it is used by recent spherical language models and shares the same isotropic geometry, while differing in the shape of its conditional distribution. We therefore ask whether moment-matched vMF approaches Heat in high dimension, providing a controlled comparison for testing whether local kernel similarity translates to similar learned generation.

A vMF conditional has score $\kappa(\mathbf e-c\mathbf z)$, with a radial coefficient that is constant in $c$, whereas the heat-kernel coefficient varies with the cosine. Intuitively, the most relevant regime for comparing their shapes is the critical noise scale, where the remaining alignment with
the original token becomes comparable to random fluctuations on the sphere, both of order $1/\sqrt d$ \citep{vershynin2018highdim}.
We compare the two kernels at the critical noise radius
$r_d=(\log d+\xi)/2$, where the retained cosine is $m/\sqrt d$ for fixed $\xi$ and $m=e^{-\xi/2}$. Let $t_d=(\log d+\xi)/(d-1)$ and choose the vMF concentration $\kappa_d^\star$ to match that retained cosine. Under Heat, write $C=\langle\mathbf Z,\mathbf e\rangle$ for the noisy-clean cosine.

\begin{proposition}[Critical-scale heat-vMF approximation]
\label{prop:heat-vmf-approximation}
For fixed $\xi$ as $d\to\infty$,
\begin{equation}
  \KL\big(H_{d,r_d}\Vert V_{d,\kappa_d^\star}\big)
  = \tfrac{m^4}{4}t_d^2(1+o(1)),
  \qquad
  \E_H\big[(a^H_{d,r_d}(C)/\kappa_d^\star-1)^2\big]^{1/2}
  = m\,t_d(1+o(1)).
  \label{eq:heat-vmf-kl-main}
\end{equation}
In particular, these rates are $O((\log d/d)^2)$ and $O(\log d/d)$.
\end{proposition}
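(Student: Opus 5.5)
The plan is to reduce both kernels to one-dimensional laws of the cosine $C=\langle\mathbf Z,\mathbf e\rangle$ and work in the rescaled coordinate $x=\sqrt d\,c$. Both $H_{d,r}$ and $V_{d,\kappa}$ are isotropic about $\mathbf e$. Their KL divergence therefore equals the KL divergence between their cosine laws, and the score comparison in \eqref{eq:heat-vmf-kl-main} likewise depends only on $C$.

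\textbf{Step 1: expand both densities in the same basis.} I expand each density relative to the cosine marginal of the uniform distribution, using normalized Gegenbauer polynomials. For the heat kernel the $k$-th coefficient is the eigenvalue factor $e^{-k(k+d-2)\tau/2}$ at intrinsic time $\tau=t_d$. In the clock of \eqref{eq:spherical-brownian-background} this equals $\rho^k e^{-k(k-1)t_d/2}$, with $\rho=e^{-r_d}=m/\sqrt d$. As $d\to\infty$, with $x$ fixed and $k$ bounded, the normalized Gegenbauer functions converge to $\mathrm{He}_k(x)/\sqrt{k!}$, and the uniform cosine law converges to $N(0,1)$. This gives
\[
H(x)\approx\phi(x)\sum_k \frac{m^k e^{-k(k-1)t_d/2}}{k!}\mathrm{He}_k(x).
\]
For vMF, $e^{\kappa c}$ expanded in the same basis has coefficients $(\kappa/\sqrt d)^k/k!$ up to normalization. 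Moment matching forces $\kappa_d^\star/\sqrt d\to m$, with a correction of relative order $t_d$ that I will track. The vMF density is then the generating function $\phi(x)G_m(x)$ with $G_m(x)=e^{mx-m^2/2}$, that is, $N(m,1)$ to leading order.

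\textbf{Step 2: first-order perturbation in $t_d$.} Expanding $e^{-k(k-1)t_d/2}=1-k(k-1)t_d/2+O(k^4t_d^2)$ and using $\sum_k k(k-1)m^k\mathrm{He}_k/k!=m^2\partial_m^2G_m=m^2\big((x-m)^2-1\big)G_m$, I get
\[
\log\frac{H}{V}(x)=-\tfrac{t_d}{2}m^2\big((x-m)^2-1\big)+\mathrm{const}+\varepsilon_d(x).
\]
The constant is absorbed by normalization, and the moment-matching shift of $\kappa$ only adds a term linear in $(x-m)$ of lower order.

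\textbf{Step 3: evaluate both quantities.} For the KL, I use $\KL(H\Vert V)=\tfrac12\Var_H(\log H/V)(1+o(1))$, which holds for small log-ratio perturbations. With $x-m\approx N(0,1)$ we have $\Var((x-m)^2)=2$, so
\[
\KL(H\Vert V)=\tfrac12\cdot\tfrac{t_d^2m^4}{4}\cdot 2\,(1+o(1))=\tfrac{m^4}{4}t_d^2(1+o(1)).
\]
For the score, $a^H=\partial_c\log H=\sqrt d\,\partial_x\log H$. Differentiating the log-ratio gives $a^H=\kappa_d^\star-\sqrt d\,t_d m^2(x-m)+\dots$. Dividing by $\kappa_d^\star\approx\sqrt d\,m$ gives $a^H/\kappa_d^\star-1\approx -t_d m(x-m)$, whose root mean square under $H$ is $m\,t_d(1+o(1))$. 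The stated $O((\log d/d)^2)$ and $O(\log d/d)$ rates then follow from $t_d\sim\log d/d$.

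\textbf{Main obstacle.} The hard part is controlling the remainder $\varepsilon_d$ uniformly enough to integrate it. Three error sources must all be shown to be $o(t_d)$ in $L^2(H)$ for both the log-ratio and its $x$-derivative:
\begin{itemize}
\item the Gegenbauer-to-Hermite error, which is $O(k^2/d)$ per term;
\item the second-order term $O(k^4t_d^2)$;
\item the tails $|x|\gtrsim\sqrt{\log d}$, where the finite-$k$ asymptotics fail.
\end{itemize}
I would first truncate the series at $k\le K_d=\log d$. Since the coefficients decay like $m^k/k!$, the tail beyond $K_d$ is superexponentially small. On $|x|\le\sqrt{\log d}$ I would then use uniform Gegenbauer asymptotics with explicit $1/d$ corrections. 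Since $k^2/d\ll t_d$ only up to logarithmic factors, I expect the $1/d$ Gegenbauer correction to need explicit cancellation. It should affect $H$ and $V$ identically, because both share the uniform reference measure, so it cancels in the ratio. The remaining tail region I would handle with crude Gaussian-type bounds on both densities.
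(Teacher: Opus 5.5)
Your leading-order computation is correct and is essentially the paper's. The factorization $e^{-k(k+d-2)t_d/2}=q_d^k e^{-k(k-1)t_d/2}$, together with $D(D-1)=m^2\partial_m^2$ on the generating function ($D=q\partial_q$), produces exactly the paper's mode $m^2\operatorname{He}_2(u-m)$. The half-variance step and the differentiated ratio then give the two constants. The paper, however, never passes to the Hermite limit. It keeps the exact finite-$d$ generating function, the Poisson kernel, so that $K^H_{d,r_d}(u/\sqrt d)=\exp[-\tfrac{t_d}{2}D(D-1)]P_{q,d}(u)$ at $q=q_d$ holds identically, and then compares $P_{q_d,d}$ with vMF through explicit formulas.

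This exposes a concrete error in your obstacle paragraph: the $1/d$ corrections do \emph{not} cancel between heat and vMF. The moment-matched vMF is not the Poisson kernel. For fixed $k$, its Gegenbauer coefficients are $I_{k+\nu}(\kappa_d^\star)/I_\nu(\kappa_d^\star)=q_d^k e^{-k(k-1)/d}(1+O(d^{-2}))$ with $\nu=d/2-1$. To first order it therefore looks like heat run for intrinsic time $2/d$ from the Poisson kernel, and indeed $\log(P_{q_d,d}/K^V_{d,\kappa_d^\star})=\tfrac{m^2}{d}\operatorname{He}_2(u-m)+O((1+|u|^6)/d^2)$ on the central window. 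The true log-ratio is thus $-\tfrac{m^2}{2}(t_d-\tfrac2d)\operatorname{He}_2(U-m)+\dots$, which is the paper's $\bar t_d$, and the cancellation you plan to prove is false.

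You also do not need it. We have $2/(dt_d)\sim2/\log d\to0$. This term, like the other Gegenbauer-to-Hermite errors once the weights $m^k/k!$ are summed, is polynomial in $u$ over $d$ on the window. It is therefore $O(1/d)=o(t_d)$ in $L^2(H)$ and $L^3(H)$, because $U$ has bounded moments under heat. Only its supremum over $|u-m|\le A\sqrt{\log d}$ exceeds $t_d$, and there you need only $o(1)$. The leftover $2/d$ is what makes the numerics converge at rate $1/\log d$.

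Relatedly, moment matching gives $\kappa_d^\star=m\sqrt d+m^3/\sqrt d+O(d^{-3/2})$, a relative correction of order $1/d$, not $t_d$. A relative shift of order $t_d$ would add a constant of that order to $a^H/\kappa_d^\star-1$ and change the RMS constant, so your Step~1 bookkeeping contradicts Step~2.

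The second gap is the tail region for the score. Under heat, $\Pr_H(|U-m|>A\sqrt{\log d})$ is only polynomially small, roughly $d^{-A^2/2}$. Showing $\E_H[(a^H(C)/\kappa_d^\star-1)^2\mathbf 1_{\{|U-m|>A\sqrt{\log d}\}}]=o(t_d^2)$ therefore requires $a^H$ itself to be at most polynomially large outside the window. Gaussian-type bounds on the two densities control likelihood ratios but say nothing about a logarithmic derivative, and Hermite asymptotics are unavailable once $|U|$ is of order $\sqrt d$. The paper supplies this with two tools:
\begin{itemize}
\item the Li--Yau Harnack inequality, which gives a lower bound on $K^H$;
\item Hamilton's gradient estimate, which with that lower bound gives $|\nabla\log K_{t_d}|^2\le Cd/t_d$, hence $|\partial_u\log K^H|^2=O(t_d^{-1})$ on $|c|\le\tfrac12$.
\end{itemize}
The exponentially small mass of $|c|>\tfrac12$ then absorbs the region near the poles.

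Even for the KL, your tail bounds must include a lower bound on $K^H$ (again Li--Yau) to keep $|\log(K^H/K^V)|$ polynomial there. The step $\KL=\tfrac12\Var_H(1+o(1))$ also needs an exponentially weighted tail estimate. The paper gets this from the exact change of measure $\E_H[e^{-X_d}\mathbf 1_{G_d^c}]=e^{\KL}\Pr_V(G_d^c)$, where $X_d$ is the centered log-ratio and $G_d$ the central window. It combines this with $\KL=O(1)$, which follows from $\int(K^H)^2\,\mathrm d\mu_d\le P_{q_d^2,d}(\sqrt d)=O(1)$, so only vMF tails enter that step.
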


Moment matching cancels the leading linear discrepancy, leaving a quadratic
term that controls both rates. The measured shape residual follows the
high-dimensional trend: its maximum falls from $0.025$ at $d=128$ to $0.010$
at $d=768$, though not uniformly over noise levels. Appendix~\ref{app:heat-vmf-reduction}
gives the expansion, proof, and scope of the critical-path result.
Appendix Figure~\ref{fig:heat-vmf-critical-scaling} tests the two predicted
coefficients independently across eight dimensions and three critical-noise
paths, without fitting their common target $m=e^{-\xi/2}$. Both estimates
approach that target as $d$ grows, although the finite-dimensional discrepancy
shrinks slowly, approximately as $1/\log d$.

\begin{takeaway}
At the critical noise scale, moment-matched vMF is close to Heat in both
conditional likelihood and relative score. This gives the kernel study (Section~\ref{sec:kernel-matters}) a principled nearby-kernel comparison; the experiments test how that local
agreement carries over to learned generation. The proposition concerns the
critical path $r_d=(\log d+\xi)/2$, not the full sampling trajectory.
\end{takeaway}

\section{Which parts of SBD depend on the Brownian kernel?}
\label{sec:kernel-matters}

SBD uses the corruption kernel in three roles: as the continuous probability
path (Section~\ref{sec:kernel-path}), as the training channel whose posterior
the sampler relies on (Section~\ref{sec:kernel-mismatch}), and as the
re-noising operator for categorical refinement
(Section~\ref{sec:kernel-refinement}). We vary these roles separately, then
use the Heat bridge to separate state memory from token selection effects while sampling (Section~\ref{sec:bridge}).

\subsection{Setup}

We pick kernels at increasing distance from Heat: moment-matched vMF, which
our theory predicts is close (Section~\ref{sec:heat-vmf-comparison}); SLERP, the path used by S-FLM;
and two vMF mixtures whose bimodal shape deliberately departs from Heat. At retained cosine $m=e^{-r}$, a mixture places weights $m$ and
$1-m$ on vMF components with mean cosines $m+\lambda(1-m)$ and $(1-\lambda)m$,
so its mean cosine is $m$ while $\lambda\in\{0.5,0.9\}$ changes its shape (Equation~\ref{eq:mixvmf-kernel-new}). The main SLERP comparison truncates the path at $r_{\min}=0.942$, where its information is exhausted; Appendix~\ref{app:slerp-sudoku} also reports the full path. Because equal mean cosine need not mean equal token uncertainty, we also compare kernels on a measured information clock, and we report a local shape residual that is zero when a kernel differs from Heat only by a time change (Table~\ref{tab:shape-residual-main}, definitions in Appendix~\ref{app:mixvmf-sudoku}). Unless stated otherwise, Sudoku kernel experiments use hard puzzles, $d=128$, 20k training updates, 2,000 evaluation puzzles, and three training seeds.

\subsection{The kernel as a continuous probability path}
\label{sec:kernel-path}

We first ask whether the path itself matters when everything else is
matched: each kernel is trained and sampled with its own valid transport.
On hard Sudoku at $B=180$ (181 NFEs with posterior decoding), Heat,
moment-matched vMF, truncated SLERP, and the $.5/.9$ mixtures reach
$54.5\%$, $54.7\%$, $51.4\%$, $53.3\%$, and $55.3\%$, respectively
(Tables~\ref{tab:valid-transport-main} and~\ref{tab:slerp-full}). Increasing
the sampler budget to $B=720$ (721 NFEs) does not materially change the
Heat/vMF/mixture comparison within seed variation. The full SLERP path
reaches $56.2\pm2.2\%$ at $B=180$.

The stability depends on matching the dynamics to the kernel. For the $\lambda=.9$ mixture, inserting the mixture score into Brownian ODE dynamics with nearest-codeword decoding gives
$0.0\%$ accuracy at 180 NFEs. At the same sampler budget $B=180$, posterior
decoding alone raises this to $15.3\pm3.6\%$ at 181 NFEs, and valid transport
alone to $11.2\pm0.7\%$ at 180 NFEs; using both reaches $55.3\pm2.3\%$
at 181 NFEs (Table~\ref{tab:decoding-two-by-two}). Thus, changing the kernel requires both compatible transport
and compatible decoding rather than a direct score substitution. The full
valid-transport comparison is in Table~\ref{tab:valid-transport-main}; the
decoding-by-dynamics control is in Table~\ref{tab:decoding-two-by-two}, with
transport checks in Appendix~\ref{app:valid-sampler}.

\subsection{Posterior mismatch compounds on sequence generation}
\label{sec:kernel-mismatch}

\begin{wraptable}[9]{r}{0.22\linewidth}
  \vspace{-1em}
  \caption{Shape-residual ranges.}
  \label{tab:shape-residual-main}
  \centering
  \scriptsize
  \setlength{\tabcolsep}{1.5pt}
  \renewcommand{\arraystretch}{0.9}
  \begin{tabular}{@{}lr@{}}
    \toprule
    Kernel & Residual \\
    \midrule
    \multicolumn{2}{@{}l}{\emph{$d=128$}} \\
    Heat       & $\le .001$ \\
    moment vMF & $.001$--$.025$ \\
    Mix $.5$   & $2.0$--$16$ \\
    Mix $.9$   & $9.2$--$98$ \\
    \multicolumn{2}{@{}l}{\emph{$d=768$}} \\
    info-vMF   & $.000$--$.010$ \\
    \bottomrule
  \end{tabular}
  \vspace{-0.5\baselineskip}
\end{wraptable}

Section~\ref{sec:kernel-path} matched training and sampling. In practice they
can be decoupled: a trained model may be sampled with another kernel or
schedule. We therefore ask how much a posterior trained with one kernel
loses on states from another, and whether the loss comes from kernel shape
or only its noise level. To separate the two, we optionally relabel the
network input using the information clock.

\textbf{Small shape differences are a clock effect.} On GSM8K,
moment-matched vMF changes q-argmax accuracy by only $-0.9\pm0.4$ points at
65 NFEs ($B=64$ plus posterior decoding). An information-matched vMF loses $3.8\pm0.4$ points, but relabelling
by mean cosine reduces the gap to $0.5\pm0.7$
(Table~\ref{tab:gsm8k-kernel-cross-new}). Both kernels have small shape
residuals (Table~\ref{tab:shape-residual-main}), so the difference comes from
the noise parameterization, not the shape.

\textbf{Large shape differences are not.} On hard Sudoku, mismatch cost
follows the shape residual. Relabelling reduces the $\lambda=.9$ loss from
$14.6\pm2.9$ to $9.6\pm0.8$ points but does not help $\lambda=.5$
($4.4\pm0.6\to4.8\pm0.3$; Figure~\ref{fig:mismatch-shape-residual},
Table~\ref{tab:kernel-mismatch-cost}). On GSM8K, mismatch is severe:
Heat-trained weights sampled with either mixture reach at most 0.5\%
accuracy, and relabelling lifts $\lambda=.5$ only from 0.03\% to 8.7\%
(Table~\ref{tab:gsm8k-kernel-cross-new}). The cost grows with sequence
length. The mixture-to-Heat success ratio drops from 0.47 for programs of
mean length 96 to 0.03 at length 189, with correlation $-0.96$
(Figure~\ref{fig:gsm8k-consistency}, right;
Table~\ref{tab:gsm8k-length-new}). This is consistent with early token errors
corrupting the context of later tokens. \looseness=-1

\textbf{Matched training restores accuracy.} We fine-tune the same Heat
checkpoint for 25k steps on Heat and on the $\lambda=.5$ mixture, and sample
each with its own kernel. At 257 NFEs ($B=256$ plus posterior decoding),
the mixture model reaches $26.23\pm1.30\%$, close to the Heat control
($27.32\pm0.52\%$) and far above the
mismatched mixture ($0.03\%$; Figure~\ref{fig:gsm8k-consistency}, left;
Table~\ref{tab:gsm8k-finetune-new}). Each arm uses one training seed; error
bars are over three inference seeds.

\begin{figure}[h]
  \centering

  \includegraphics[width=\linewidth]{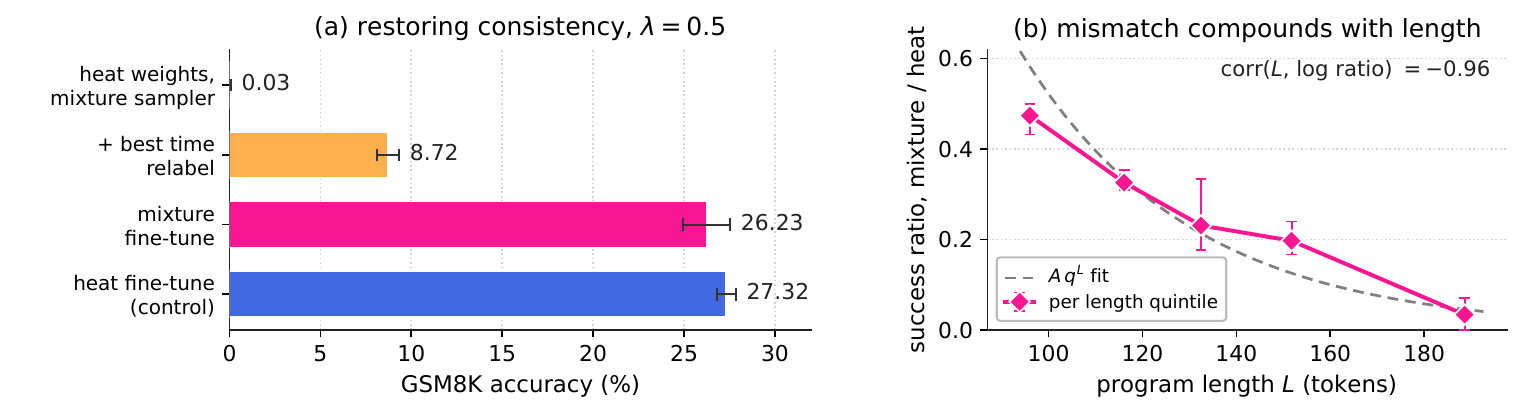}

\caption{GSM8K consistency restoration at $B=256$ (257 NFEs, left) and
mismatch by program length (right), both with posterior decoding. Error bars
show three inference seeds; fine-tunes use one training seed per arm.}
  \label{fig:gsm8k-consistency}

\end{figure}

\subsection{Kernel shape affects categorical refinement}
\label{sec:kernel-refinement}

Matched training removes the mismatch cost, but does the kernel become irrelevant? Not for categorical refinement, where it determines how selected tokens are re-noised. On hard Sudoku, matched mixture training and
sampling improve q-argmax by $6.0\pm1.7$ points for $\lambda=.5$ and
$5.2\pm2.5$ points for $\lambda=.9$. Confidence-gated refinement behaves
differently: the $\lambda=.5$ mixture reduces accuracy by $5.8\pm1.8$ points.
These effects use five independent training seeds per kernel at $B=180$
(181 NFEs with posterior decoding).
In contrast, Heat and moment-matched vMF remain within roughly one point under
tuned gating across the tested budgets. Full kernel results are given in
Appendix~\ref{app:mixvmf-sudoku}, where the q-argmax sweep and crossed
training/re-noising comparison are reported. Table~\ref{tab:five-seed-gated-kernel-effect}
gives the seed-level gating effects, and Table~\ref{tab:heat-vmf-training-seeds}
shows the matched Heat--vMF comparison across inference rules.\looseness=-1

The mixture-specific effect does not transfer directly to GSM8K. In the
single-training-seed text fine-tune, the matched $\lambda=.5$ model differs
from Heat by $-1.09\pm1.80$ points with EMA and posterior decoding at
257 NFEs (paired over three inference seeds). Separately, the S-FLM transfer result
shows that categorical refinement can carry across a different spherical
forward process. Kernel shape therefore has no task-independent ordering:
its effect depends on how the kernel is used and on the inference procedure.

\subsection{Ancestral bridges separate memory from token selection}
\label{sec:bridge}

\textbf{Why separate the two.} Changing the kernel moves accuracy by only a few points once training and sampling match. With Heat fixed, four inference rules span about 25 points on hard Sudoku. However, q-argmax changes both state memory, through fresh re-noising, and token selection, by replacing posterior sampling with argmax. Either could explain its advantage, and they call for different fixes: decorrelating successive network inputs, or lowering the decoding temperature. Because an exact posterior would make ancestral sampling exact, any gain from departing from it also measures how errors of the learned posterior compound along a
trajectory. \looseness=-1

\textbf{How we vary them.} The Heat bridge (Equation~\ref{eq:heat-ancestral-bridge}) has two parameters. The restart size $\Delta$ adds extra noise before bridging back: $\Delta=0$ is exact ancestral sampling, which keeps the full state, and $\Delta\to\infty$ forgets it, as q-argmax does. The temperature $T$ moves token choice from sampling ($T=1$) to argmax. Neither parameter adds network calls.

\textbf{Memory helps on both tasks; argmax helps on GSM8K.}
Removing memory gains $17.7\pm1.4$ points on Sudoku at 129 NFEs
($B=128$ plus posterior decoding). In contrast, memoryless $T=1$ sampling
is slightly better than argmax there, by $0.5$--$1.4$ points across the
reported budgets (Table~\ref{tab:memory-temperature-effects}). On GSM8K,
removing memory gains $8.1\pm0.6$ points at $B=64$, while argmax selection
adds $11.4\pm1.0$ points at $B=16$ but only $4.7\pm2.0$ at $B=256$.

\begin{wraptable}[9]{r}{0.35\linewidth}
  \vspace{-2em}
  \centering
  \captionsetup{font=footnotesize}
  \caption{Paired q-argmax gains from removing state memory or using argmax token selection}
  \label{tab:memory-temperature-effects}
  \scriptsize
  \setlength{\tabcolsep}{3pt}
  \begin{tabular}{@{}lrr@{}}
    \toprule
    Task ($B$) & Memory & Token \\
    \midrule
    Sudoku (16)  & $+6.4\pm1.0$  & $-1.3\pm1.0$ \\
    Sudoku (180) & $+18.4\pm1.8$ & $-1.4\pm1.0$ \\
    GSM8K (16)   & $+3.6\pm0.3$  & $+11.4\pm1.0$ \\
    GSM8K (256)  & $+9.5\pm1.4$  & $+4.7\pm2.0$ \\
    \bottomrule
  \end{tabular}
  \vspace{-0.7em}
\end{wraptable}

\textbf{The memory cost grows with budget.} As the grid is refined, the
bridge deficit grows from 6.4 to 18.4 points on Sudoku and from 3.6 to 9.5 on
GSM8K (Table~\ref{tab:bridge-budget-scaling}). The channel differs by task:
redrawing the noise alone recovers most of the Sudoku gap but worsens GSM8K,
where retaining old token commitments matters more
(Tables~\ref{tab:bridge-fresh} and~\ref{tab:gsm8k-bridge-fresh}). This
refines FMLM+'s observation that fresh re-noising avoids repeated rejected
predictions \citep{agarwal2026posteriorrefinement}.

\textbf{Why Heat.} The diagnostic needs a time-homogeneous Markov diffusion
with a tractable semigroup, so that adding $\Delta$ noise and bridging back
preserves the Heat marginals. Among the paths studied here, SLERP gives only
a deterministic bridge, and vMF marginals give no tractable stochastic one.
The spherical Poisson kernel also has a semigroup, but it is a jump process
rather than a diffusion.

\begin{takeaway}
Kernel choice has no sampler-independent ranking. With compatible continuous
transport, all five kernels perform similarly on Sudoku, but a mismatched
posterior is costly, especially on long GSM8K programs. In categorical
refinement, a mixture can help q-argmax through re-noising yet hurt
confidence gating through training. Across both tasks, retaining state
memory limits refinement, and its cost grows with budget.
\end{takeaway}

\section{Benchmarks}
\label{sec:experiments}

\begin{wraptable}[10]{r}{0.48\linewidth}
  \vspace{-2em}
  \centering
  \captionsetup{font=footnotesize}
    \caption{Sudoku whole-grid accuracy (\%) at $B=180$ sampler calls
  (gated SBD: 181 NFEs).
  FMLM+ and SBD: mean $\pm$ std.\ over five sampler seeds.}
  \label{tab:sudoku-main-180}
  \scriptsize
  \setlength{\tabcolsep}{2pt}
  \begin{tabular}{@{}lrrr@{}}
    \toprule
    Model & Easy & Medium & Hard \\
    \midrule
    AR (greedy) & 14.6 & 5.1 & 1.0 \\
    MDLM & 92.0 & 77.1 & 30.2 \\
    Duo & 96.3 & 84.7 & 58.4 \\
    CANDI & 79.3 & 45.9 & 16.7 \\
    FLM & 94.2 & 82.7 & 44.5 \\
    S-FLM & 94.8 & 85.2 & 45.0 \\
    FMLM+ PR & $97.77\pm.13$ & $92.99\pm.33$ & $\underline{81.65\pm.36}$ \\
    \rowcolor{ourrowpink}
    Ours: q-argmax & $\underline{98.69\pm.07}$ & $\underline{94.57\pm.21}$ & $79.70\pm.75$ \\
    \rowcolor{ourrowpink}
    Ours: gated & $\mathbf{99.11\pm.12}$ & $\mathbf{96.66\pm.15}$ & $\mathbf{83.36\pm.42}$ \\
    \bottomrule
  \end{tabular}
  \vspace{-0.6em}
\end{wraptable}

We benchmark SBD on Sudoku, GSM8K, and OpenWebText, spanning reasoning and
open-ended generation. SBD is competitive with recent continuous models, with
the strongest method depending on the inference regime. We use categorical
refinement for reasoning and a hybrid sampler for text (Section~\ref{sec:kernel-matters}).

\subsection{Sudoku}
\label{sec:experiments:sudoku}

\textbf{Experimental setup.}
We use the S-FLM Sudoku benchmark with 48k training and 2k evaluation puzzles
per difficulty \citep{alp2024sudoku,deschenaux2026hyperspherical}. SBD uses a
frozen max-min $d=128$ codebook, with one Heat model trained per difficulty.
We evaluate each SBD sampler with five random seeds and report whole-grid
exact-match accuracy at a sampler budget of $B=180$ calls. Full budget sweeps and baseline
details are given in Appendix~\ref{app:sudoku-protocol}. 

\textbf{Results.}
Table~\ref{tab:sudoku-main-180} reports the main comparison. On hard Sudoku,
gated SBD reaches $83.36\%$, slightly above FMLM+ ($81.65\%$), and is the
strongest reported continuous/flow model at this budget. On easy and medium puzzles,
gated SBD reaches $99.11\%$ and $96.66\%$, respectively. Full budget sweeps are reported in
Appendix~\ref{app:sudoku-protocol}.
\vspace{-1ex}
\subsection{Reasoning on GSM8K}
\label{sec:experiments:gsm8k}
\vspace{-1ex}

\begin{wrapfigure}[13]{r}{0.50\linewidth}
  \vspace{-2em}
  \centering
  \includegraphics[width=\linewidth]{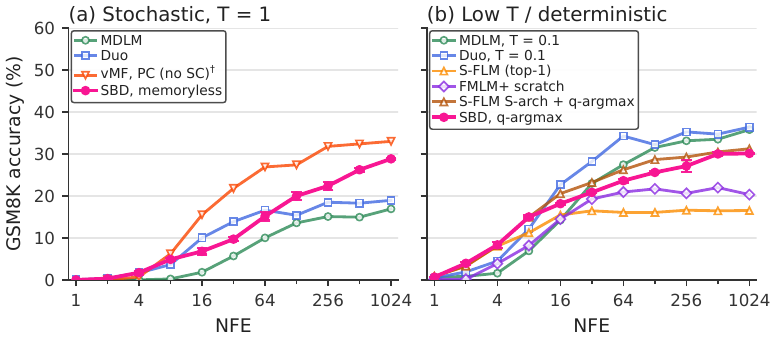}
  \vspace{-2.3em}
\caption{GSM8K accuracy vs.\ NFE. \textbf{(a)} $T=1$. \textbf{(b)}
q-argmax for SBD/S-FLM, $T=0.1$ for MDLM/Duo, endpoint argmax for FMLM+.
$\dagger$: published vMF PC. SBD shows mean $\pm$ std.\ over three seeds;
baselines are single runs.}
  \label{fig:gsm8k-comparison}
\end{wrapfigure}

\textbf{Experimental setup.}
Following prior work, we train on TinyGSM and evaluate on the 1,319 GSM8K test
problems \citep{liu2023tinygsm,cobbe2021trainingverifierssolvemath}. The model
generates one Python program per problem, which we execute to obtain the final
answer. SBD uses a learned $d=768$ codebook and is trained for 250k updates.
We compare against recent discrete and continuous non-autoregressive models
trained from scratch, together with the reported vMF predictor--corrector
baseline without self-conditioning. Teacher-assisted and self-conditioned
variants are reported separately in Appendix~\ref{app:gsm8k-protocol}.
The controlled decoding curves appear in Figure~\ref{fig:gsm8k-comparison};
the full-budget comparison and complete NFE ladder are in Appendix
Tables~\ref{tab:gsm8k-summary} and~\ref{tab:gsm8k-results}.

\textbf{Results.}
At matched $T=1$, SBD beats MDLM at each budget and Duo from 128 NFEs;
the vMF predictor--corrector is stronger, but uses a different training and
sampling recipe. Figure~\ref{fig:gsm8k-comparison} shows both decoding regimes.
Under low-temperature or deterministic decoding, SBD q-argmax leads only
through 8 NFEs; Duo overtakes by 16 and MDLM by 32. At 1,024 NFEs, SBD trails
Duo by about 6 points, narrowing the 20-point Duo--S-FLM top-1 gap. Applying
q-argmax to S-FLM S-arch reaches 31.2\% versus 30.1\% for SBD at 1,024 NFEs
(one evaluation seed); this is transfer evidence, not a controlled kernel
comparison, since schedule, truncation, embedding handling, and training differ.
Full baseline provenance is in Appendix~\ref{app:gsm8k-protocol}.

\subsection{Unconditional OpenWebText generation}
\label{sec:experiments:owt}

\begin{wrapfigure}[14]{r}{0.45\linewidth}
  \vspace{-2.2em}
  \centering
  \includegraphics[width=\linewidth]{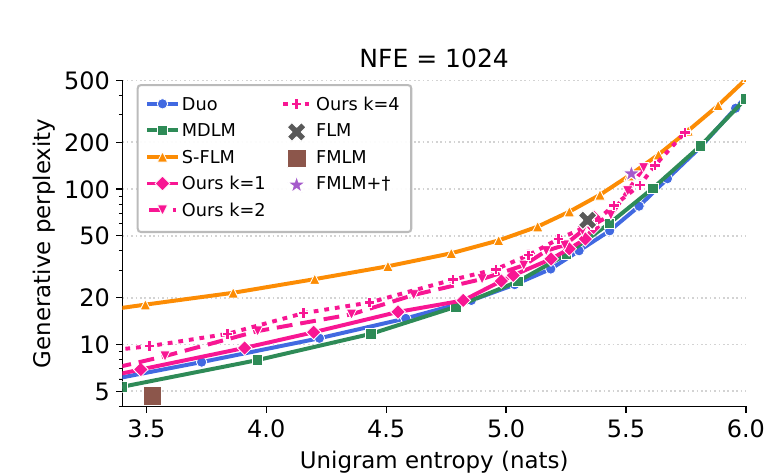}
  \vspace{-1.5em}
  \caption{OpenWebText frontier at 1,024 NFEs. Lower generative perplexity
  is better at matched unigram entropy.}
  \label{fig:owt-generative-frontier}
\end{wrapfigure}

\textbf{Experimental setup.}
We train SBD on OpenWebText with a learned $d=768$ codebook for 1M updates
and generate length-1,024 sequences
\citep{Gokaslan2019OpenWeb,radford2019language}. Following prior work, we
evaluate sample quality with GPT-2-Large generative perplexity and diversity
with unigram entropy, sweeping temperature across NFE budgets.
For open-ended generation, SBD uses a hybrid sampler combining stochastic
score updates followed by categorical refinement. The schedule and
sampler settings are detailed in Appendix~\ref{app:owt-protocol}; the
temperature-matched $k$ ablation is in Table~\ref{tab:owt-k-matched-temperature},
and all NFE frontiers in Figure~\ref{fig:owt-all-budgets}.\looseness=-1

\textbf{Results.}
At 1,024 NFEs (Figure~\ref{fig:owt-generative-frontier}), SBD improves over
S-FLM, while MDLM and Duo retain a better quality--diversity trade-off over
much of the range. Less frequent categorical refinement raises diversity but
lowers quality, so stochastic updates remain useful for open-ended generation.

\section{Related Work}
\label{sec:related-work}

Language diffusion spans discrete models such as MDLM and Duo, and continuous
embedding or simplex approaches including Diffusion-LM, CDCD, RePlaid,
Dirichlet Flow Matching, and LangFlow
\citep{sahoo2024simple,sahoo2025diffusionduality,li2022diffusionlmimprovescontrollabletext,
dieleman2022continuous,yang2026replaid,stark2024dirichlet,chen2026langflow};
CDCD's time warping motivates our information clock. Geometric approaches
include Riemannian heat-kernel and Fisher--Rao models, S-FLM, vMF spherical
flows, and RDLM \citep{bortoli2022riemannian,huang2022riemannian,davis2024fisher,
cheng2024categorical,deschenaux2026hyperspherical,
chemseddine2026spherical,jo2025continuous}. Our conditional quantile map
extends spherical flow matching to multimodal cosine laws
\citep{chemseddine2026spherical}, while q-argmax and confidence gating relate
to posterior refinement and token retention
\citep{li2022diffusionlmimprovescontrollabletext,
agarwal2026posteriorrefinement}. Restart and revision methods likewise modify
sampling dynamics or state retention
\citep{xu2023restart,yu2026revisedontfreezesamplermatched}. Our Heat bridge
gives state-memory interventions a semigroup-based construction; fuller
positioning is in Appendix~\ref{app:add-related-work}.

\section{Conclusion}

We use spherical Brownian motion both as a language model and as a tool for
studying continuous language generation. Four lessons emerge.
(i)~The isotropic kernel matters little when training and sampling are matched;
mismatch is severe but largely repaired by matched fine-tuning.
(ii)~Inference matters more: with Heat fixed, samplers span 25 points, and
posterior-based token refinement outperforms continuous integration, with gains
that transfer to S-FLM.
(iii)~Refinement benefits from fresh states: retaining trajectory memory
preserves noise and token commitments, with a growing cost at finer step
grids.
(iv)~The best sampler is task-dependent: token commitment helps reasoning,
whereas open-ended text benefits from stochastic updates that preserve
diversity. Progress may therefore depend less on new forward processes than
on inference rules that decorrelate successive states and commit to tokens at
the right time.

\subsection*{AI use statement}

In this work, we used generative AI tools to assist with manuscript editing, software code and figure preparation, and to independently check mathematical derivations and proofs.

Generative AI was not used to generate synthetic data sets, help develop theoretical models or conceptual frameworks, formulate mathematical claims, provide critical ingredients for proving mathematical claims, assist in the writing of proofs, propose or refine hypotheses, design or provide feedback on research methodology or experiments, implement methods, or interpret results. The following tasks were not applicable to this work: assist with translation; clean and reformat dataset; support qualitative and thematic data analysis.

All AI-assisted text, code, figures, and mathematical checks were reviewed and verified by the authors. In particular, the authors independently checked the final mathematical derivations and proofs and reviewed and tested the resulting code for correctness. We take responsibility for the final content of this work, including all text, claims, code, figures, and other artifacts produced with the aid of generative AI.

\subsection*{Ethics statement}

This work studies a generative modeling method using Sudoku, synthetic mathematical problems, and OpenWebText. Models trained on web text can
inherit biases and produce inaccurate or harmful content. Generated programs should be executed in a restricted environment. These experiments do not establish that the models are safe for deployment.

\subsection*{Reproducibility statement}

We report benchmark-specific model architectures, training budgets, corruption
schedules, sampling rules, NFE accounting, and evaluation protocols. The
appendices document data splits, checkpoint-selection procedures, seed counts
and baseline provenance; where we report variability, we specify whether it
is over training or sampler seeds.

\bibliographystyle{plainnat}
\bibliography{iclr2027_conference}

\clearpage
\appendix
\let\addcontentsline\appendixaddcontentsline
\setcounter{tocdepth}{2}
\renewcommand{\contentsname}{Appendix contents}
\begingroup
\hypersetup{hidelinks}
\tableofcontents
\endgroup
\clearpage

\section{Limitations}

Our experiments focus on a set of controlled settings designed to isolate the effects of corruption geometry and inference. The kernel study uses hard Sudoku with a fixed \(d=128\) codebook and 20k training updates, providing a focused setting for studying categorical shape effects. The text mixture experiment uses a single-seed 25k-step fine-tuning run rather than from-scratch training. Benchmark error bars reflect sampler variation on fixed checkpoints, while training-seed variation is reported for selected kernel experiments. The crossed evaluations further suggest that the observed gating behavior is encoded in the mixture-trained weights, although a more detailed study of its calibration remains open.

The theoretical analysis focuses on the critical-noise asymptotic regime. While the conditional cosine quantile map is exact, converting it to a spherical displacement and averaging under the posterior introduces a first-order approximation. At larger GSM8K budgets, low-temperature MDLM and Duo remain strong baselines, while SBD shows smaller gains from deterministic token selection. Our restart-size (\(\Delta\)) analyses also target representative settings: the GSM8K sweep uses a fixed NFE budget, and the Sudoku sweep is performed on the tuning split.

\section{Additional related work}
\label{app:add-related-work}

\paragraph{Geometric and continuous models.}
Several works extend diffusion and flow-based generative modeling to non-Euclidean spaces.
Riemannian score models use heat-kernel corruption on manifolds
\citep{bortoli2022riemannian,huang2022riemannian}. RDLM is a continuous
language diffusion model on a hyperspherical statistical manifold and uses
Brownian motion within logarithm-bridge processes
\citep{jo2025continuous}. In contrast, SBD directly uses the spherical heat
semigroup as the forward corruption process for token embeddings.

Spherical flows are especially close to our setting. They use vMF paths and
derive the corresponding dynamics through a radial continuity equation and a
one-dimensional transport construction \citep{chemseddine2026spherical}.
Our quantile-map construction follows the same radial view, but applies it to
more general isotropic spherical kernels, including the bimodal mixtures used
in our experiments. Related ideas also appear in work on learned
one-dimensional processes, flow-map matching, and consistency models
\citep{chemseddine2025adapting,boffi2024flowmap,song2023consistency}.

Other continuous language models study different ways of defining the noise
process or the representation space. CDCD uses time warping to distribute
training effort across noise levels, while LangFlow studies
information-uniform schedules and an ODE-based NLL bound
\citep{dieleman2022continuous,chen2026langflow}. Continuous and flow-based
language models also include TESS, TESS 2, self-conditioned embedding
diffusion, latent language diffusion, Categorical Flow Maps, Discrete Flow
Maps, and ELF
\citep{mahabadi2023tess,tae2025tess2,strudel2022selfconditioned,
lovelace2022latent,roos2026categorical,potaptchik2026discrete,hu2026elf}.
Fisher Flow Matching and Statistical Flow Matching instead use Fisher--Rao
geometry for categorical distributions, while SSD-LM and Dirichlet Flow
Matching use simplex- or Dirichlet-valued representations
\citep{davis2024fisher,cheng2024categorical,han2022ssdlm,stark2024dirichlet}.
Our backbone follows DiT and the text-diffusion architecture used by SEDD.
Analog Bits is an earlier example of self-conditioning for diffusion over
discrete data \citep{peebles2023dit,lou2024sedd,chen2022analogbits}.

\paragraph{Discrete refinement and memory.}
SBD is also related to discrete diffusion and iterative token-refinement
methods. These include D3PM, continuous-time discrete denoising, SEDD, MD4,
MDLM, and scaled masked language models
\citep{austin2021structured,campbell2022continuous,lou2024sedd,
shi2024md4,sahoo2024simple,nie2025scaling}. Partition Generative Modeling is
a related any-order approach
\citep{deschenaux2025partitiongenerativemodelingmasked}. Confidence-based
token ordering has also been explored in MaskGIT, reparameterized discrete
diffusion, and adaptive token-ordering methods
\citep{chang2022maskgit,zheng2023reparameterized,ye2024beyond,
kim2025trainworst}. For Sudoku, recent work studies self-correction and
entropy-bounded unmasking
\citep{kim2025selfcorrection,benhamu2025entropy}.

Several methods repeatedly move between continuous or noisy states and token
predictions. Diffusion-LM projects embedding trajectories back toward token
embeddings, FMLM+ accepts endpoints produced by learned flow maps, and ReMDM alternates between token predictions and remasking. Restart and revision methods similarly modify how information from
earlier sampler states is retained
\citep{li2022diffusionlmimprovescontrollabletext,
agarwal2026posteriorrefinement,sahoo2024simple,xu2023restart,
wang2025remasking,yu2026revisedontfreezesamplermatched}. Our Heat bridge is
useful for studying this effect because it lets us vary state memory while
keeping the token-selection rule fixed.

This is different from carrying denoiser features across sampling steps, as
in Loopholing \citep{jo2025loopholing}. Fresh-noise interventions are also
related to stochastic churn in EDM \citep{karras2022edm}. FMLM+ reports that
reusing the initial noise when re-noising cells lowers GSM8K accuracy from
$19.0\%$ to $14.4\%$ at 32 NFEs
\citep{agarwal2026posteriorrefinement}. Our bridge experiments separate this
noise-persistence effect from persistence of the previously selected token.

Finally, sampler comparisons depend strongly on the evaluation regime.
Generative perplexity can reward low-diversity samples: time-agnostic
analyses of masked models trace misleadingly low generative perplexity to
an effective temperature reduction caused by inaccurate categorical sampling
\citep{zheng2024timeagnostic}, and generative frontiers make the trade-off
explicit by treating generative perplexity and entropy as two components of
a divergence to a reference distribution
\citep{pynadath2026generativefrontiersevaluationmatters}. Perplexity can
also assign low values to incorrect outputs and need not select the more
accurate model \citep{velickovic2026perplexity}. Verifiable tasks such as
Sudoku and GSM8K sidestep both issues by providing direct correctness
signals, complementing entropy-aware evaluation of unconditional text. The
Diffusion Duality series similarly studies the interaction between training
objectives and inference procedures for uniform-state diffusion, including
few-step generation and predictor--corrector sampling
\citep{sahoo2025diffusionduality,deschenaux2026dualitytwo}. These works
motivate our emphasis on treating the forward process and the inference rule
as separate design choices rather than attributing generation behavior to
either component in isolation.

\section{Poisson-kernel reduction for the heat-vMF comparison}
\label{app:heat-vmf-reduction}

This appendix gives the analytic reduction behind
Proposition~\ref{prop:heat-vmf-approximation}.  Throughout, we fix
$\xi\in\mathbb R$ and write
\begin{equation}
  t_d=\frac{\log d+\xi}{d-1},
  \qquad
  r_d=\frac{d-1}{2}t_d,
  \qquad
  m=e^{-\xi/2},
  \qquad
  U=\sqrt d\,C,
  \qquad
  \bar t_d=t_d-\frac{2}{d}.
  \label{eq:appendix-critical-notation}
\end{equation}
Thus $q_d=e^{-r_d}=m/\sqrt d$ and $\operatorname{He}_2(x)=x^2-1$.
All constants below may depend on the
fixed operating point $\xi$, but not on $d$.  Densities are taken relative
to the uniform measure on $\mathbb S^{d-1}$, or equivalently to the uniform
coordinate law of $U$.

The proof has two comparison steps:
$\text{heat}\to\text{Poisson}\to\text{moment-matched vMF}$.
Both leave the same quadratic Hermite mode, so their coefficients can be
added directly. We then obtain KL from the variance of the centered
log-likelihood ratio. Appendix~\ref{app:heat-vmf-technical} gives the
differentiated expansions and tail estimates used in these steps.

\subsection{Heat to Poisson}

Let $P_{n,d}$ be the normalized zonal Gegenbauer polynomial of degree $n$ and
$h_{n,d}$ its multiplicity.  For $D=q\partial_q$, the spherical Poisson
kernel is
\begin{equation}
  P_{q,d}(u)
  =\frac{1-q^2}
  {(1-2qu/\sqrt d+q^2)^{d/2}}
  =\sum_{n\geq0}h_{n,d}q^nP_{n,d}(u/\sqrt d).
  \label{eq:appendix-poisson-kernel}
\end{equation}
The heat coefficient at intrinsic time $t_d$ factors exactly as
\begin{equation}
  \exp\!\left[-\frac12n(n+d-2)t_d\right]
  =q_d^n\exp\!\left[-\frac12n(n-1)t_d\right].
  \label{eq:appendix-heat-coefficient-factorization}
\end{equation}
Consequently,
\begin{equation}
  K^H_{d,r_d}(u/\sqrt d)
  =\left.
   \exp\!\left[-\frac{t_d}{2}D(D-1)\right]P_{q,d}(u)
   \right|_{q=q_d}.
  \label{eq:appendix-exact-poisson-bridge}
\end{equation}
Equation~\ref{eq:appendix-exact-poisson-bridge} is an identity of the
convergent zonal expansions; no asymptotic approximation has yet been made.
\begin{lemma}[Central $C^1$ heat-Poisson estimate]
\label{lem:heat-poisson-central}
Fix $\xi\in\mathbb R$.  There are remainders $R_d^H$ and $S_d^H$ such that,
under the heat law of $U$,
\begin{align}
  \log\frac{K^H_{d,r_d}(U/\sqrt d)}{P_{q_d,d}(U)}
  &=-\frac{m^2t_d}{2}\operatorname{He}_2(U-m)+R_d^H(U),
  \label{eq:appendix-heat-poisson-density}\\
  \partial_u\log\frac{K^H_{d,r_d}(u/\sqrt d)}{P_{q_d,d}(u)}
  &=-m^2t_d(u-m)+S_d^H(u),
  \label{eq:appendix-heat-poisson-score}
\end{align}
with
\begin{align}
  \|R_d^H-\E_HR_d^H\|_{L^2(H)}
  +\|R_d^H-\E_HR_d^H\|_{L^3(H)}
  &=O(t_d^2+t_d/d),
  \label{eq:appendix-heat-poisson-density-bound}\\
  \|S_d^H\|_{L^2(H)}
  &=O(t_d^2+t_d/d).
  \label{eq:appendix-heat-poisson-score-bound}
\end{align}
The corresponding uniform $C^1$ estimate on a growing central window is
given in Equation~\ref{eq:appendix-heat-poisson-central-bound}.
\end{lemma}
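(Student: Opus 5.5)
The plan is to work directly from the exact operator identity \eqref{eq:appendix-exact-poisson-bridge}. We write $K^H_{d,r_d}=e^{-\frac{t_d}{2}D(D-1)}P_{q,d}\big|_{q=q_d}$ and expand the semigroup to first order in $t_d$:
\[
K^H = P - \tfrac{t_d}{2}D(D-1)P + \mathcal R_2,
\]
with the second-order remainder controlled by $\tfrac{t_d^2}{8}\sup_{s\le t_d}\|(D(D-1))^2e^{-sD(D-1)/2}P\|$. Since $D(D-1)$ is diagonal on the zonal modes, with eigenvalue $n(n-1)\ge0$, the semigroup is a contraction on every mode. This means the remainder can be estimated mode by mode, and no positivity issue arises at this stage.

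The next step is to compute $D(D-1)P_{q,d}/P_{q,d}$ explicitly from the closed form \eqref{eq:appendix-poisson-kernel}. Put $g=\log P=\log(1-q^2)-\tfrac d2\log(1-2qu/\sqrt d+q^2)$. Then
\[
D(D-1)P/P=(Dg)^2+D^2g-Dg .
\]
At $q=q_d=m/\sqrt d$ we have $Dg = qu\sqrt d+O(q^2 d)$-type terms. More precisely, $Dg=mu-m^2+O((1+u^2)/d)$ and $D^2g=mu-2m^2+O(\cdot/d)$. This gives
\[
D(D-1)P/P=(mu-m^2)^2-m^2+O((1+|u|^3)/d)=m^2\big((u-m)^2-1\big)+\dots=m^2\operatorname{He}_2(u-m)+O((1+|u|^3)/d).
\]
Taking logarithms, $\log(K^H/P)=\log\big(1-\tfrac{t_d}{2}m^2\operatorname{He}_2(u-m)+\dots\big)$, which produces the claimed leading term \eqref{eq:appendix-heat-poisson-density}. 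The correction $O(t_d/d)$ comes from the $1/d$ terms in the Poisson expansion, and the $O(t_d^2)$ correction comes both from squaring the first-order term inside the logarithm and from $\mathcal R_2$. Differentiating in $u$ by the same route gives \eqref{eq:appendix-heat-poisson-score}: $\partial_u$ commutes with $D$, and $\partial_u\operatorname{He}_2(u-m)=2(u-m)$, which yields $-m^2t_d(u-m)$.

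These pointwise expansions hold only on a central window $|u|\le A_d$, say with $A_d=C\sqrt{\log d}$. On this window all polynomial factors are $\operatorname{polylog}(d)$, and the first-order term $t_d m^2\operatorname{He}_2$ is $o(1)$, so the logarithm expansion is legitimate. That gives a uniform $C^1$ bound with remainder $O((t_d^2+t_d/d)\,\mathrm{poly}(1+|u|))$. To pass to the $L^2(H)$ and $L^3(H)$ bounds we need moments of $U$ under the heat law. Here $U$ has uniformly sub-Gaussian tails, since the heat law is close to $N(m,1)$, so polynomial weights integrate to $O(1)$ and the window-complement has probability $d^{-c}$ with $c$ as large as we like. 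Centering by $\E_H R_d^H$ absorbs the additive constants coming from normalization, which is why the centered norm is the natural one.

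The main obstacle is the tail region $|u|>A_d$, including $|u|$ up to $\sqrt d$, where neither the Poisson expansion nor the small-$t_d$ logarithm expansion is valid. There $\log(K^H/P)$ must be bounded crudely. My first attempt would be a two-sided bound $K^H/P\in[e^{-C t_d d},e^{C t_d d}]$ from the mode-wise contraction together with positivity of both kernels. This gives $|R^H|\le C\log d\cdot \mathrm{poly}(|u|)$ for the density remainder. For the score remainder, I would use the explicit bounded form of the Poisson score on the sphere, together with a heat-kernel gradient estimate of Li–Yau type, $|\partial_u\log K^H|\lesssim\sqrt d(1+\ldots)$. Multiplied by the $d^{-c}$ tail probability, these contributions are negligible against $t_d^2\asymp(\log d/d)^2$ once $c>3$. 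The delicate point is making this bound uniform for the score near the antipode $u\approx-\sqrt d$. If Li–Yau is too lossy there, I would instead use the sub-Gaussian decay of the heat law itself to make that region's weight superpolynomially small.
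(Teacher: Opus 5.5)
Your formal skeleton matches the paper's. You expand $e^{-\frac{t_d}{2}D(D-1)}$ to first order on the explicit Poisson kernel, compute $B_d=D(D-1)P/P=(D\log P)^2+D^2\log P-D\log P=m^2\operatorname{He}_2(u-m)+O(\mathrm{poly}(u)/d)$ (this computation is correct), take logarithms on $|u-m|\le A\sqrt{\log d}$, and treat the complement with tails.

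\textbf{The gap.} The step that carries the lemma is bounding the second-order remainder $\mathcal R_2(u)$ pointwise \emph{relative to} $P_{q_d,d}(u)$, and coefficientwise contraction does not do this. It only gives $|\mathcal R_2(u)|\le\frac{t_d^2}{8}\sum_n n^2(n-1)^2h_{n,d}q_d^n|P_{n,d}(u/\sqrt d)|$.
\begin{itemize}
\item With $|P_{n,d}|\le1$, this bound is $\frac{t_d^2}{8}[(D(D-1))^2P_{q,d}](\sqrt d)|_{q=q_d}$, of order $\mathrm{poly}(d)\,e^{m\sqrt d}$. At central $u$ you have $P_{q_d,d}(u)=e^{mu-m^2/2+o(1)}$, so the relative error is exponentially large.
\item This is a cancellation problem, and sharper bounds on individual modes cannot fix it. As $d\to\infty$, $h_{n,d}q_d^nP_{n,d}(u/\sqrt d)\to m^n\operatorname{He}_n(u)/n!$. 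For $u<0$ the absolute series is therefore at least of order $e^{m|u|}$, a factor of about $e^{2m|u|}$ above $P$.
\item Hence no mode-by-mode argument yields the polynomial weight $(1+|u|^k)$ of the central $C^1$ bound. Even the $L^p(H)$ bounds would need a uniform-in-$d$ Gegenbauer majorant, which you have not supplied.
\end{itemize}
What is needed is sign information, not positivity. The same objection applies to your tail claim $K^H/P\in[e^{-Ct_dd},e^{Ct_dd}]$: neither contraction of coefficients nor positivity of the two kernels bounds their ratio.

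\textbf{The missing idea.} Turn the spectral multiplier into an average of the closed form.
\begin{itemize}
\item With $Z\sim\mathcal N(0,1)$ and $W_d=t_d/2+i\sqrt{t_d}\,Z$, one has $\E e^{nW_d}=e^{-n(n-1)t_d/2}$. Since $q_de^{t_d/2}<1$, termwise expectation of the absolutely convergent zonal series gives the exact identity $K^H_{d,r_d}(u/\sqrt d)=\E[\Pi_d(me^{W_d},u)]$, where $\Pi_d(z,u)=P_{z/\sqrt d,d}(u)$.
\item Taylor-expand $f_{d,u}(s)=\Pi_d(me^s,u)/\Pi_d(m,u)$ to third order in $s$. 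Bound $\partial_u^j\partial_s^\ell f_{d,u}(\theta W_d)$ for $j\le1$, $\ell\le4$ by $C_A(1+|u|)^K\exp\{C_At_d(1+|u|)(1+Z^2)\}$, directly from the closed form on the window.
\item Use $\E W_d=t_d/2$, $\E W_d^2=-t_d+t_d^2/4$, $\E W_d^3=O(t_d^2)$ and $\E|W_d|^4=O(t_d^2)$.
\item Since $f''(0)-f'(0)=B_d$, this reproduces your first-order expansion. The remainder now comes out as $O(t_d^2(1+|u|)^K)$ relative to $P$ and survives one $\partial_u$.
\end{itemize}

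\textbf{The complement.} A polynomial-in-$d$ bound suffices there.
\begin{itemize}
\item For the density: $K_{t_d}(\mathbf z,\mathbf e)\le K_{t_d}(\mathbf e,\mathbf e)\le P_{q_d,d}(\sqrt d)=e^{O(\sqrt d)}$, together with the Li--Yau Harnack lower bound, gives $|\log(K^H/P)|=O(d)$.
\item For the score: Hamilton's estimate gives $|\nabla\log K|^2\le Cd/t_d$. However, $|\partial_u\log K|^2=|\nabla\log K|^2/(d(1-c^2))$ blows up at the poles. On $|c|>1/2$ you therefore need the heat-kernel upper bound integrated against the uniform coordinate law, where the relevant integral is at most $\mathrm{poly}(d)(3/4)^{(d-5)/2}$. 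A tail probability alone is not enough.
\item The sub-Gaussian tail of $U$ must itself be proved, e.g.\ from the martingale form of the coordinate SDE, whose quadratic variation is at most $d/(d-1)$. It cannot be inferred from closeness to $\mathcal N(m,1)$.
\end{itemize}
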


\begin{proof}[Proof outline]
Equation~\ref{eq:appendix-exact-poisson-bridge} isolates the small heat
perturbation of an explicit Poisson kernel. Its first-order term is
$-t_d D(D-1)P/2$, and
$D(D-1)P/P=m^2\operatorname{He}_2(u-m)+O((1+|u|^4)/d)$.
An exact complex-Gaussian representation controls the Taylor remainder
before and after one $u$ derivative. Sub-Gaussian coordinate tails and
global heat-kernel bounds then give the displayed heat-weighted norms.
The full proof is in Appendix~\ref{app:heat-poisson-proof}; this argument
does not differentiate an uncontrolled density approximation.
\end{proof}

\subsection{Poisson to moment-matched vMF}

\begin{lemma}[Moment-matched Poisson-vMF expansion]
\label{lem:poisson-vmf}
At $q_d=m/\sqrt d$, on each growing central window
$|u-m|\leq A\sqrt{\log d}$ with fixed $A$, the log-density ratio and its
derivative satisfy
\begin{align}
  \log\frac{P_{q_d,d}(u)}
  {K^V_{d,\kappa_d^\star}(u/\sqrt d)}
  &=\frac{m^2}{d}\operatorname{He}_2(u-m)
    +O\!\left(\frac{1+|u|^6}{d^2}\right),
  \label{eq:appendix-poisson-vmf-density}\\
  \partial_u\log\frac{P_{q_d,d}(u)}
  {K^V_{d,\kappa_d^\star}(u/\sqrt d)}
  &=\frac{2m^2}{d}(u-m)
    +O\!\left(\frac{1+|u|^5}{d^2}\right).
  \label{eq:appendix-poisson-vmf-score}
\end{align}
The density remainder is $O(d^{-2})$ in $L^2(H)$ and $L^3(H)$,
and the derivative remainder is $O(d^{-2})$ in $L^2(H)$.
\end{lemma}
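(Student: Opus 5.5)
The plan is to expand both log-densities explicitly in the coordinate $u=\sqrt d\,c$ and compare them term by term. This works because the Poisson kernel and the vMF density are both elementary in $u$, up to a Bessel normalizer.

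\textbf{Step 1: Poisson side.} From Equation~\ref{eq:appendix-poisson-kernel} with $q_d=m/\sqrt d$, we have $2q_du/\sqrt d=2mu/d$ and $q_d^2=m^2/d$. Write $x=(2mu-m^2)/d$. Then
\[
\log P_{q_d,d}(u)=\log(1-m^2/d)-\tfrac d2\log(1-x).
\]
On the window $|u-m|\le A\sqrt{\log d}$ we have $x=O(\sqrt{\log d}/d)$, so the series $-\tfrac d2\log(1-x)=\tfrac d2(x+x^2/2+x^3/3+\dots)$ converges uniformly. It gives
\[
mu-\tfrac{m^2}{2}+\tfrac{m^2u^2-m^3u}{d}+\tfrac{m^4}{4d}+O\!\left(\tfrac{1+|u|^3}{d^2}\right).
\]
Differentiating the exact expression $\partial_u\log P=m/(1-x)$ gives the derivative expansion directly, with a remainder of the same polynomial type.

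\textbf{Step 2: vMF side and moment matching.} Relative to the uniform measure,
\[
\log K^V_{d,\kappa}(u/\sqrt d)=\kappa u/\sqrt d-\Lambda_d(\kappa),\qquad \Lambda_d(\kappa)=\log\E_{\mathrm{unif}}e^{\kappa C}.
\]
The Poisson kernel has mean cosine exactly $q_d$, because only the degree-one Gegenbauer coefficient contributes. Moment matching therefore means $\Lambda_d'(\kappa_d^\star)=m/\sqrt d$. The cumulants of $C$ under the uniform law are $\Var C=1/d$, a fourth cumulant of order $-6/(d^2(d+2))$, and vanishing odd cumulants. Using these, I will expand
\[
\Lambda_d'(\kappa)=\kappa/d-\kappa^3\gamma_d+O(\kappa^5/d^5),\qquad \gamma_d=d^{-3}(1+O(1/d)).
\]
Inverting this yields $\kappa_d^\star/\sqrt d=m+m^3/d+O(d^{-2})$.

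The $u$-dependent part of the log ratio is then
\[
\tfrac{m^2}{d}\bigl(u^2-2mu\bigr)+O\!\left(\tfrac{1+|u|^3}{d^2}\right),
\]
which equals $\tfrac{m^2}{d}\operatorname{He}_2(u-m)$ up to an additive constant. The constant is not computed separately. Both densities integrate to one, so the constant is forced by normalization. Because the law of $U$ is approximately $N(m,1)$ up to $O(1/d)$ corrections, $\E\operatorname{He}_2(U-m)=O(1/d)$, and this pins the constant to $O(d^{-2})$. The derivative identity follows by differentiating $\kappa u/\sqrt d$ and combining with Step 1. Tracking one further order in each expansion upgrades the pointwise remainders to the stated $(1+|u|^6)/d^2$ and $(1+|u|^5)/d^2$ forms.

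\textbf{Step 3: norms under $H$.} Inside the window, the pointwise remainders have polynomial weights. Their $L^2(H)$ and $L^3(H)$ norms are $O(d^{-2})$ because $U$ has sub-Gaussian tails under the heat law, which is the same tail input used in Lemma~\ref{lem:heat-poisson-central}. Outside the window, the exact log ratio is at most polynomial in $d$, since $|u|\le\sqrt d$ and both log-densities are explicit. The heat mass of that region is $d^{-cA^2}$, so choosing $A$ large makes this contribution $o(d^{-2})$.

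\textbf{Main obstacle.} I expect the hardest part to be Step 2: obtaining the Bessel ratio $\Lambda_d'=I_{d/2}/I_{d/2-1}$ and the normalizer to relative order $d^{-2}$, uniformly for $\kappa\asymp\sqrt d$. The $d$ versus $d+2$ corrections in the uniform cumulants feed directly into the linear coefficient that must cancel to produce the centered $\operatorname{He}_2(u-m)$. My first approach is the cumulant expansion of $\Lambda_d$ using exact uniform moments, $\E C^{2k}=\tfrac{(2k-1)!!}{d(d+2)\cdots(d+2k-2)}$, with Taylor remainders controlled by the small parameter $\kappa^2/d^2=O(1/d)$. As a backup, I would use the uniform Debye expansion of the Bessel ratio.
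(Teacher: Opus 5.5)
Your proposal is correct and follows the paper's proof. It uses the same log-series expansion of $\log P_{q_d,d}$, the same exponential-tilt description of vMF with its log-partition expanded from the exact uniform even moments to get $\kappa_d^\star/\sqrt d=m+m^3/d+O(d^{-2})$, and the same passage to $L^p(H)$ norms via a polynomial-in-$d$ global bound plus the heat sub-Gaussian tail. The only deviation is the additive constant: the paper evaluates it directly from $\log(1-m^2/d)=-m^2/d+O(d^{-2})$ and $\psi_d(\lambda_d^\star)=m^2/2+3m^4/(4d)+O(d^{-2})$, both already available from your expansions, whereas fixing it by normalization would additionally need off-window tail bounds for the Poisson and vMF laws. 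Also, the $O((1+|u|^3)/d^2)$ remainder you already have implies the lemma's looser envelopes, so no further order needs to be tracked.
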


\begin{proof}
The vMF coordinate law is an exponential tilt of the uniform spherical
coordinate law. Its log-partition function satisfies
$\psi_d(\lambda)=\lambda^2/2-\lambda^4/(4d)+O(d^{-2})$.
Matching the heat mean, $\psi_d'(\lambda_d^\star)=m$, gives
$\lambda_d^\star=m+m^3/d+O(d^{-2})$.
Expanding the explicit Poisson density and this matched vMF density, then
subtracting, leaves $m^2\operatorname{He}_2(u-m)/d$.
Differentiating the same explicit formulas gives the score term.
Appendix~\ref{app:poisson-vmf-calibration} records the expansions and
the tail argument that makes their remainders valid in the stated norms.
\end{proof}

\subsection{From the two comparisons to score and KL}

\begin{proof}[Proof of Proposition~\ref{prop:heat-vmf-approximation}]

Combining Lemma~\ref{lem:heat-poisson-central} with
Lemma~\ref{lem:poisson-vmf} gives
\begin{equation}
  L_d(U)
  :=\log\frac{K^H_{d,r_d}(U/\sqrt d)}
  {K^V_{d,\kappa_d^\star}(U/\sqrt d)}
  =-\frac{m^2\bar t_d}{2}\operatorname{He}_2(U-m)+\rho_d(U),
  \label{eq:appendix-density-residual}
\end{equation}
where
\begin{equation}
  \|\rho_d-\E_H\rho_d\|_{L^2(H)}
  +\|\rho_d-\E_H\rho_d\|_{L^3(H)}
  =O(t_d^2+t_d/d+d^{-2}).
  \label{eq:appendix-density-remainder}
\end{equation}
The score expansion is obtained independently from the differentiated
estimate:
\begin{equation}
  \frac{a^H_{d,r_d}(U/\sqrt d)-\kappa_d^\star}{\sqrt d}
  =-m^2\bar t_d(U-m)+\sigma_d(U),
  \qquad
  \|\sigma_d\|_{L^2(H)}
  =O(t_d^2+t_d/d+d^{-2}).
  \label{eq:appendix-score-residual}
\end{equation}
Here we used
$\partial_u\log K(u/\sqrt d)=a(u/\sqrt d)/\sqrt d$.
Equation~\ref{eq:appendix-vmf-concentration-expansion} then gives
\begin{equation}
  \frac{a^H_{d,r_d}(U/\sqrt d)-\kappa_d^\star}
  {\kappa_d^\star}
  =-m\bar t_d(U-m)
   +O_{L^2(H)}(t_d^2+t_d/d+d^{-2}).
  \label{eq:appendix-relative-score-refined}
\end{equation}
The explicit $2/d$ term is lower order than $t_d$, but it determines the
leading finite-dimensional correction along the critical path. Absorbing
this term into the remainder gives the simpler likelihood and score
expansions in Proposition~\ref{prop:heat-vmf-approximation}.

Let $X_d=L_d-\E_HL_d$. Define the centered quadratic mode and residual by
\begin{equation}
  H_{2,d}=\operatorname{He}_2(U-m)
  -\E_H\operatorname{He}_2(U-m),
  \qquad
  \eta_d=\rho_d-\E_H\rho_d.
\end{equation}
Then
\begin{equation}
  X_d=-\frac{m^2\bar t_d}{2}H_{2,d}+\eta_d.
\end{equation}
Equations~\ref{eq:appendix-density-remainder} and
\ref{eq:appendix-hermite-variance} give
\begin{equation}
  \Var_H(X_d)
  =\frac{m^4}{2}\bar t_d^2
   +O(t_d^3+t_d^2/d).
  \label{eq:appendix-centered-variance}
\end{equation}
Because $L_d=\log(p_H/p_V)$,
\begin{equation}
  \E_H[e^{-L_d}]=1,
  \qquad
  \KL(H\Vert V)=\log\E_H[e^{-X_d}].
  \label{eq:appendix-kl-centered}
\end{equation}
The uniform Taylor expansion on
$G_d=\{|U-m|\leq A\sqrt{\log d}\}$, together with the exponentially weighted
tail control in Appendix~\ref{app:likelihood-tail-control}, therefore yields
\begin{equation}
  \KL(H_{d,r_d}\Vert V_{d,\kappa_d^\star})
  =\frac{m^4}{4}\left(t_d-\frac{2}{d}\right)^2
   +O(t_d^3+t_d^2/d).
  \label{eq:appendix-kl-error}
\end{equation}
In particular,
\begin{equation}
  \KL(H_{d,r_d}\Vert V_{d,\kappa_d^\star})
  =\frac{m^4}{4}t_d^2
   \left[1+O((\log d)^{-1})\right],
\end{equation}
which proves Equation~\ref{eq:heat-vmf-kl-main}. The moment recursion in
Appendix~\ref{app:coordinate-moments} also gives
$\E_H[(U-m)^2]=1+O(t_d+d^{-1})$. Taking the $L^2(H)$ norm in
Equation~\ref{eq:appendix-relative-score-refined} therefore gives the
RMS relative-score error $m t_d(1+o(1))$.
\end{proof}

\begin{takeaway}
Moment matching removes the leading linear mismatch. The remaining
quadratic mode gives an $O(t_d)$ relative score error and an $O(t_d^2)$
conditional KL. The score claim follows from a differentiated estimate, while the KL claim follows from a likelihood-weighted tail bound beyond the Gaussian limit.
\end{takeaway}

\subsection{Finite-dimensional refinement}

Combining Equation~\ref{eq:appendix-relative-score-refined} with the
second-moment estimate used above, and taking the fourth root in
Equation~\ref{eq:appendix-kl-error}, gives the finite-dimensional correction
\begin{align}
  \frac{\widehat m_{\rm score}}{m}
  &=1-\frac{2}{dt_d}+o((\log d)^{-1}),
  \label{eq:appendix-score-finite-d}\\
  \frac{\widehat m_{\rm KL}}{m}
  &=\sqrt{1-\frac{2}{dt_d}}+o((\log d)^{-1}),
  \label{eq:appendix-kl-finite-d}
\end{align}
where $\widehat m_{\rm score}$ is recovered from the RMS relative-score error
divided by $t_d$ (equivalently, from the Gaussian-calibrated
$\operatorname{MAD}/[t_d\sqrt{2/\pi}]$), and
$\widehat m_{\rm KL}=(4\KL/t_d^2)^{1/4}$.  Since
$dt_d=\log d+\xi+o(1)$, the predicted leading corrections are respectively
$-2/(\log d+\xi)$ and $-1/(\log d+\xi)$.  These constants follow from the
explicit $2/d$ term; the numerical sweep supports them but is not used in the
proof.

\subsection{Scope and marginal score fields}

For a codebook $\{\mathbf{e}_x\}$, the heat and vMF marginal scores can be
written as
\begin{align}
  s_H(\mathbf{z})
  &=\sum_x P_H(x\mid\mathbf{z})a_H(c_x)
    (\mathbf{e}_x-c_x\mathbf{z}), \\
  s_V(\mathbf{z})
  &=\kappa\sum_x P_V(x\mid\mathbf{z})
    (\mathbf{e}_x-c_x\mathbf{z}),
\end{align}
where $c_x=\langle\mathbf{z},\mathbf{e}_x\rangle$.  Since
$\|\mathbf{e}_x-c_x\mathbf{z}\|\leq1$,
\begin{equation}
  \|s_H(\mathbf{z})-s_V(\mathbf{z})\|
  \leq
  \E_{P_H(x\mid\mathbf{z})}|a_H(c_x)-\kappa|
  +\kappa\|P_H(\cdot\mid\mathbf{z})-P_V(\cdot\mid\mathbf{z})\|_1.
\end{equation}
Rotational symmetry reduces the expectation in the first term to the
single-channel heat law.  After averaging over the heat marginal, the KL
chain rule bounds the posterior KL by the single-channel KL.  Pinsker's
inequality and Jensen's inequality therefore give
\begin{equation}
  \frac1\kappa\E\|s_H(\mathbf{Z})-s_V(\mathbf{Z})\|
  \leq
  \E_H\left|\frac{a_H(C)}{\kappa}-1\right|
  +\sqrt{2\KL(H\Vert V_\kappa)}.
  \label{eq:appendix-marginal-score-bound}
\end{equation}
At $\kappa=\kappa_d^\star$, both terms are $O(t_d)$ under
Lemma~\ref{lem:heat-poisson-central}.

For context, Figure~\ref{fig:appendix-fixed-r-heat-vmf} shows that the
fixed-radius regime behaves differently from the critical path analyzed
above.

\begin{figure}[t]
  \centering
  \includegraphics[width=\linewidth]{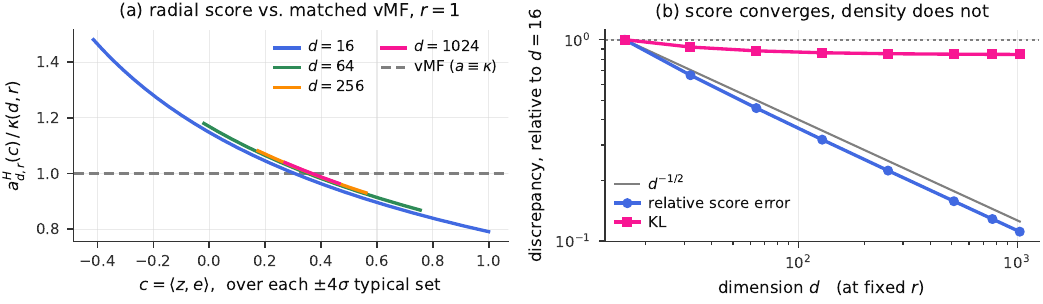}
  \caption{Fixed-noise heat-vMF diagnostic at $r=1$, which is distinct from the
  critical scaling in Proposition~\ref{prop:heat-vmf-approximation}.
  \textbf{Left:} at fixed $r$, the moment-matched vMF radial score becomes a
  better local approximation to the heat score as dimension increases.
  \textbf{Right:} both discrepancies are normalized by their values at
  $d=16$. The RMS relative radial-score error decreases approximately as
  $d^{-1/2}$, while the conditional channel KL remains nearly constant.
  This does not
  contradict Equation~\ref{eq:appendix-kl-error}: at fixed $r$ the retained
  cosine is constant, whereas the proposition follows
  $r_d=(\log d+\xi)/2$, for which the retained cosine is
  $m/\sqrt d$.}
  \label{fig:appendix-fixed-r-heat-vmf}
\end{figure}

\begin{takeaway}
The critical-scale comparison also controls the exact marginal score field
for a fixed codebook. It does not guarantee a learned posterior or better
generation quality. The fixed-$r$ diagnostic shows improving score
agreement without a similar decrease in KL; the theorem applies only to
the critical noise path.
\end{takeaway}
\paragraph{Scope of the critical-path result.}
Proposition~\ref{prop:heat-vmf-approximation} establishes the approximation at
$r_d=(\log d+\xi)/2$, corresponding to the critical regime of the sampling trajectory. At $d=128$
and $\xi=0$, $r_d\approx2.43$, near the high-noise edge of the informative
Heat/vMF band in Table~\ref{tab:kernel-information-clock}; the result is
therefore local. Up to a time change, Heat is the isotropic diffusion for
which the standard Brownian score-reversal dynamics are exact; other kernel
shapes require compatible transport.
\FloatBarrier

\section{Implementation details}
\label{app:implementation-details}

\paragraph{Why Brownian motion is the canonical isotropic diffusion.}
For completeness, consider a nondegenerate, time-homogeneous continuous
diffusion on $\mathbb S^{d-1}$, with $d\geq3$, whose law is invariant under
the action of $\mathrm{SO}(d)$. Its generator has the local form
\begin{equation}
  \mathcal L f
  = b\mathbin{\cdot}\nabla f
  +\frac12\operatorname{tr}\!\left(A\nabla^2 f\right),
  \label{eq:canonical-isotropic-generator}
\end{equation}
where $b$ is a tangent vector field and $A$ is a positive-definite symmetric
tensor on the tangent bundle. Rotation invariance forces $b$ to be an
$\mathrm{SO}(d)$-invariant vector field. At any point, invariance under the
isotropy subgroup $\mathrm{SO}(d-1)$ therefore gives $b=0$. The same isotropy
group acts irreducibly on the tangent space, so $A$ must be a scalar multiple
of the Riemannian metric. Transitivity of $\mathrm{SO}(d)$ makes this scalar
constant over the sphere. Hence
\begin{equation}
  \mathcal L=c\,\Delta_{\mathbb S^{d-1}},\qquad c>0,
  \label{eq:canonical-isotropic-laplacian}
\end{equation}
which is spherical Brownian motion up to a rescaling of time. Its transition
kernels consequently form the heat semigroup.

\begin{table}[!htbp]
  \caption{Comparison with S-FLM
  \citep{deschenaux2026hyperspherical} and vMF spherical flows
  \citep{chemseddine2026spherical}. Brownian motion defines a Markov heat
  semigroup, while the other methods specify a path or concentration schedule.}
  \label{tab:spherical-method-comparison}
  \centering
  \small
  \setlength{\tabcolsep}{3.5pt}
  \begin{tabular}{@{}
    >{\raggedright\arraybackslash}p{0.12\linewidth}
    >{\raggedright\arraybackslash}p{0.23\linewidth}
    >{\raggedright\arraybackslash}p{0.27\linewidth}
    >{\raggedright\arraybackslash}p{0.28\linewidth}@{}}
    \toprule
    Method & Forward family & Noise parameterization & Semigroup / path \\
    \midrule
    S-FLM
      & Geodesic interpolation
      & Interpolation schedule $\alpha_t$
      & Designed path \\
    vMF
      & vMF family
      & Concentration $\kappa_t$
      & Chosen concentration schedule \\
    \rowcolor{ourrowpink}
    \textbf{Ours}
      & Spherical heat kernel
      & Retained-cosine clock $r=-\log\rho$
      & Heat semigroup \\
    \bottomrule
  \end{tabular}
\end{table}

\subsection{Heat-kernel tables}

The score in Equation~\ref{eq:marginal-score} requires
$a^H_{d,r}(c)=\partial_c\log K^H_{d,r}(c)$.  At intrinsic heat time $\tau$,
the kernel has the Gegenbauer expansion
\begin{equation}
  \overline K^H_{d,\tau}(c)
  \propto\sum_{j\geq0}m_{j,d}
  e^{-\lambda_{j,d}\tau/2}P_j^{(d)}(c),
  \qquad \lambda_{j,d}=j(j+d-2),
  \label{eq:heat-gegenbauer-series}
\end{equation}
where $m_{j,d}$ is the dimension of the degree-$j$ spherical-harmonic space.
We scale terms by the largest term while accumulating the series; the relative
size of the final sum then measures numerical cancellation.

The series is not reliable over the full domain.  At $d=128$, it suffers
severe cancellation around $r\in[0.4,1.8]$ in the region containing most
probability mass, while the small-time approximation is already inaccurate.
Splicing the two approximations produces errors of up to 8\% under the
identity
\begin{equation}
  \E[a^H_{d,r}(C)(1-C^2)]=(d-1)\E[C].
  \label{eq:heat-stein-identity}
\end{equation}

We therefore solve the heat equation.  For
$f(c,\tau)=\log\overline K^H_{d,\tau}(c)$,
\begin{equation}
  \partial_\tau f
  =\frac12\left[(1-c^2)(f_{cc}+f_c^2)-(d-1)cf_c\right].
  \label{eq:log-heat-pde}
\end{equation}
The diffusion coefficient vanishes at $c=\pm1$, so both endpoints are outflow
boundaries.  Working with the log density avoids underflow.  The solver starts at $r=0.05$, where the small-time approximation has a
Stein residual of $8\times10^{-5}$, i.e., a relative violation of
$8\times10^{-5}$ in Equation~\ref{eq:heat-stein-identity}, which the exact
heat kernel satisfies. It stores a snapshot at each required radius.

At runtime, $\log a^H_{d,r}(c)$ is bilinearly interpolated from a float32 grid
uniform in $(\log r,c)$.  The same table supplies the radial marginal for
direct forward sampling.  We check agreement with the spectral series where
it is reliable, Equation~\ref{eq:heat-stein-identity}, unit mass, and
insensitivity to the solver grid.

\begin{figure}[!htbp]
  \centering
  \includegraphics[width=\linewidth]{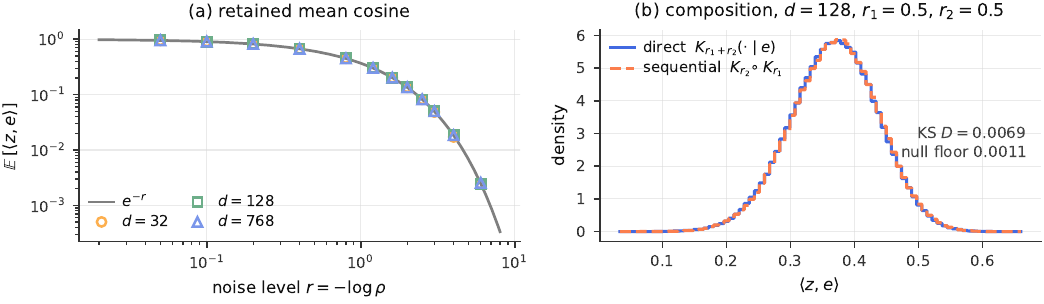}
  \vspace{-2em}
  \caption{Numerical validation of the Brownian clock and forward sampler.
  \textbf{(a)} For $d\in\{32,128,768\}$, forward samples recover
  $\E\langle\mathbf Z_r,\mathbf e\rangle=e^{-r}$ with relative error from $0.002\%$ to $0.1\%$
  wherever the target is above the Monte Carlo resolution. Because
  $r=-\log\rho$, this checks the heat tables and sampler.
  \textbf{(b)} At $d=128$, direct corruption at $r_1+r_2$ and sequential
  corruption at $(r_1,r_2)=(0.5,0.5)$ differ by KS distance $D=0.0069$,
  compared with a null floor of $0.0011$ from two independent direct draws.
  Across all tested dimensions and seven radius pairs, $D\leq0.023$.
  The exact Brownian semigroup identity is therefore only approximate in the
  numerical sampler.}
  \label{fig:semigroup-clock}
\end{figure}

\subsection{Full-vocabulary mixture}

For $N$ positions, Equation~\ref{eq:marginal-score} uses one
$[N\times d][d\times|\mathcal V|]$ product for all cosines, an elementwise
lookup for the radial factors, and one
$[N\times|\mathcal V|][|\mathcal V|\times d]$ product for the weighted
codewords.  We evaluate the exact vocabulary sum without top-$k$ truncation.
The contraction is split into chunks of 2,048 codewords and performed in two
passes, first for the log normalizer and then for the weighted sum, so no
additional dense vocabulary-sized tensor is retained beyond the logits.

\subsection{Codebook construction}
\label{app:codebook-construction}
Frozen and learned codebooks share a max-min initialization.  Starting from
Gaussian vectors projected to the sphere, we minimize
\begin{equation}
  \mathcal L_{\mathrm{code}}
  =\E_v\left[\beta\operatorname{logsumexp}_{w\neq v}
  \left(\frac{\langle\mathbf e_v,\mathbf e_w\rangle}{\beta}\right)\right],
  \label{eq:max-min-codebook}
\end{equation}
annealing $\beta$ from $0.08$ to $0.02$ and renormalizing after every update.

For learned codebooks, we optionally use
\begin{align}
  \mathcal R={}&\lambda_{\mathrm{anchor}}
  \sum_v\|\mathbf e_v-\mathbf a_v\|^2 \\
  &+\lambda_{\mathrm{sep}}\sum_v
  \frac{\operatorname{softplus}\!\left(
  \gamma[\cos(\mathbf e_v,\operatorname{NN}(v))-\mu]
  \right)}{\gamma}.
  \label{eq:codebook-regularizer}
\end{align}
The free configuration uses $\lambda_{\mathrm{anchor}}=0$,
$\lambda_{\mathrm{sep}}=1$, $\mu=0.7$, and $\gamma=20$. Here $\mu$ is a
cosine threshold, equivalent to a nearest-neighbor angle of
$\arccos(0.7)\simeq0.7954$ radians. The separation term
compares a random subset of rows with the complete codebook, and the codebook
learning rate is one tenth of the backbone rate. The separate $d=128$
OWT geometry diagnostic at 150k updates has median displacement
63.7 degrees and median nearest-neighbor angle 0.970 radians
(Appendix~\ref{app:codebook-geometry}). Setting
$\lambda_{\mathrm{anchor}}=3\times10^{-3}$ keeps them within 8 degrees of the
initialization.

\subsection{Adaptive schedules and solver grids}
\label{sec:sampling-schedules}

One training-noise ablation mixes the information clock
$u(r)=H(X_0\mid Z_r)/H(\pi)$ with a log-uniform radius draw:
\begin{equation}
  r\sim0.75\,\mathcal U_{\mathrm{info}}+0.25\,\mathcal U_{\log r}.
  \label{eq:default-noise-draw}
\end{equation}
All positions contribute to the cross-entropy objective; content and
padding losses are normalized separately. These variants focus training
examples or solver points on noise levels
where the posterior changes, without changing the forward process or
training objective.

For fixed-entropy training, we measure
$H_\theta(r)=\E[\mathbb H(q_\theta(\cdot\mid\mathbf z,r))]$ once and draw
\begin{equation}
  q(r)\propto(1-\eta)
  \left|\frac{\mathrm dH_\theta}{\mathrm d\log r}\right|
  +\eta\mathcal U_{\log r}.
  \label{eq:fixed-entropy-draw}
\end{equation}
For entropy-based solver grids, we invert the CDF of the same density at
$S+1$ equally spaced quantiles and use $\eta=0.15$.  The log-uniform component
is needed because the entropy curve is nearly flat at low noise.

For online adaptation, we keep a rolling buffer of $(\log r,\ell)$ pairs:
8,192 for Sudoku, 6,400 for TinyGSM, and 3,200 for OWT.
This loss-slope allocation follows S-FLM's adaptive schedule, itself inspired
by InfoNoise \citep{deschenaux2026hyperspherical,raya2026infonoise}.
After a 1,000-step warm-up, every 500 steps we fit a regularized cubic
smoothing spline with 20 knots and update
\begin{equation}
  q(\log r)\propto
  \max\!\left(\frac{\mathrm d\ell}{\mathrm d\log r},0\right)+\varepsilon.
  \label{eq:online-adaptive-draw}
\end{equation}
A uniform floor preserves coverage (mixing weight $10^{-3}$ for language),
and both the buffer and discretized CDF are checkpointed.

For the same vocabulary, the informative interval $u\in[0.1,0.9]$ occupies
13.9\% of the log-radius range at $d=128$ and 9.3\% at $d=768$.  A log-uniform
draw therefore allocates relatively fewer examples to informative levels as
the dimension grows.

\subsection{Sampling details}

For a decreasing grid $r_0>\cdots>r_S$, let $h=r_k-r_{k+1}$ and
$g^2=2/(d-1)$. The spherical reverse-SDE and probability-flow ODE
updates are
\begin{align}
  \mathbf z&\leftarrow\operatorname{Exp}_{\mathbf z}
  \!\left(g^2s_\theta(\mathbf z,r_k)h+g\sqrt h\,\boldsymbol\xi\right),
  &&\text{(SDE)},\label{eq:reverse-sde-step}\\
  \mathbf z&\leftarrow\operatorname{Exp}_{\mathbf z}
  \!\left(\tfrac12g^2s_\theta(\mathbf z,r_k)h\right),
  &&\text{(ODE)},\label{eq:probability-flow-step}
\end{align}
where $\boldsymbol\xi$ is isotropic tangent noise and the exponential map
keeps every iterate on the sphere.

Prompt positions are fixed at initialization and reset to their clean
codewords after every update. Plain q-argmax independently draws fresh
heat-kernel noise around each selected codeword at every step; it does not
reuse the OWT Gumbel tensor or evaluate the vocabulary score mixture.
A first-order predictor with $c$ corrector steps costs
$1+c$ network evaluations per step, Heun's method costs two, and q-argmax
costs one. We distinguish the sampler budget $B$ from the total NFE:
nearest-codeword or direct token readout uses $B$ calls, while a separate
posterior decode uses $B+1$. A Brownian Heun run takes $B/2$ solver steps;
q-argmax, the bridge, and quantile transport take $B$ steps. Diagnostic
comparisons that hold the grid rather than the call count fixed are identified
separately.

\subsection{Exact ancestral sampling for Heat}
\label{app:heat-ancestral}

The bridge in Equation~\ref{eq:heat-ancestral-bridge} follows from Bayes'
rule and the Chapman--Kolmogorov identity. For a clean token $x$, the joint
density of a bridge state $\mathbf z_s$ and an observed state
$\mathbf y=\mathbf z_R$ is
$K^H_{d,s}(\mathbf z_s\mid\mathbf e_x)
K^H_{d,R-s}(\mathbf y\mid\mathbf z_s)$; division by the marginal
$K^H_{d,R}(\mathbf y\mid\mathbf e_x)$ normalizes it. Drawing
$x\sim p(x\mid\mathbf y,R)$ and then drawing from the bridge is an exact
reverse step when the posterior is exact. In the numerical implementation,
posterior approximation and heat-table interpolation are the sources of
error.

\paragraph{Two-coordinate bridge sampler.}
Let $\eta=\langle\mathbf e_x,\mathbf y\rangle$,
$c_1=\langle\mathbf z_s,\mathbf e_x\rangle$, and
$c_2=\langle\mathbf z_s,\mathbf y\rangle$. The projection of a uniform point
on $\mathbb S^{d-1}$ onto the span of $\mathbf e_x$ and $\mathbf y$ has
density, up to normalization,
\begin{equation}
  J_d(c_1,c_2\mid\eta)
  \propto
  \frac{1}{\sqrt{1-\eta^2}}
  \left[
    1-\frac{c_1^2-2\eta c_1c_2+c_2^2}{1-\eta^2}
  \right]_+^{(d-4)/2}.
  \label{eq:heat-bridge-volume-factor}
\end{equation}
The bridge-coordinate density is proportional to
$K^H_{d,s}(c_1)K^H_{d,R-s}(c_2)J_d(c_1,c_2\mid\eta)$. After drawing
$(c_1,c_2)$, we choose a uniform direction in the orthogonal complement and
reconstruct $\mathbf z_s$. This includes the sphere's volume factor and
reduces the bridge draw to a two-dimensional numerical sampling problem.

\paragraph{Restart and comparison kernels.}
In Equation~\ref{eq:heat-restart-step}, the posterior is evaluated once at the
current state; the selected token is held fixed while the state is re-noised
by $\Delta$ and bridged back. Heat's semigroup means the restarted state has
the correct conditional marginal at radius $r_k+\Delta$. At $\Delta=0$ this
is the ancestral bridge. As $\Delta$ grows, its retained cosine $e^{-\Delta}$
decays; when the restart state is nearly uniform, bridging back approaches
fresh heat re-noising of the selected codeword, the memoryless q-argmax update.
The separate token rule is argmax or sampling from $q_\theta$ at $T=1$. Both
controls add no network calls.

This construction depends on a tractable Markov transition family, not on
isotropy alone. Spherical Brownian motion is the canonical isotropic
continuous diffusion up to time scaling, as shown above. Among the spherical
language-model paths compared here, SLERP is deterministic conditional on
the clean token, so its bridge is a point mass. vMF distributions are not
closed under composition; although vMF marginals can be generated by
transport ODEs or stochastic dynamics, the former is deterministic and the
latter does not provide a tractable transition density for an exact bridge.
Other semigroup families or couplings may permit related bridge constructions. \footnote{The spherical
Poisson kernel also forms a semigroup, but it corresponds to subordinated
Brownian motion, a jump process rather than a continuous diffusion.}

The bridge uses the heat tables described above. For radii below $0.05$, we
use the small-time approximation with Stein residual $8\times10^{-5}$; for
larger radii, the tables are obtained by solving the log heat equation. The
tabulated Chapman--Kolmogorov identity is satisfied to about $10^{-4}$ in
log-kernel values, and the conditional mean cosine follows $e^{-h}$ for an
increment $h$. These density-level checks are distinct from Monte Carlo
composition of the numerical forward sampler, whose KS errors are reported
in Figure~\ref{fig:semigroup-clock}.

\paragraph{Heat configurations for the restart experiments.}
Table~\ref{tab:heat-restart-config} records the checkpoints and evaluation
protocols used below.
\begin{table}[!htbp]
  \caption{Model and evaluation configuration for the Heat restart
  experiments. Sudoku uses $B=128$ sampler calls; posterior decoding in
  the bridge comparison gives 129 total NFEs.}
  \label{tab:heat-restart-config}
  \centering
  \scriptsize
  \setlength{\tabcolsep}{3pt}
  \begin{tabular}{p{0.12\linewidth}p{0.33\linewidth}p{0.22\linewidth}p{0.23\linewidth}}
    \toprule
    Task & Heat model & Evaluation set & Restart protocol \\
    \midrule
    Hard Sudoku & 8-layer DiT, width 512; frozen $d=128$ codebook; 20k updates, batch 256 & 1,000 tuning and 2,000 disjoint evaluation puzzles & $B=128$; seed regimes specified per table \\
    GSM8K & Heat-trained online 250k checkpoint; context length 512 & All 1,319 test problems & $B=64$; three paired inference seeds \\
    \bottomrule
  \end{tabular}
\end{table}

\paragraph{Sudoku bridge versus q-argmax.}
Table~\ref{tab:sudoku-bridge-comparison} compares the ancestral bridge with
q-argmax on the 2,000-puzzle hard-Sudoku evaluation split at $B=128$.
These are the four-training-seed subset used to compare both gated and
ungated variants; the completed five-training-seed memory comparison is in
Tables~\ref{tab:memory-temperature-effects} and~\ref{tab:bridge-budget-scaling}.
Posterior decoding adds
one final network call, for 129 calls total. In the ungated case,
the bridge loses mainly through illegal grids, not by returning legal
solutions that violate puzzle clues.

\begin{table}[!htbp]
  \caption{Hard-Sudoku exact-match accuracy (\%) on the 2,000-puzzle
  evaluation split. Each sampler uses $B=128$ sampling calls plus one final
  posterior-decoding call (129 network calls total). Four paired training
  seeds, with sampler outputs paired within seed; $\Delta$ columns are paired
  differences from q-argmax.
  The main-text memory gain uses the completed five-seed comparison instead.}
  \label{tab:sudoku-bridge-comparison}
  \centering
  \small
  \setlength{\tabcolsep}{3pt}
  \begin{tabular}{lrrrr}
    \toprule
    Sampler & Ungated & $\Delta$ & Gated (tuned) & $\Delta$ \\
    \midrule
    q-argmax & $77.1\pm2.1$ & -- & $82.1\pm1.3$ & -- \\
    Bridge, argmax token & $59.4\pm3.1$ & $-17.8\pm1.6$ & $79.1\pm1.9$ & $-3.0\pm1.1$ \\
    Bridge, sampled token & $60.4\pm2.2$ & $-16.8\pm1.3$ & $79.1\pm1.3$ & $-3.0\pm0.6$ \\
    Memoryless, $T=1$ & $77.5\pm2.5$ & $+0.4\pm0.8$ (n.s.) & -- & -- \\
    \bottomrule
  \end{tabular}
\end{table}

The ungated failure breakdown (valid / legal but wrong puzzle / illegal) is
77.1 / 2.6 / 20.2 for q-argmax and 59.4 / 2.6 / 38.1 for the bridge. The
same clue-violation rate shows that the bridge's lower exact match comes
from illegal grids rather than added clue drift; Table~\ref{tab:sudoku-bridge-comparison}
reports the corresponding exact-match rates.

\paragraph{Restart-size sweep on Sudoku.}
This family was tuned on a separate 1,000-puzzle split with four sampler
seeds, no gate, and $B=128$. Table~\ref{tab:sudoku-restart} shows recovery
as added noise erases more state memory; the tuning split is distinct from
the evaluation split above.

\begin{table}[!htbp]
  \caption{Hard-Sudoku restart sweep on the 1,000-puzzle tuning split at
  $B=128$ sampler calls, four sampler seeds, without gating.}
  \label{tab:sudoku-restart}
  \centering
  \small
  \setlength{\tabcolsep}{6pt}
  \begin{tabular}{rrrr}
    \toprule
    $\Delta$ & Retained cosine $e^{-\Delta}$ & Argmax token & Sampled token ($T=1$) \\
    \midrule
    $0$ (bridge) & $1.00$ & $59.0$ & $60.0$ \\
    $0.5$ & $0.61$ & $75.9$ & $76.7$ \\
    $1$ & $0.37$ & $76.5$ & $77.8$ \\
    $2.43\;(\frac12\log d)$ & $0.09$ & $76.9$ & $78.0$ \\
    $8$ (near-memoryless) & $\approx0$ & $76.6$ & $78.8$ \\
    \bottomrule
  \end{tabular}
\end{table}

\paragraph{Why the ancestral bridge plateaus.}
The following diagnostics use a separate 1,000-puzzle hard-Sudoku tuning
split, three independently trained Heat models (seeds 0--2), and $B=128$.
The bridge diagnostics in Tables~\ref{tab:bridge-exposure} and
\ref{tab:bridge-fresh} use these three training seeds; the budget sweep in
Table~\ref{tab:bridge-budget-scaling} instead uses five sampler seeds on
1,000 hard-Sudoku tuning puzzles, and the
Sudoku restart-size sweep in Table~\ref{tab:sudoku-restart} uses four sampler
seeds on the tuning split. These seed regimes and splits are distinct.
Table~\ref{tab:bridge-exposure} compares each trajectory with a fresh
forward sample of the true solution at the same noise level. The measured
cosine marginal and codeword margin (i.e. cosine to the nearest codeword minus cosine to the second-nearest codeword, averaged over cells) show no \emph{bridge-specific} shift. Yet the bridge makes fewer wrong-to-right and
right-to-wrong changes, halving net revision
(Table~\ref{tab:bridge-fresh}).

\begin{table}[!htbp]
  \caption{Hard-Sudoku trajectory diagnostics on 1,000 tuning puzzles at
  $B=128$, averaged over the informative noise band and three Heat training
  seeds. KS uses cosine to the nearest codeword against forward samples of
  the true solution; the reference codeword margin is $0.115$.}
  \label{tab:bridge-exposure}
  \centering\small
  \setlength{\tabcolsep}{4pt}
  \begin{tabular}{lrrr}
    \toprule
    Diagnostic & Memoryless & Bridge & Restart $\Delta=0.5$ \\
    \midrule
    KS versus forward cosine marginal & 0.004 & 0.006 & 0.005 \\
    Codeword margin & 0.115 & 0.115 & 0.116 \\
    Calibration error on trajectory states & 0.050 & 0.070 & 0.053 \\
    Cell accuracy after $r\approx1.9$ (\%) & 95.0 & 92.7 & 94.6 \\
    \bottomrule
  \end{tabular}
\end{table}

\begin{table}[!htbp]
  \caption{Hard-Sudoku memory intervention on the same 1,000 tuning puzzles,
  three Heat training seeds and $B=128$. Bridge-fresh retains the previous
  token choice but redraws the noise. Flip counts are per cell over the
  informative band; net revision is wrong-to-right minus right-to-wrong.}
  \label{tab:bridge-fresh}
  \centering\small
  \setlength{\tabcolsep}{4pt}
  \begin{tabular}{lrrrr}
    \toprule
    Diagnostic & Memoryless & Bridge & Bridge-fresh & Restart $\Delta=0.5$ \\
    \midrule
    Valid grids (\%) & 74.8 & 58.6 & 73.1 & 74.8 \\
    Wrong$\to$right & 0.291 & 0.168 & 1.390 & 0.273 \\
    Right$\to$wrong & 0.252 & 0.148 & 1.351 & 0.233 \\
    Net revision & 0.040 & 0.020 & 0.039 & 0.040 \\
    Wrong-cell confidence, trajectory / redrawn & .976/.976 & .968/.969 & .975/.975 & .976/.976 \\
    Calibration error, trajectory / redrawn & .052/.052 & .068/.067 & .055/.055 & .052/.052 \\
    \bottomrule
  \end{tabular}
\end{table}

Bridge-fresh recovers $14.5$ of the $16.2$ lost points, about $90\%$, while
retaining commitment memory. Its paired accuracy difference from memoryless
is $-1.70\pm0.44$ points across the three training seeds. Its gross flip
count is about five times the
memoryless count, but its \emph{net} revision and accuracy are close to
memoryless. Re-noising the same committed grid gives almost unchanged
confidence and calibration within each sampler. Thus the calibration gap
mainly reflects which grids were committed, instead of a different reading
of a fixed grid. Confidence is nearly unchanged when evaluated on the same states, suggesting that the calibration gap mainly reflects which grids become committed. The remaining gap measures the commitment-memory cost in this intervention.

\paragraph{Budget scaling on Sudoku and GSM8K.}
Table~\ref{tab:bridge-budget-scaling} uses the 1,000-puzzle Sudoku tuning
split and five sampler seeds, unlike the 2,000-puzzle benchmark in
Table~\ref{tab:sudoku-full-nfe} or the five-training-seed kernel diagonal in
Table~\ref{tab:heat-vmf-training-seeds}. Its $78.0\%$ q-argmax result at
$B=180$ therefore need not equal the $77.9\%$ estimate from the latter
study; the two use different splits and seed regimes.
On Sudoku the bridge is nearly flat
while q-argmax gains $12.9$ points. GSM8K also shows a growing deficit, but
its bridge improves by $3.0$ points. For a Heat step of radius $h$, the old
state retains mean cosine $e^{-h}$; as the grid is refined, successive
readings become more correlated. The heuristic sum
$\sum_k(1-e^{-h_k})\leq\sum_k h_k$ stays bounded by the width of the
informative band, rather than growing with NFE. This scaling argument captures the effective number of posterior readings up to approximation. The slight Sudoku bridge increase, $58.8\to59.5\%$, is compatible with saturation but does not establish the mechanism on its own.

\begin{table}[!htbp]
  \caption{Bridge versus q-argmax across sampler budgets $B$. Sudoku uses
  1,000 hard tuning puzzles and five sampler seeds; GSM8K uses all 1,319 test questions,
  the Heat 250k checkpoint, three paired inference seeds and common prior
  draws. Values are mean accuracy (\%); differences are bridge minus
  q-argmax. The last row uses $B=180$ for Sudoku and $B=256$ for GSM8K.}
  \label{tab:bridge-budget-scaling}
  \centering\scriptsize
  \setlength{\tabcolsep}{4pt}
  \begin{tabular}{r rrr rrr}
    \toprule
    $B$ (Sudoku / GSM8K) & \multicolumn{3}{c}{Sudoku} & \multicolumn{3}{c}{GSM8K} \\
    \cmidrule(lr){2-4}\cmidrule(lr){5-7}
    & Bridge & q-argmax & Difference & Bridge & q-argmax & Difference \\
    \midrule
    16 & 58.8 & 65.1 & $-6.4\pm1.0$ & 14.6 & 18.2 & $-3.6\pm0.3$ \\
    32 & 59.1 & 69.2 & $-10.1$ & 15.3 & 20.9 & $-5.5$ \\
    64 & 58.9 & 73.7 & $-14.8\pm0.8$ & 15.5 & 23.6 & $-8.1\pm0.6$ \\
    128 & 59.3 & 77.0 & $-17.7\pm1.4$ & 16.6 & 25.6 & $-9.0$ \\
    180 / 256 & 59.5 & 78.0 & $-18.4\pm1.8$ & 17.6 & 27.1 & $-9.5\pm1.4$ \\
    \bottomrule
  \end{tabular}
\end{table}

The token-selection differences summarized in the main-text bridge table are
listed at every evaluated budget in Table~\ref{tab:token-selection-effects}.
Negative values mean that sampling tokens at $T=1$ is better than argmax.
The Sudoku rows use the 2,000-puzzle evaluation split and five training
seeds; the GSM8K rows use three paired inference seeds and common prior draws (i.e. the same initial uniform-sphere samples at $r_\text{max}$ are reused across compared samplers for each seed (common random numbers), so differences are paired).

\begin{table}[!htbp]
  \caption{Paired token-selection effect, measured as q-argmax minus
  memoryless sampling at $T=1$ (percentage points; mean $\pm$ sd). Sudoku
  uses 2,000 evaluation puzzles, five training seeds, and posterior decoding
  (NFE $=B+1$). GSM8K uses all 1,319 test questions, three paired inference
  seeds, and common prior draws. Negative values favor $T=1$ sampling.}
  \label{tab:token-selection-effects}
  \centering\small
  \setlength{\tabcolsep}{5pt}
  \begin{tabular}{lrr}
    \toprule
    Task ($B$) & q-argmax $-$ bridge & q-argmax $-$ $T=1$ \\
    \midrule
    Sudoku (16) & $+6.4\pm1.0$ & $-1.3\pm1.0$ \\
    Sudoku (64) & $+14.8\pm0.8$ & $-0.6\pm0.8$ \\
    Sudoku (128) & $+17.7\pm1.4$ & $-0.5\pm0.7$ \\
    Sudoku (180) & $+18.4\pm1.8$ & $-1.4\pm1.0$ \\
    GSM8K (16) & $+3.6\pm0.3$ & $+11.4\pm1.0$ \\
    GSM8K (64) & $+8.1\pm0.6$ & $+8.5\pm1.3$ \\
    GSM8K (256) & $+9.5\pm1.4$ & $+4.7\pm2.0$ \\
    \bottomrule
  \end{tabular}
\end{table}

The memory channels separate differently on the two tasks. At GSM8K
sampler budget $B=64$, bridge-fresh is worse than the exact bridge
(Table~\ref{tab:gsm8k-bridge-fresh}). A single-seed control that forces the
previous token to equal the current choice scores $22.67\%$ against
$23.12\%$ for q-argmax on the same seed, and the bridge-fresh run reproduces $14.10\%$. The argmax changes on about $9.2\%$ of GSM8K
positions per step versus about $0.5\%$ of Sudoku cells, with $5.9$ versus
$0.65$ changes per position over a run. Retaining an old token therefore
matters more for high-churn GSM8K; the Sudoku intervention isolates the contribution of noise persistence to the observed performance gap of the bridge sampler.

\begin{table}[!htbp]
  \caption{GSM8K memory intervention at sampler budget $B=64$ on all 1,319 test
  questions: one Heat 250k checkpoint, three paired inference seeds and
  common prior draws. Differences are paired against q-argmax.}
  \label{tab:gsm8k-bridge-fresh}
  \centering\small
  \begin{tabular}{lrr}
    \toprule
    Sampler & Accuracy (\%) & Difference (points) \\
    \midrule
    q-argmax & 23.63 & -- \\
    Exact bridge & 15.49 & $-8.1\pm0.6$ \\
    Bridge-fresh & 13.34 & $-10.3\pm1.3$ \\
    \bottomrule
  \end{tabular}
\end{table}

\paragraph{Restart-size sweep on GSM8K.}
We use the Heat-trained 250k checkpoint and all 1,319 GSM8K test questions,
with $B=64$ and three paired inference seeds. Table~\ref{tab:gsm8k-restart}
shows that memory loss also helps program generation, with slower recovery
from partial memory loss than on Sudoku.

\begin{table}[!htbp]
  \caption{GSM8K accuracy for the Heat restart sweep at $B=64$. Results use
  the 250k Heat checkpoint and three paired inference seeds.}
  \label{tab:gsm8k-restart}
  \centering
  \small
  \begin{tabular}{rrr}
    \toprule
    $\Delta$ & GSM8K accuracy (\%) & Difference from q-argmax (points) \\
    \midrule
    $0$ (exact ancestral) & $15.49$ & $-8.14\pm0.61$ \\
    $0.5$ & $21.89$ & $-1.74\pm0.39$ \\
    $8$ & $23.60$ & $-0.03\pm0.95$ \\
    $\infty$ (q-argmax) & $23.63$ & -- \\
    \bottomrule
  \end{tabular}
\end{table}

\paragraph{Temperature-controlled GSM8K comparisons.}
Table~\ref{tab:gsm8k-matched-temperature} compares released MDLM and Duo
checkpoints with SBD under matched $T=1$ and low-temperature/deterministic
decoding. Baseline values are single runs; SBD means are over three inference
seeds. Table~\ref{tab:gsm8k-temperature-ladder} gives the longer memoryless
$T=1$ and q-argmax curves used in Figure~\ref{fig:gsm8k-comparison}.

\begin{table}[!htbp]
  \caption{GSM8K accuracy (\%) by NFE and decoding temperature. MDLM and Duo
  values are single runs; SBD values are means over three inference seeds.}
  \label{tab:gsm8k-matched-temperature}
  \centering
  \scriptsize
  \setlength{\tabcolsep}{3pt}
  \begin{tabular}{r|rrr|rrr}
    \toprule
    NFE & Duo $T=1$ & MDLM $T=1$ & SBD memoryless $T=1$
        & Duo $T=.1$ & MDLM $T=.1$ & SBD q-argmax \\
    \midrule
    16  & $10.0$ & $1.8$ & $6.80$ & $22.7$ & $14.3$ & $18.15$ \\
    32  & $13.9$ & $5.7$ & $9.70$ & $28.2$ & $23.0$ & $20.85$ \\
    64  & $16.6$ & $10.0$ & $15.14$ & $34.3$ & $27.5$ & $23.63$ \\
    128 & $15.4$ & $13.5$ & $19.99$ & $32.2$ & $31.5$ & $25.60$ \\
    256 & $18.5$ & $15.1$ & $22.39$ & $35.3$ & $33.1$ & $27.12$ \\
    \bottomrule
  \end{tabular}
\end{table}

\begin{table}[!htbp]
  \caption{GSM8K accuracy (\%) for memoryless SBD at $T=1$ and SBD q-argmax
  across the full evaluated ladder. Means are over three inference seeds.}
  \label{tab:gsm8k-temperature-ladder}
  \centering
  \small
  \begin{tabular}{r|rr}
    \toprule
    NFE & Memoryless, $T=1$ & q-argmax \\
    \midrule
    1 & $0.0$ & $0.7$ \\
    4 & $1.7$ & $8.3$ \\
    16 & $6.8$ & $18.2$ \\
    64 & $15.1$ & $23.6$ \\
    256 & $22.4$ & $27.1$ \\
    512 & $26.2$ & $30.0$ \\
    1,024 & $28.8$ & $30.1$ \\
    \bottomrule
  \end{tabular}
\end{table}

\subsection{Training cost}
\label{app:training-cost}

Table~\ref{tab:training-cost} compares training throughput and total
cost for 250k updates on TinyGSM and 1M updates on OpenWebText.
The AR, MDLM, Duo, CANDI, FLM, and S-FLM values, including the S-FLM
variants, are taken directly from Table~3 of
\citet{deschenaux2026hyperspherical}; we do not remeasure these baselines.
Their measurements use 8 NVIDIA H100 GPUs for TinyGSM and 16 for OWT.
Our measurements use NVIDIA GH200 GPUs, so this is not a
hardware-matched comparison of method efficiency. Baseline totals are
copied as reported, instead of being recomputed from their rounded throughput.

\begin{table}[!htbp]
  \caption{Training cost on TinyGSM (250k updates) and OpenWebText
  (1M updates). Baseline values are from Table~3 of
  \citet{deschenaux2026hyperspherical}. Our duration and GPU-hours
  for TinyGSM refer to the complete run. OWT GPU-hours are implied by the
  reported 4.219 updates/s over 1M updates on 16 GPUs; the full-run wall time
  was not recorded. Throughput is measured over training segments, not
  averaged over the full run. Baselines use H100s and our runs use GH200s.
  NR indicates an unreported full-run duration.}
  \label{tab:training-cost}
  \centering
  \small
  \setlength{\tabcolsep}{4pt}
  \begin{tabular}{lrrrrrr}
    \toprule
    & \multicolumn{2}{c}{Updates/s}
    & \multicolumn{2}{c}{Duration (h)}
    & \multicolumn{2}{c}{GPU-hours} \\
    \cmidrule(lr){2-3}\cmidrule(lr){4-5}\cmidrule(lr){6-7}
    Model & TinyGSM & OWT & TinyGSM & OWT & TinyGSM & OWT \\
    \midrule
    AR & 5.4 & 6.0 & 12.8 & 46.6 & 102.1 & 745.7 \\
    MDLM & 3.0 & 3.5 & 22.8 & 80.3 & 182.7 & 1,284.5 \\
    Duo & 3.0 & 3.5 & 22.8 & 80.3 & 182.7 & 1,284.5 \\
    CANDI & 3.0 & 2.5 & 23.5 & 108.9 & 188.3 & 1,742.9 \\
    FLM & 3.2 & 3.3 & 21.7 & 84.2 & 173.6 & 1,346.8 \\
    S-FLM & 4.2 & 4.5 & 16.5 & 61.7 & 132.3 & 987.7 \\
    S-FLM + adapt. & 4.2 & 4.5 & 16.5 & 61.7 & 132.3 & 987.7 \\
    S-FLM + adapt. + $\mathbb S$-arch
      & 2.6 & 2.8 & 26.7 & 99.2 & 213.7 & 1,587.3 \\
    \addlinespace
    \rowcolor{ourrowpink}
    Ours & 3.465 & 4.219 & 20.0 & NR & 160.3 & 1,053 \\
    \bottomrule
  \end{tabular}
\end{table}
\FloatBarrier

\subsection{Sampling latency and throughput}
\label{app:sampling-cost}

Table~\ref{tab:sampling-cost} reports measurements on the same single
NVIDIA GH200 GPU, using released baseline checkpoints and our 1M-update
checkpoint. All methods generate 1,024-token sequences at $T=1$ where applicable and
256 actual network evaluations, with the call counts independently
verified. Each configuration uses two warmup runs followed by seven
timed repeats. Latency is measured at batch size 1 in seconds per
sample; throughput is measured at batch size 4 in generated tokens per second.
Peak memory is also reported at batch size 4. These values report newly measured inference costs, complementing the training costs reported above.

\begin{table}[!htbp]
  \caption{Sampling cost on one NVIDIA GH200 GPU at 256 NFEs,
  sequence length 1,024, and $T=1$ where applicable. All configurations use two warmups
  and seven timed repeats. Latency uses batch size 1; throughput and peak
  memory use batch size 4.}
  \label{tab:sampling-cost}
  \centering
  \small
  \setlength{\tabcolsep}{5pt}
  \begin{tabular}{@{}lrrr@{}}
    \toprule
    Method & \shortstack{Latency\\(s/sample)}
    & \shortstack{Throughput\\(tokens/s)}
    & \shortstack{Peak memory\\(GiB)} \\
    \midrule
    MDLM & 3.40 & 979 & 3.73 \\
    Duo & 3.21 & 1,259 & 3.73 \\
    S-FLM & 3.62 & 620 & 5.59 \\
    \rowcolor{ourrowpink}
    Ours: q-argmax & 3.37 & 1,221 & 2.31 \\
    \rowcolor{ourrowpink}
    Ours: Heun SDE & 7.54 & 257 & 2.73 \\
    \rowcolor{ourrowpink}
    Ours: OWT hybrid, $k=1$ & 7.03 & 277 & 5.80 \\
    \bottomrule
  \end{tabular}
\end{table}
\FloatBarrier

At matched NFE, q-argmax has latency comparable to MDLM and Duo, with
throughput close to Duo and the lowest measured peak memory. The Heun
SDE and OWT hybrid are slower despite the same number of network calls,
showing that matched NFE need not imply matched wall-clock cost.
These timings characterize inference efficiency under the evaluated settings, with generation quality assessed separately.

\section{Additional experimental details}
\label{app:experiment-details}

\paragraph{Models, comparisons, and validation checks.}
We refer to trained models by their configuration.
Unless stated otherwise, a \emph{learned codebook} is initialized at the
max-min packing (Equation~\ref{eq:max-min-codebook}) and trained jointly with
the backbone, and the \emph{adaptive noise schedule} is the online
loss-adaptive $r$-schedule of Appendix~\ref{sec:sampling-schedules}
(Equation~\ref{eq:online-adaptive-draw}). A learned codebook is
\emph{unanchored} when $\lambda_{\mathrm{anchor}}=0$ in
Equation~\ref{eq:codebook-regularizer}; the anchored variant uses
$\lambda_{\mathrm{anchor}}=3\times10^{-3}$ and is otherwise identical
(Appendix~\ref{app:codebook-construction}). The OpenWebText model
(Appendix~\ref{app:owt-protocol}) is trained for $10^{6}$ updates with an
unanchored learned codebook ($d=768$) and the adaptive noise schedule
(run \texttt{owt\_d768\_online\_free\_lr3e4}; Table~\ref{tab:our-experiment-setup}).
We call a model \emph{matched} when it is sampled with the kernel it was
trained on, and \emph{crossed} when it is sampled with a different kernel
(Section~\ref{sec:kernel-mismatch}; Tables~\ref{tab:kernel-mismatch-cost}
and~\ref{tab:gsm8k-kernel-cross-new}). Paired Heat--vMF comparisons
(Tables~\ref{tab:heat-vmf-training-seeds} and~\ref{tab:heat-vmf-resolution})
use the same seeds, puzzles, and sampler for both training kernels. For the
error-compounding analysis on GSM8K (Appendix~\ref{app:gsm8k-heat-vmf}), we
use two denoising probes that involve no sampling. The \emph{code-token probe}
measures top-1 accuracy of the Heat posterior on program tokens only
(Table~\ref{tab:gsm8k-denoising-probe}). The \emph{full-answer probe}
measures per-token accuracy over the whole answer region (program and
padding) and supplies the per-token rate $p$ for the $p^{L}$ check
(Table~\ref{tab:gsm8k-length-new} and Figure~\ref{fig:gsm8k-consistency}).
We validate every transport with three checks
(Appendix~\ref{app:valid-sampler}). The \emph{Brownian-compatibility index}
(Equation~\ref{eq:raw-t0-l1}, with the $L^2$ variant in
Equation~\ref{eq:raw-t0-l2}) measures how far a kernel family is from being
generated by Brownian motion; it is zero for Heat. The
\emph{conditional-law check} starts the sampler from pure noise with a single
codeword and tests, by a Kolmogorov--Smirnov (KS) test, whether it
reproduces the kernel's conditional law $K_r$. The
\emph{intermediate-marginal check} uses an exact posterior on a 10-codeword
toy problem and tests the marginals at several radii (KS) and the final
token frequencies (total variation).

\subsection{Model and training setup}
\label{app:training-setup}

Our Sudoku experiments use an 8-layer Diffusion Transformer (DiT) with hidden
width 512 and eight attention heads. GSM8K and OWT use a 12-block spectral
DiT with hidden width 768, 12 attention heads, and conditioning dimension
128 \citep{peebles2023dit,lou2024sedd}. The codebook dimension $d$ is separate from
the backbone embedding: we compare $d=128$ and $d=768$ as specified per task.
The posterior network receives the normalized log-radius
$\widetilde r=(\log r-\log r_{\min})/
(\log r_{\max}-\log r_{\min})$.
In every training and sampling run, $r_{\min}=0.02$ and $r_{\max}=8$,
corresponding to retained mean cosines $e^{-0.02}\simeq0.98$ and
$e^{-8}\simeq3.35\times10^{-4}$. The sole exception is the truncated
SLERP ablation, which uses $r_{\min}=0.942$.
Unconditioned positions are initialized uniformly on the sphere. The exact
Heat marginal at $r=8$ is not uniform, but its retained mean cosine is only
$e^{-8}\simeq3.35\times10^{-4}$ and it is numerically close to stationarity.
Uniform initialization therefore introduces a small finite-horizon
approximation instead of changing the limiting process.
We use AdamW with $\beta=(0.9,0.999)$, $\epsilon=10^{-8}$, gradient clipping
at 1.0, and a backbone learning rate of $3\times10^{-4}$. Sudoku uses
weight decay 0.01, a one-cycle schedule with 5\% warm-up and cosine decay,
fp32 training, and no EMA. Language models use zero weight decay, a constant
learning rate after 2,500 warm-up updates, bf16 training, and EMA decay
0.9999; language evaluation uses EMA weights. Dropout is 0.1 during training
and disabled during evaluation. Table~\ref{tab:our-experiment-setup} summarizes
the task-specific settings. These are our training budgets, not the budgets
used to train the released baseline checkpoints.

\begin{table}[!htbp]
  \caption{Main-model training configuration. Sequence length includes the
  prompt in conditional tasks. Language parameter counts separate the backbone
  from the trainable codebook; all models omit self-conditioning.}
  \label{tab:our-experiment-setup}
  \centering
  \small
  \setlength{\tabcolsep}{4pt}
  \begin{tabular}{@{}p{.20\linewidth}p{.22\linewidth}p{.25\linewidth}p{.25\linewidth}@{}}
    \toprule
    Setting & Sudoku & TinyGSM $\to$ GSM8K & OpenWebText \\
    \midrule
    Backbone & 8 layers, width 512, 8 heads & 12 blocks, width 768, 12 heads; conditioning width 128 & Same as GSM8K \\
    Parameters & 26.2M & 130.8M + 37.7M codebook & 131.6M + 38.6M codebook \\
    Codebook $d$ & 128 & 768 & 768 \\
    Tokenizer & Sudoku digits & SmolLM-135M; 49,152 + PAD & GPT-2; 50,257 + 1 \\
    Sequence length & 180 & 512 & 1,024, wrapped \\
    Training updates & 20k & 250k & 1M \\
    Global batch & 256 ($4\times64$) & 512 ($8\times64$) & 512 ($16\times32$) \\
    Weight decay & 0.01 & 0 & 0 \\
    Learning rate & $3\times10^{-4}$; one-cycle & $3\times10^{-4}$; constant after warm-up & Same as GSM8K \\
    LR warm-up & 5\% of updates & 2,500 updates & 2,500 updates \\
    Precision / EMA & fp32 / none & bf16 / 0.9999 & bf16 / 0.9999 \\
    Adaptive buffer & 8,192 & 6,400 & 3,200 \\
    Codebook & Frozen max-min & Learned from max-min anchor; LR $3\times10^{-5}$ & Same as GSM8K \\
    \bottomrule
  \end{tabular}
\end{table}

\paragraph{Codebooks and noise schedules.}
The online loss-adaptive schedule starts after 1,000 updates and is refit
every 500 updates; language runs mix in a uniform component of weight
$10^{-3}$. Learned language codebooks are parameterized around a fixed
max-min anchor but have no anchor penalty. They use a separation penalty
of weight 1.0 and cosine margin 0.7, with a codebook learning rate equal
to one tenth of the backbone rate (Appendix~\ref{app:implementation-details}).
Frozen codebooks retain their max-min initialization. Learned codebooks
are updated jointly with the backbone; their initialization and
regularization are described in Appendix~\ref{app:implementation-details}.
The main models use the online loss-adaptive schedule above. A separate
schedule ablation mixes a uniform information-clock draw with a
log-uniform draw, with weights $0.75$ and $0.25$
(Equation~\ref{eq:default-noise-draw}).
Fixed-entropy and online adaptive schedules instead allocate more training
examples to informative noise levels. The fixed-entropy schedule is
calibrated once, whereas the online schedule is updated from the recent
cross-entropy profile. We distinguish the training noise distribution from
the sampling grid: the \emph{entropy + log sampling} ablation uses an entropy
training schedule and a log-radius solver grid, while \emph{entropy + entropy
sampling} also uses an entropy-based solver grid.

\paragraph{Conditional generation and decoding.}
The five GSM8K checkpoint ablations and Sudoku q-argmax runs use
Equation~\ref{eq:q-argmax-step} at every noise level. These runs have no
score-update phase, sampler switch, or Gumbel perturbation. Each step
selects the most likely clean token and independently re-noises its
embedding at the next radius. Prompt positions remain fixed. The final
token at each position is the most likely token under the predicted
clean-token posterior. Only OWT uses the score/q-argmax hybrid. Argmax
makes each token choice deterministic given the current state, but the
trajectory remains stochastic because initialization and re-noising are
random. Confidence-gated Sudoku sampling also retains accepted
tokens and stops re-noising their positions; unfrozen positions continue
with the same q-argmax rule. We do not use self-conditioning in our models.

\paragraph{Sampling budget and uncertainty.}
We count network evaluations (NFEs) as Transformer forward passes during
generation. A q-argmax step and a first-order ODE or SDE predictor each use
one evaluation; a Heun step uses two. A predictor with $c$ corrector
updates uses $1+c$ evaluations. Kernel-table lookup and heat-kernel
re-noising do not call the Transformer and do not add NFEs. NFE therefore
measures model calls, not total wall-clock cost. Sudoku benchmark error bars
vary five sampler seeds on one trained checkpoint per difficulty, except
the low-budget SDE/ODE entries, which use three. Kernel-study
error bars vary independent training seeds where stated. GSM8K fine-tune
error bars vary three inference seeds on one training seed per arm.

\subsection{Sudoku benchmark protocol}
\label{app:sudoku-protocol}
\label{app:sudoku-commitment}

We use 48k training and 2k evaluation puzzles per difficulty from the S-FLM
generator \citep{alp2024sudoku,deschenaux2026hyperspherical}. Easy, medium,
and hard puzzles reveal 40, 35, and 30 cells. One frozen max-min $d=128$
Heat model is trained for 20k updates per difficulty with the online schedule,
cross-entropy, global batch 256, and a fixed puzzle prefix. All benchmark
entries use dropout-free evaluation and explicit network-call accounting.
The information-grid entries use five sampler seeds on the same weights;
low-budget SDE/ODE instead use the log-radius grid and three sampler seeds,
as indicated in Table~\ref{tab:sudoku-full-nfe}.

Confidence thresholds are selected separately for each difficulty and seed on
1,000 held-out puzzles from a generator seed disjoint from training and
evaluation. Seven candidates span $0.99$ to $0.9999999$. Posterior decoding
uses one last evaluation of $q_\theta(v\mid\mathbf z,r_{\min})$. It costs one
extra call, so gated rows at $B=180$ are charged 181 NFEs. It helps
gating by $0.08$, $0.69$, and $0.77$ points on easy, medium, and hard puzzles
and is neutral for the transport samplers: the largest SDE/ODE gain is
$0.27$ points and the typical gain is $0.03$ points.

Table~\ref{tab:sudoku-full-nfe} gives the full sweep. The solver grid is
chosen per sampler on held-out puzzles; SDE/ODE use the log-radius grid below
$B=64$. True NFE counts network calls, so $B=16$ and $B=32$ correspond to 8 and
16 two-call Heun steps, respectively, while posterior decoding is charged one
additional call.

\begin{table}[!htbp]
\caption{Sudoku exact-match accuracy (\%). The solver grid is chosen per
sampler on held-out puzzles; SDE/ODE use the log-radius grid below $B=64$.
Low-budget SDE/ODE entries average three sampler seeds; all other entries
average five sampler seeds on one checkpoint per difficulty. Error bars do
not include training variance. Column headings are sampler calls $B$;
gated posterior decoding uses $B+1$ total NFEs, and the other rows use $B$.}
\label{tab:sudoku-full-nfe}
\centering\scriptsize
\setlength{\tabcolsep}{3pt}
\begin{tabular}{llrrrrr}
\toprule
Difficulty & Sampler / $B$ & 16 & 32 & 64 & 128 & 180 \\
\midrule
Easy & SDE & $95.32\pm.26$ & $96.93\pm.28$ & $97.20\pm.18$ & $97.32\pm.33$ & $97.10\pm.15$ \\
 & ODE & $96.33\pm.63$ & $96.38\pm.20$ & $96.24\pm.21$ & $96.54\pm.15$ & $96.42\pm.21$ \\
 & q-argmax & $97.87\pm.18$ & $98.24\pm.10$ & $98.50\pm.08$ & $98.68\pm.11$ & $98.69\pm.07$ \\
 & gated q-argmax & $98.85\pm.11$ & $98.94\pm.09$ & $98.96\pm.13$ & $99.14\pm.11$ & $99.11\pm.12$ \\
\addlinespace
Medium & SDE & $84.83\pm.13$ & $88.70\pm.48$ & $89.15\pm.57$ & $89.35\pm.53$ & $89.75\pm.40$ \\
 & ODE & $86.82\pm.38$ & $87.32\pm.95$ & $87.29\pm.25$ & $87.92\pm.22$ & $88.15\pm.57$ \\
 & q-argmax & $91.17\pm.42$ & $92.56\pm.23$ & $93.33\pm.33$ & $94.11\pm.11$ & $94.57\pm.21$ \\
 & gated q-argmax & $94.62\pm.40$ & $95.74\pm.33$ & $96.21\pm.33$ & $96.47\pm.22$ & $96.66\pm.15$ \\
\addlinespace
Hard & SDE & $50.45\pm.41$ & $60.63\pm.92$ & $62.27\pm.73$ & $63.28\pm.93$ & $63.25\pm.36$ \\
 & ODE & $56.02\pm.75$ & $57.95\pm.74$ & $58.50\pm.88$ & $58.72\pm.51$ & $58.24\pm.82$ \\
 & q-argmax & $68.96\pm.76$ & $72.23\pm.81$ & $75.58\pm.17$ & $78.51\pm.36$ & $79.70\pm.75$ \\
 & gated q-argmax & $74.80\pm.95$ & $78.55\pm.53$ & $81.08\pm.43$ & $82.76\pm.84$ & $83.36\pm.42$ \\
\bottomrule
\end{tabular}
\end{table}
\FloatBarrier

\paragraph{Provenance of hard-Sudoku q-argmax results.}
Table~\ref{tab:sudoku-qargmax-provenance} separates four Heat q-argmax
results that should not be treated as repeated measurements of one setting.
Only the benchmark row is used in the main Sudoku comparison. The three-seed
Heat--vMF comparison in Table~\ref{tab:heat-vmf-training-seeds} uses an
identical paired protocol for both training arms, with inference settings
matched within each sampler. The Heat arm used 20k updates and the log-radius grid.
Its $73.88\%$ estimate is $5.82$ points below the benchmark's $79.70\%$,
which used the information grid. This is consistent with the later observed $5.6$-point gain from information spacing for hard-Sudoku q-argmax
at $B=180$. The seed regimes also differ: three training seeds here versus
five sampler seeds on one checkpoint in the benchmark.

\begin{table}[!htbp]
  \caption{Protocol provenance for four Heat q-argmax results (\%) on 2,000
  hard puzzles at a sampler budget of $B=180$ calls. Standard deviations reflect the
  seed type in the last column. Gated runs include a final posterior decode
  (+1 call).}
  \label{tab:sudoku-qargmax-provenance}
  \centering\scriptsize
  \setlength{\tabcolsep}{3pt}
  \begin{tabular}{@{}p{.19\linewidth}r p{.11\linewidth}p{.15\linewidth}p{.14\linewidth}p{.17\linewidth}@{}}
    \toprule
    Study & Accuracy & Training & Grid & Decode & Seeds \\
    \midrule
    Benchmark & $79.70\pm.75$ & online, 20k & information & none, 180 calls & 5 sampler; 1 checkpoint \\
    Heat--vMF contrast & $73.88\pm.41$ & 20k updates & log-radius & q-argmax & 3 paired training seeds \\
    Kernel Heat diagonal & $77.9\pm1.8$ & online, 20k & information-matched & posterior, 181 calls & 5 training seeds \\
    Frozen-codebook ablation & $79.68\pm.42$ & online, 20k & information & posterior, 181 calls & 3 sampler; 1 checkpoint \\
    \bottomrule
  \end{tabular}
\end{table}

At $B=64$, both grids are available for continuous transport.
Table~\ref{tab:sudoku-grid-control} reports five sampler seeds per grid on
the same 2,000 puzzles and one fixed Heat checkpoint per difficulty, with
nearest-codeword decoding (64 total NFEs) and dropout disabled. Differences
are small and have mixed signs: two favor the information grid and four
favor the log grid. The largest is $-0.92$ points for medium/ODE, compared
with a quadrature-combined standard deviation of about $0.54$; without
paired differences we do not assign a paired significance level. In contrast,
the information grid adds $5.6$ points to q-argmax on hard Sudoku at $B=180$.
This supports a larger grid effect for categorical refinement than for the
transport samplers in this control.

\begin{table}[!htbp]
\caption{NFE-64 transport control with nearest-codeword decoding:
exact-match accuracy (\%), mean $\pm$ sd over five sampler seeds per grid
on one fixed Heat checkpoint per difficulty and 2,000 puzzles.}
\label{tab:sudoku-grid-control}
\centering\small
\setlength{\tabcolsep}{5pt}
\begin{tabular}{lrrrr}
\toprule
Difficulty & SDE log & SDE information & ODE log & ODE information \\
\midrule
Easy & $97.10\pm0.41$ & $97.20\pm0.18$ & $96.47\pm0.23$ & $96.24\pm0.21$ \\
Medium & $89.44\pm0.31$ & $89.15\pm0.57$ & $88.21\pm0.48$ & $87.29\pm0.25$ \\
Hard & $62.97\pm0.76$ & $62.27\pm0.73$ & $58.34\pm1.02$ & $58.50\pm0.88$ \\
\bottomrule
\end{tabular}
\end{table}

\paragraph{FMLM+ comparison.}
We evaluate released checkpoints on the same evaluation puzzles with
$p=0.999$, uniform budget, fresh noise, and five sampler seeds.

\begin{table}[!htbp]
\caption{FMLM+ Sudoku exact-match accuracy (\%), mean $\pm$ standard
deviation over five sampler seeds on fixed released checkpoints.}
\label{tab:fmlm-full-nfe}
\centering\small
\begin{tabular}{lrrr}
\toprule NFE & Easy & Medium & Hard \\
\midrule
4 & $97.60\pm.09$ & $91.53\pm.39$ & $70.36\pm.78$ \\
16 & $97.87\pm.27$ & $92.97\pm.54$ & $81.35\pm.55$ \\
64 & $97.87\pm.29$ & $92.99\pm.42$ & $81.44\pm.71$ \\
180 & $97.77\pm.13$ & $92.99\pm.33$ & $\underline{81.65\pm.36}$ \\
\bottomrule
\end{tabular}
\end{table}

\begin{figure}[!htbp]
  \centering
  \includegraphics[width=\linewidth]{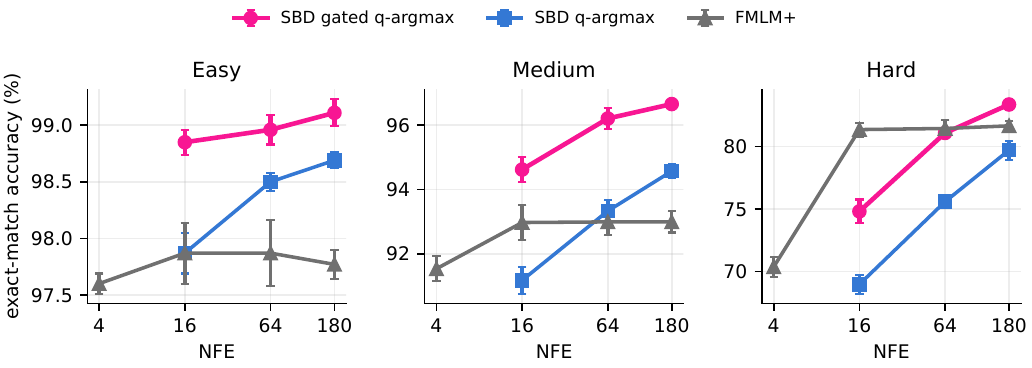}
  \caption{Complete Sudoku accuracy curves for easy, medium, and hard
  puzzles. Error bars are standard deviations over five sampler seeds on
  fixed checkpoints, not training seeds. Comparing gated SBD with FMLM+,
  FMLM+ leads on hard Sudoku at $B=16$, they are tied within variation at
  $B=64$, and SBD leads at $B=180$. The plotted budget labels are sampler
  calls $B$; gated SBD's posterior decode adds one call.}
  \label{fig:sudoku-nfe-all-appendix}
\end{figure}
Figure~\ref{fig:sudoku-nfe-all-appendix} shows the full budget curves behind
the fixed-budget comparison.

\paragraph{Failure decomposition.}
q-argmax produces the most legal grids but can drift from a clue because every
position is re-noised. Gating nearly removes this failure but prevents some
repairs, while posterior decoding recovers part of the lost legality; the
measured breakdown is in Table~\ref{tab:sudoku-failure-new}.

\begin{table}[!htbp]
\caption{Hard-Sudoku failure decomposition (\%) at $B=180$ sampler-call
budget, averaged over five sampler seeds. The posterior-decoded gated row uses
181 actual calls. ``Wrong puzzle'' is legal but inconsistent with a clue.}
\label{tab:sudoku-failure-new}
\centering\scriptsize
\begin{tabular}{lrrrrrr}
\toprule Sampler & Valid & Legal & Wrong puzzle & Rows & Columns & Boxes \\
\midrule
SDE & 63.25 & 68.96 & 5.71 & 75.01 & 75.77 & 81.00 \\
ODE & 58.24 & 62.80 & 4.56 & 70.32 & 71.53 & 77.75 \\
q-argmax & 79.70 & 86.94 & 7.24 & 89.39 & 89.70 & 91.92 \\
gated, nearest & 82.59 & 82.66 & 0.07 & 85.87 & 86.27 & 89.01 \\
gated, posterior & 83.36 & 85.00 & 1.64 & 87.98 & 88.07 & 90.56 \\
\bottomrule
\end{tabular}
\end{table}

\paragraph{Codebook ablation.}
We only consider hard Sudoku here. At $d=128$, frozen
codewords reach $79.68\pm0.42$ with q-argmax and $83.70\pm0.66$ with gating;
learned codewords reach $76.48\pm0.16$ and $81.03\pm0.36$. We omit $d=768$
because every seed selects the tightest tested threshold and commits almost
every cell early; those rows measure posterior saturation rather than
codebook quality. Frozen codebooks suit Sudoku's nine-token vocabulary,
whereas language uses learned codebooks for a much larger vocabulary; this
ablation does not test the language setting.

\subsection{Kernel-consistency study}
\label{app:mixvmf-sudoku}
\label{app:valid-sampler}

\paragraph{Radial transport and the quantile map.}
For an isotropic conditional kernel, let $f_r(c)$ and $F_r(c)$ be the density
and CDF of its cosine coordinate. The conditional radial velocity and its
spherical lift are
\begin{align}
  v_r(c)&=-\frac{\partial_rF_r(c)}{f_r(c)},
  \label{eq:conditional-cdf-velocity}\\
  \mathbf u_r(\mathbf z\mid\mathbf e)
  &=\frac{v_r(c)}{1-c^2}(\mathbf e-c\mathbf z).
  \label{eq:isotropic-spherical-velocity}
\end{align}
The marginal field is the posterior average of these conditional fields. Given
the marginal velocity $\mathbf u_\theta$ and score $\mathbf s_\theta$, the
velocity--score family is
\begin{equation}
  \mathbf z\leftarrow\operatorname{Exp}_{\mathbf z}\!\left(
  h[-\mathbf u_\theta+\varepsilon\mathbf s_\theta]
  +\sqrt{2\varepsilon h}\,\boldsymbol\xi\right),
  \label{eq:isotropic-reverse-family}
\end{equation}
with tangent noise $\boldsymbol\xi$; $\varepsilon=0$ gives the ODE, and
$\varepsilon=1/(d-1)$ recovers the Brownian reverse SDE for Heat. The exact
conditional quantile map in Section~\ref{sec:isotropic-transport} avoids
integrating through low-density gaps between mixture modes. The subsequent
posterior-averaged spherical displacement is first-order.

\begin{table}[!htbp]
  \caption{Hard-Sudoku accuracy (\%) at $B=180$ sampler calls across
  three independent training seeds. The Heat and moment-matched vMF arms use
  an identical paired protocol, with inference settings matched within each
  procedure. Entries are mean $\pm$ standard deviation. Posterior-decoded
  gated rows use 181 actual calls.}
  \label{tab:heat-vmf-training-seeds}
  \centering
  \small
  \setlength{\tabcolsep}{6pt}
  \begin{tabular}{lrrrr}
    \toprule
    Training kernel & ODE & SDE & q-argmax & Gated q-argmax \\
    \midrule
    Heat & $58.42\pm0.44$ & $62.83\pm1.16$ & $73.88\pm0.41$ & $83.42\pm0.32$ \\
    Moment-matched vMF & $58.81\pm0.47$ & $63.11\pm1.14$ & $73.92\pm0.50$ & $82.61\pm0.31$ \\
    $\Delta$ (vMF $-$ heat) & $+0.39$ & $+0.28$ & $+0.04$ & $-0.81$ \\
    \bottomrule
  \end{tabular}
\end{table}

\paragraph{Kernel family.}
For $a\in[0,1)$, let $\operatorname{vMF}_d(a\mid\mathbf e)$ denote the
von Mises--Fisher distribution on $\mathbb S^{d-1}$ centered at $\mathbf e$
with concentration $\kappa_d(a)$ chosen to give mean cosine $a$; equivalently,
$I_{d/2}(\kappa_d(a))/I_{d/2-1}(\kappa_d(a))=a$. For $m=e^{-r}$, define
\begin{equation}
K^{\mathrm{mix}}_{r,\lambda}(\cdot\mid\mathbf e)
=m\,\operatorname{vMF}_d(m+\lambda(1-m)\mid\mathbf e)
+(1-m)\,\operatorname{vMF}_d((1-\lambda)m\mid\mathbf e).
\label{eq:mixvmf-kernel-new}
\end{equation}
The component weights are $m$ and $1-m$, and the corresponding component
mean cosines are $m+\lambda(1-m)$ and $(1-\lambda)m$. Their weighted mean is
$m$, so $\lambda$ changes the conditional shape without changing the retained
mean cosine. At $r=1.6$, the cosine standard deviation is
$0.08$ for Heat, $0.22$ for $\lambda=.5$, and $0.37$ for $\lambda=.9$.

\begin{table}[!htbp]
\caption{Radius at remaining token-entropy fraction $u$. A smaller radius
reveals the token later during generation. The full SLERP path matches Heat
within $|\Delta u|\leq0.016$. This comparison uses the full-path clock; the truncated variant is evaluated separately. Section~\ref{sec:kernel-path} reports SLERP truncated at
$r_{\min}=0.942$; Table~\ref{tab:slerp-full} compares both paths.}
\label{tab:kernel-information-clock}
\centering\small
\begin{tabular}{lrrrr}
\toprule $u$ & Heat & vMF & Mix $.5$ & Mix $.9$ \\
\midrule
.05 & 1.05 & 1.03 & .47 & .056 \\
.25 & 1.31 & 1.31 & .81 & .31 \\
.50 & 1.56 & 1.55 & 1.18 & .73 \\
.90 & 2.39 & 2.38 & 2.51 & 2.30 \\
\multicolumn{5}{l}{SLERP (full): $|\Delta u|\leq0.016$ relative to Heat;}\\
\multicolumn{5}{l}{\quad separate truncation ablation: $r_{\min}=0.942$.}\\
\bottomrule
\end{tabular}
\end{table}

\paragraph{Valid transport checks.}
Write $V=v_r(c)/(1-c^2)$, $S=a_r(c)/(d-1)$, and $w=1-c^2$.
Our primary raw compatibility diagnostic is the tangent-norm $L^1$ ratio
\begin{equation}
 T_0^{(1)}=\frac{\E[|V+S|\sqrt w]}{\E[|V|\sqrt w]}.
 \label{eq:raw-t0-l1}
\end{equation}
Table~\ref{tab:t0-raw-new} estimates it with 20,000 draws per radius.
The $L^2$ diagnostic in Table~\ref{tab:shape-residual-full} instead
uses 50,000 draws per radius and estimates
\begin{equation}
 T_0^{(2)}=\frac{\sqrt{\E[(V+S)^2w]}}{\sqrt{\E[V^2w]}}.
 \label{eq:raw-t0-l2}
\end{equation}
The two estimators differ even on the same samples. We use $T_0^{(1)}$
for raw-compatibility comparisons: the order of magnitude is about $10^{-4}$
for Heat, $10^{-2}$ for moment-matched vMF, and order one for the mixtures.
Raw $T_0$ also counts a pure change of speed, so the fitted residual below
separates time change from shape. Valid-ODE accuracy is compared in
Table~\ref{tab:valid-transport-main}.

\begin{table}[!htbp]
\caption{Raw $T_0^{(1)}$: the $L^1$ estimator in
Equation~\ref{eq:raw-t0-l1}, using 20,000 draws per radius at $d=128$.
The mixture estimates are most informative at the level of order of magnitude because of estimator variability.}
\label{tab:t0-raw-new}
\centering\small
\begin{tabular}{rrrrr}
\toprule $r$ & Heat & moment vMF & mix $.5$ & mix $.9$ \\
\midrule
.05 & $7.2\times10^{-4}$ & $3.6\times10^{-3}$ & 1.00 & 6.49 \\
1.0 & $7.9\times10^{-5}$ & $1.9\times10^{-2}$ & .87 & 3.69 \\
1.6 & $4.6\times10^{-5}$ & $1.3\times10^{-2}$ & .84 & 4.33 \\
3.0 & $3.2\times10^{-5}$ & $9.8\times10^{-3}$ & .80 & 1.41 \\
\bottomrule
\end{tabular}
\end{table}

\paragraph{Tail sensitivity of compatibility estimates.}
The mixture velocity $v_r=-\partial_rF_r/f_r$ can become very large in the
low-density valley between its two modes. On identical samples, repeated
estimates show that the largest 0.1\% of draws account for 84--99\% of
$\E[V^2w]$ for the mixtures (Table~\ref{tab:t0-estimator-reconciliation}).
When these large velocities dominate both numerator and denominator,
$T_0^{(2)}$ is driven toward one and can vary substantially between repeats.
The $L^1$ estimator is more stable, but is not immune to rare draws. The
light-tailed vMF control is stable under both estimators. The shape residual
uses the same second moments and inherits this sensitivity for mixtures:
we interpret its mixture values as orders of magnitude, while the
two-to-four-order gap from vMF supports the qualitative ordering.

\begin{table}[!htbp]
\caption{Compatibility estimators recomputed on identical samples, with
two repeats per setting at $d=128$. The last column is the fraction of
$\E[V^2w]$ contributed by the largest 0.1\% of samples.}
\label{tab:t0-estimator-reconciliation}
\centering\small
\setlength{\tabcolsep}{5pt}
\begin{tabular}{lrrrr}
\toprule Kernel & $r$ & $T_0^{(1)}$, repeats & $T_0^{(2)}$, repeats & Tail share \\
\midrule
vMF & 1.0 & .019 / .019 & .024 / .024 & .001 \\
Mix $.5$ & 1.0 & .87 / .87 & .99 / .99 & .90--.94 \\
Mix $.9$ & .05 & 7.40 / 7.34 & 2.46 / 1.51 & .84--.96 \\
Mix $.9$ & 1.0 & 4.14 / 4.32 & 1.05 / 1.12 & .97--.99 \\
\bottomrule
\end{tabular}
\end{table}

The conditional-law and intermediate-marginal checks intermediate marginals and final token
frequencies. At 360 steps, the quantile ODE reaches KS at most $0.026$ for
Heat, vMF, and $\lambda=.5$, and $0.023$--$0.044$ for $\lambda=.9$, against
null floors of $0.004$--$0.014$. The velocity integrator fails for the
mixtures. The Heat score-ODE arm in the kernel study reaches
$58.42\pm0.44\%$ (Table~\ref{tab:heat-vmf-training-seeds}); the separate
valid-transport study reports $54.5\pm3.2\%$ for Heat's quantile ODE
(Table~\ref{tab:valid-transport-main}; the same group appears in
Table~\ref{tab:slerp-full}). These estimates come from distinct studies and are therefore interpreted as separate measurements rather than a paired accuracy comparison. We also computed a $99.60$--$99.95\%$ puzzle-level agreement between Heat paths.

\begin{table}[!htbp]
\caption{Hard-Sudoku accuracy (\%) using each kernel's valid quantile ODE,
mean $\pm$ sd over three training seeds for every kernel. SLERP uses its full
path here; Table~\ref{tab:slerp-full} also reports the truncated path used in
Section~\ref{sec:kernel-path}. Posterior decoding gives NFE $=B+1$.
We did not run SLERP at $B=720$.}
\label{tab:valid-transport-main}
\centering\small
\begin{tabular}{lrrrrr}
\toprule $B$ & Heat & vMF & SLERP (full) & Mix $.5$ & Mix $.9$ \\
\midrule
180 & $54.5\pm3.2$ & $54.7\pm2.3$ & $56.2\pm2.2$ & $53.3\pm2.4$ & $55.3\pm2.3$ \\
720 & $54.4\pm1.6$ & $55.2\pm2.1$ & -- & $53.2\pm2.6$ & $55.6\pm2.2$ \\
\bottomrule
\end{tabular}
\end{table}

\paragraph{Shape residual.}
At each radius, let $V=v_r(c)/(1-c^2)$, $S=a_r(c)/(d-1)$, and
$w=1-c^2$. Weighted least squares gives
$\alpha^*=-\E[VSw]/\E[V^2w]$ and residual
$\|\alpha^*V+S\|/\|\alpha^*V\|$. The residual is zero exactly when the
local path is Heat after a time change. For SLERP, only the band-averaged
residual is available; its radius-wise range and closest-Heat KS statistic
are not reported in Table~\ref{tab:shape-residual-full}.

\begin{table}[!htbp]
\caption{Compatibility estimates across 13 radii from $.05$ to $5$.
Raw $T_0^{(2)}$ is the $L^2$ estimator with 50,000 draws per radius; Table~\ref{tab:t0-raw-new} reports the corresponding $L^1$-based estimate. Mixture $L^2$ ratios
and residuals are tail-sensitive order-of-magnitude estimates. SLERP's
residual is a band average, not a range.}
\label{tab:shape-residual-full}
\centering\scriptsize
\setlength{\tabcolsep}{3pt}
\begin{tabular}{lrrrrr}
\toprule Kernel & $d$ & Raw $T_0^{(2)}$ ($L^2$) & $\alpha^*$ & Residual & KS \\
\midrule
Heat & 128 & $\le.001$ & 1.000 & $\le.001$ & $\le.012$ \\
Heat & 768 & $\le.007$ & 1.000 & $\le.007$ & $\le.030$ \\
moment vMF & 128 & $.004$--$.025$ & $.993$--$1.012$ & $.001$--$.025$ & $.0001$--$.017$ \\
moment vMF & 768 & $.004$--$.012$ & $.993$--$1.012$ & $.000$--$.010$ & $.0001$--$.022$ \\
info-vMF & 768 & $.004$--$.45$ & $.993$--$1.45$ & $.000$--$.010$ & $.0001$--$.022$ \\
SLERP (S-FLM path) & 128 & -- & $1.003$ & $.047$ & -- \\
mix $.5$ & 128 & $.99$--$1.22$ & $.007$--$.57$ & $2.0$--$16$ & $.007$--$.45$ \\
mix $.5$ & 768 & $1.00$--$1.62$ & $.000$--$.59$ & $1.6$--$77$ & $.007$--$.45$ \\
mix $.9$ & 128 & $1.00$--$2.48$ & $.000$--$.26$ & $9.2$--$98$ & $.007$--$.95$ \\
mix $.9$ & 768 & $1.02$--$10.4$ & $.005$--$1.78$ & $2.4$--$44$ & $.007$--$.95$ \\
\bottomrule
\end{tabular}
\end{table}

\begin{figure}[!htbp]
  \centering
  \includegraphics[width=\linewidth]{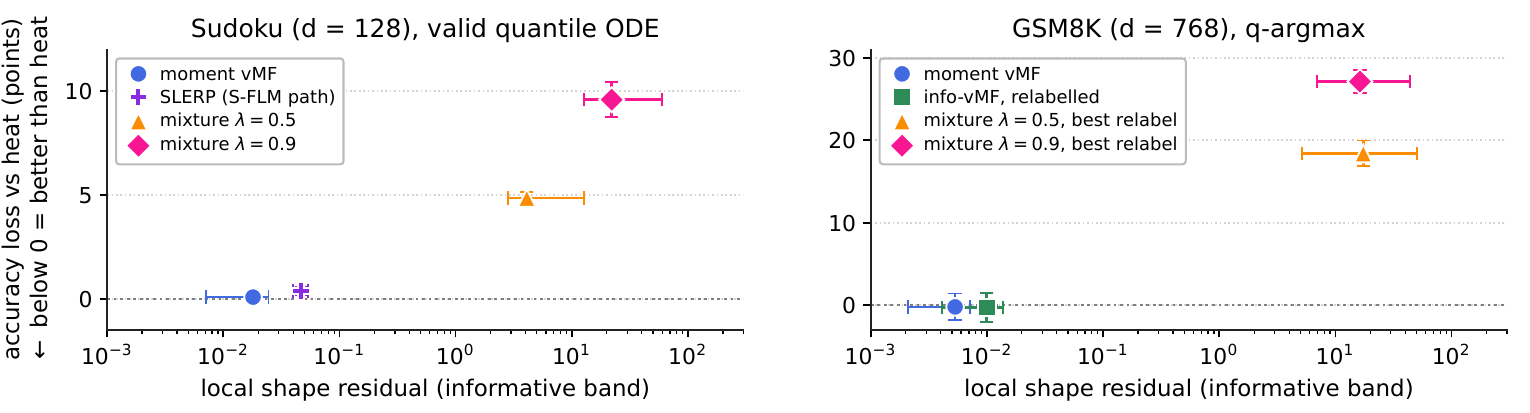}
  \caption{Mismatch cost and the local shape residual. Horizontal bars span
  the residual range over tested noise levels. Sudoku uses the valid quantile
  ODE at $B=180$ with posterior decoding (181 NFEs); only the mixture
  points use information relabelling. vMF and SLERP use their native clocks.
  GSM8K uses q-argmax. The
  residual is local in $r$, so the figure shows ordering instead of a fitted
  relationship. Negative values mean better accuracy than Heat. Differences
  as small as the vMF/SLERP costs are not resolved with three training seeds;
  this does not apply to the larger mixture losses. Mixture residuals are
  tail-sensitive and indicate orders of magnitude. SLERP uses Heat-trained
  weights with the matched geodesic quantile ODE; its native exact velocity
  and radial transport agree numerically (Appendix~\ref{app:slerp-sudoku}).}
  \label{fig:mismatch-shape-residual}
\end{figure}

Figure~\ref{fig:mismatch-shape-residual} orders the measured mismatch costs
by shape residual; Table~\ref{tab:kernel-mismatch-cost} lists the plotted
values.

\begin{table}[!htbp]
\caption{Kernel mismatch costs shown in Figure~\ref{fig:mismatch-shape-residual}
on hard Sudoku. Positive values mean worse accuracy than Heat. Costs are
paired mean $\pm$ sd over three Heat training seeds at $B=180$ with posterior
decoding (181 NFEs). Only mixtures use time relabelling; their residuals
are order-of-magnitude estimates.}
\label{tab:kernel-mismatch-cost}
\centering\small
\begin{tabular}{lrr}
\toprule Kernel & Figure residual & Cost vs. Heat (points) \\
\midrule
Moment-matched vMF & $.017$ & $+.08\pm.08$ \\
SLERP (S-FLM path) & $.047$ & $+.38\pm.20$ \\
Mixture $\lambda=.5$ & $5.5$ & $+4.8\pm.3$ \\
Mixture $\lambda=.9$ & $26.9$ & $+9.6\pm.8$ \\
\bottomrule
\end{tabular}
\end{table}

\paragraph{Dynamics and decoding.}
The misspecified $\lambda=.9$ baseline fails for two reasons. Posterior
decoding fixes the diffuse final component, while quantile transport fixes the
path; Table~\ref{tab:decoding-two-by-two} separates these effects.

\begin{table}[!htbp]
\caption{Hard-Sudoku accuracy (\%) at a sampler budget of $B=180$ network
calls (Brownian ODE: Heun, 90 steps; quantile ODE: 180 steps), mean $\pm$ sd
over three training seeds. Nearest decoding spends 180 NFEs; posterior
decoding adds one call (181 NFEs).}
\label{tab:decoding-two-by-two}
\centering\small
\begin{tabular}{llrr}
\toprule Kernel & Dynamics & Nearest & Posterior \\
\midrule
Mix $.5$ & Brownian ODE & $44.8\pm1.4$ & $55.0\pm2.4$ \\
 & quantile ODE & $52.1\pm2.3$ & $53.3\pm2.4$ \\
Mix $.9$ & Brownian ODE & $0.0\pm0.0$ & $15.3\pm3.6$ \\
 & quantile ODE & $11.2\pm0.7$ & $55.3\pm2.3$ \\
\bottomrule
\end{tabular}
\end{table}

\paragraph{Threshold tuning and information grid.}
Thresholds are selected per kernel and budget on held-out puzzles. At
$B=180$ the chosen values are $0.999995$ for Heat, $0.999999$ for
$\lambda=.5$, and $0.9999995$ for $\lambda=.9$. The information grid blends
measured information progress with log-radius progress at weight $0.15$. At
$B=180$ (181 NFEs with posterior decoding), q-argmax reaches $77.9\pm1.8$, $83.9\pm1.2$, and $83.1\pm1.9$ for
Heat, $\lambda=.5$, and $\lambda=.9$. Tuned gating reaches $82.4\pm1.6$,
$76.7\pm1.4$, and $78.6\pm2.0$. These comparisons use five independently
trained models per kernel, so the deviations measure training-seed variation.
Table~\ref{tab:five-seed-gated-kernel-effect} reports the full gated-kernel
comparison.

\begin{table}[!htbp]
\caption{Effect of the training kernel on tuned confidence-gated refinement
on hard Sudoku. Entries are accuracy differences in percentage points relative
to Heat on the information grid, mean $\pm$ standard deviation over five
independent training seeds per kernel. Parentheses give the paired
$t$ statistic. SLERP is a separate three-training-seed comparison with
paired spreads; see Table~\ref{tab:slerp-full}. Budgets are sampler calls $B$,
with one further call for posterior decoding.}
\label{tab:five-seed-gated-kernel-effect}
\centering\small
\setlength{\tabcolsep}{5pt}
\begin{tabular}{lrrrr}
\toprule
$B$ & Moment-matched vMF & SLERP & Mix $\lambda=.5$ & Mix $\lambda=.9$ \\
\midrule
16  & $+0.3\pm0.8$ & -- & $-4.7\pm2.3\;(t=-4.5)$ & $-5.0\pm4.1\;(t=-2.7)$ \\
64  & $+0.2\pm1.1$ & $+1.1\pm3.6$ & $-5.2\pm2.4\;(t=-4.8)$ & $-5.3\pm3.6\;(t=-3.3)$ \\
180 & $-0.1\pm1.0$ & $+0.1\pm3.4$ & $-5.8\pm1.8\;(t=-7.4)$ & $-3.8\pm3.0\;(t=-2.8)$ \\
\bottomrule
\end{tabular}
\end{table}

Moment-matched vMF is also within $\pm0.5$ points of Heat at every tested
budget on the log grid, including $-0.2\pm1.1$ points at $B=180$, and selects
a similar confidence threshold. On the information grid, the $B=16$
difference of $+0.3\pm0.8$ points gives an approximate two-sided $95\%$
$t$ interval of $[-0.7,1.3]$ points. We therefore describe the two channels as
practically equivalent within roughly one point at that budget, rather than as
exactly equal. The same near-zero pattern holds at $B=64$ and $B=180$, with wider
intervals. vMF's q-argmax result is $0.5$--$0.8$ points above Heat, but is only
borderline at $B=64$ ($t=2.6$) and weaker elsewhere. The evidence is insufficient to establish a vMF advantage. Together with
Proposition~\ref{prop:heat-vmf-approximation}, this shows that moment-matched vMF remains within roughly one point of Heat under gated refinement, unlike the more distant bimodal kernels.

\paragraph{Time-relabelled cross.}
After information relabelling, the valid quantile ODE loses
$4.8\pm0.3$ points for $\lambda=.5$, slightly more than its
$4.4\pm0.6$-point loss before relabelling. For $\lambda=.9$, relabelling
reduces the loss from $14.6\pm2.9$ to $9.6\pm0.8$ points. For q-argmax,
Heat weights with mixture re-noising, relabelling, and the mixture information
grid reach $82.5\pm1.3$ at $\lambda=.5$, compared with $77.9\pm1.8$ for
Heat and $83.9\pm1.2$ for the matched mixture.

\subsubsection{Brownian score-substitution baseline}
This baseline applies Brownian dynamics to substituted mixture scores and uses
nearest-codeword decoding, a natural but invalid choice for a non-Brownian
path. At $B=180$, Heat-trained SDE/ODE under
$\lambda=.9$ both solve $0.0\%$ of puzzles; q-argmax reaches $12.5\pm3.3$
and gating $80.3\pm1.9$. Increasing Heun resolution from 180 to 11,520 NFEs
does not repair the misspecified dynamics. We include these misspecified-dynamics runs as failure-mode baselines.

\paragraph{Reproducibility note.}
Evaluations use checkpoints in evaluation mode, count actual
Transformer calls, keep stochastic noise outside the drift cap (i.e. a clamp on the norm of the deterministic drift (score or velocity) step in the SDE/ODE integrators.), reseed every
sampler, and use a tuning set disjoint from training and evaluation. We also
validated the conditional CDF derivative.

\subsection{SLERP comparison}
\label{app:slerp-sudoku}

We compare full and truncated SLERP paths alongside Heat. Each run uses 20k
updates, global batch size 256, the same eight-layer backbone as the other
Sudoku arms, 2,000 evaluation puzzles, and posterior decoding. Results below
use three independent training seeds. The full SLERP path matches the Heat
information clock within $|\Delta u|\leq0.016$. We also evaluate a truncated
path ending at $r_{\min}=0.942$, where SLERP's own information is exhausted. Using the S-FLM cutoff unchanged at \(d=128\) would leave 77\% of the entropy unresolved.
Table~\ref{tab:slerp-full} gives the accuracy comparison for both paths.

\begin{table}[!htbp]
\caption{SLERP kernel comparison on hard Sudoku. Values are accuracies in
percent, mean $\pm$ sd over three independent training seeds; parenthesized
values are paired differences from Heat, also mean $\pm$ sd. All runs use the
same eight-layer backbone, 20k updates, batch 256, 2,000 evaluation puzzles,
and posterior decoding, so total NFE is $B+1$. The truncated gating
difference at $B=180$ has paired $t=-2.3$ and is not resolved with three seeds.}
\label{tab:slerp-full}
\centering\scriptsize
\setlength{\tabcolsep}{3.5pt}
\begin{tabular}{llrrr}
\toprule
Sampler & $B$ & Heat & SLERP & SLERP truncated \\
\midrule
ODE & 16 & $53.1\pm2.2$ & $55.0\pm1.6\;(+2.0\pm2.0)$ & $51.2\pm2.7\;(-1.8\pm4.8)$ \\
ODE & 180 & $54.5\pm3.2$ & $56.2\pm2.2\;(+1.8\pm3.3)$ & $51.4\pm1.9\;(-3.1\pm4.8)$ \\
q-argmax & 64 & $65.3\pm1.5$ & $67.4\pm1.3\;(+2.1\pm2.6)$ & $66.8\pm1.7\;(+1.5\pm3.2)$ \\
q-argmax & 180 & $71.5\pm2.5$ & $73.1\pm1.3\;(+1.6\pm2.5)$ & $71.9\pm1.6\;(+0.3\pm3.9)$ \\
Gating, tuned & 64 & $78.3\pm2.0$ & $79.5\pm2.1\;(+1.1\pm3.6)$ & $77.7\pm0.9\;(-0.7\pm2.7)$ \\
Gating, tuned & 180 & $81.7\pm1.1$ & $81.8\pm3.0\;(+0.1\pm3.4)$ & $78.4\pm1.4\;(-3.2\pm2.5)$ \\
\bottomrule
\end{tabular}
\end{table}

Full-path SLERP is nominally ahead of Heat in all six cells. Across both
paths, paired differences are at most 3.2 points in magnitude and are not
resolved with three training seeds.
Its information clock is close to Heat's, so it behaves like Heat and vMF
rather than like the bimodal mixtures. The matched SLERP velocity agrees with
S-FLM's exact velocity to $1.4\times10^{-7}$ per step and $4\times10^{-5}$
over a rollout. These checks use SLERP's own model. Separately, the
$+0.38\pm0.20$-point cost in Table~\ref{tab:kernel-mismatch-cost} comes from
Heat-trained checkpoints 0--2 sampled with SLERP's matched geodesic quantile
ODE, paired against the same checkpoints with Heat's valid quantile ODE.
Both arms use $B=180$ and posterior decoding (181 NFEs). This cross uses
the full SLERP path without time relabelling: $\alpha^*=1.003$ and its clock
differs from Heat by at most 0.016. Moment-matched vMF likewise uses its
native clock; only the mixture crosses use information relabelling.

\subsection{Published Sudoku baselines}
\label{app:sudoku-published-baselines}

Table~\ref{tab:sudoku-results-published} reports the published comparison
with autoregressive, discrete-diffusion, and continuous-diffusion Sudoku
baselines. The SBD row uses the gated q-argmax result at $B=180$ (181 total NFEs) from
Table~\ref{tab:sudoku-full-nfe}; its uncertainty is over five sampler seeds
on the fixed checkpoint for each difficulty.
The AR, MDLM, Duo, CANDI, FLM, and S-FLM values in the main comparison
come from their published evaluations \citep{deschenaux2026hyperspherical,
sahoo2024simple,sahoo2025diffusionduality,pynadath2025candi,
lee2026onestep}. FLM is from \citet{lee2026onestep}. The FMLM+ row uses our
five-seed evaluation
of the released checkpoint in Table~\ref{tab:fmlm-full-nfe}
\citep{agarwal2026posteriorrefinement}.

\begin{table}[!htbp]
  \captionsetup{font=footnotesize}
  \caption{Sudoku exact-match accuracy (\%) at $B=180$ sampler-call budget.
  AR, MDLM, Duo, CANDI, FLM, and S-FLM are published values. FMLM+ and SBD
  use five sampler seeds on fixed checkpoints; gated SBD uses 181 calls.
  Bold/underline mark the best/second-best non-AR value per column.}
  \label{tab:sudoku-results-published}
  \centering
  \footnotesize
  \setlength{\tabcolsep}{2.5pt}
  \begin{tabular*}{\linewidth}{@{\extracolsep{\fill}}lrrr@{}}
    \toprule
    Model & Easy & Medium & Hard \\
    \midrule
    \multicolumn{4}{l}{\textit{Autoregressive}} \\
    AR (sample) & 13.9 & 5.1 & 0.6 \\
    AR (greedy) & 14.6 & 5.1 & 1.0 \\
    \addlinespace
    \multicolumn{4}{l}{\textit{Discrete diffusion}} \\
    MDLM & 92.0 & 77.1 & 30.2 \\
    Duo & 96.3 & 84.7 & 58.4 \\
    \addlinespace
    \multicolumn{4}{l}{\textit{Continuous diffusion}} \\
    CANDI & 79.3 & 45.9 & 16.7 \\
    FLM & 94.2 & 82.7 & 44.5 \\
    S-FLM & 81.5 & 50.6 & 14.0 \\
    \quad + $\alpha^{\star}(0.1)$ & 94.0 & 77.6 & 43.2 \\
    \quad + adaptive schedule & 94.8 & 85.2 & 45.0 \\
    FMLM+ PR & $\underline{97.77\pm.13}$ & $\underline{92.99\pm.33}$ & $\underline{81.65\pm.36}$ \\
    \rowcolor{ourrowpink}
    \textbf{Ours} & $\mathbf{99.11\pm.12}$ & $\mathbf{96.66\pm.15}$ & $\mathbf{83.36\pm.42}$ \\
    \bottomrule
  \end{tabular*}
\end{table}
\FloatBarrier

\subsection{GSM8K setup and metric}
\label{app:gsm8k-protocol}

We train on TinyGSM \citep{liu2023tinygsm}, which pairs synthetic math word
problems with Python solutions generated by GPT-3.5. Evaluation uses all
1,319 problems in the GSM8K test split
\citep{cobbe2021trainingverifierssolvemath}; we do not report TinyGSM test
accuracy. We tokenize the problem and solution with the SmolLM tokenizer
and use context length 512. The problem statement is kept clean during
training and fixed during generation.

\paragraph{Executed final-answer accuracy.}
For each problem $j$, we generate one Python program and execute it to
obtain $\widehat a_j$. Let $a_j$ be the reference numerical answer and
$b_j=1$ only when execution returns the correct answer. The reported score
is
\begin{equation}
  \mathrm{Acc}_{\mathrm{GSM8K}}
  =\frac{100}{1319}\sum_{j=1}^{1319}b_j.
  \label{eq:gsm8k-answer-accuracy}
\end{equation}
Programs that do not produce the correct answer are not counted as
successful. We compare answers, not program text: different programs can
solve the same problem. This is a single-generation accuracy, not
pass@$k$ or a best-of-multiple-generations score. Parentheses in
Table~\ref{tab:gsm8k-ours-sweep} give $\sum_j b_j$, and percentages are
rounded to two decimal places.

\paragraph{Baselines and comparability.}
Dense NFE sweeps use the released 250k-update checkpoints for MDLM, Duo,
CANDI, FLM, S-FLM, and FMLM+ variants
\citep{deschenaux2026hyperspherical,agarwal2026posteriorrefinement}.
This suite matches the TinyGSM training data, backbone scale (12 layers,
hidden size 768, and 12 attention heads), and 250k model-update budget.
Training hyperparameters, including the learning rate and noise schedule,
remain method-specific; we do not claim an identical training recipe.
The main controlled curves are MDLM, Duo, S-FLM, FMLM+ trained from scratch,
and our Brownian model.

FMLM+ initialized from MDLM starts from a pretrained checkpoint, and its
distilled variant uses a pretrained MDLM teacher
\citep{agarwal2026posteriorrefinement}. These curves appear in the
enhanced comparison in Figure~\ref{fig:gsm8k-enhanced}. The 250k updates describe training of
the evaluated model and do not include the cost of producing its teacher
or initialization. IDLM also uses pretrained
MDLM or Duo teachers and initializes its student from them
\citep{li2026idlm}. We retain these strong baselines as teacher-assisted
comparisons, not as matched from-scratch or equal-total-compute runs.

IDLM and vMF results are reported values from their papers
\citep{li2026idlm,chemseddine2026spherical}, not evaluations of our models.
In Figure~\ref{fig:gsm8k-comparison}, $\dagger$ marks the paper-reported
vMF curve; IDLM appears only in Table~\ref{tab:gsm8k-results}. In our numerical tables, $\dagger$ instead indicates
pretrained assistance and $\ddagger$ indicates self-conditioning with
predictor-corrector sampling. We also report vMF PC without
self-conditioning from Table~20 of \citet{chemseddine2026spherical}.
Both PC curves use the best reported configuration at each NFE budget,
with corrector calls included in the budget; they are not new evaluations
under our sampler. Our model uses neither pretrained assistance nor
self-conditioning. AR scores use 512 sequential passes.
LatentMDM reports $20.8\%$, $33.8\%$, and $45.5\%$ with two, four, and
eight latent-segment rounds \citep{kim2026interface}. Because a segment
round is not a matched NFE, we do not include these values in
Table~\ref{tab:gsm8k-results}.
A missing NFE measurement is marked NR; we do not interpolate
missing accuracy values.

\subsection{Full GSM8K comparison}

Figure~\ref{fig:gsm8k-enhanced} and Table~\ref{tab:gsm8k-summary} extend the
controlled comparison with teacher-initialized and self-conditioned models.

\begin{figure}[!htbp]
  \centering
  \includegraphics[width=0.75\linewidth]{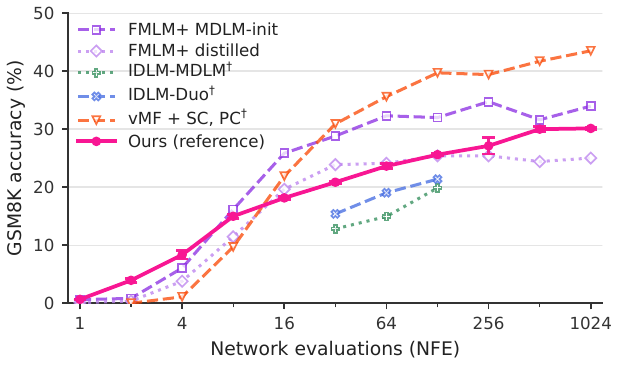}
  \caption{Enhanced GSM8K comparisons using teacher initialization or
  self-conditioning. Our model uses neither; the controlled comparison is in
  Figure~\ref{fig:gsm8k-comparison}.}
  \label{fig:gsm8k-enhanced}
\end{figure}

\begin{table}[!htbp]
  \caption{GSM8K accuracy (\%) at the full reported budget.
  Diffusion/flow models use the reported 1,024-call budget; decoding regimes
  differ: MDLM/Duo use $T=1$, SBD uses q-argmax, and S-FLM top-1 and FMLM+
  use deterministic token selection. vMF uses its published PC setting.
  SBD is the mean over three inference seeds; AR uses 512 sequential passes.
  Bold/underline mark the best/second-best non-AR
  result. $\dagger$: pretrained assistance; $\ddagger$: self-conditioning
  with predictor-corrector sampling. See Appendix~\ref{app:gsm8k-protocol}
  for provenance and comparability.}
  \label{tab:gsm8k-summary}
  \centering
  \small
  \setlength{\tabcolsep}{10pt}
  \begin{tabular}{lr}
    \toprule
    Model & Accuracy \\
    \midrule
    \multicolumn{2}{l}{\textit{Autoregressive}} \\
    AR (sample) & 53.9 \\
    AR (greedy) & 63.3 \\
    \addlinespace
    \multicolumn{2}{l}{\textit{Discrete diffusion}} \\
    MDLM & 16.91 \\
    Duo & 18.95 \\
    \addlinespace
    \multicolumn{2}{l}{\textit{Continuous diffusion and flows}} \\
    CANDI & 0.15 \\
    FLM & 0.53 \\
    S-FLM (exact) & 11.98 \\
    S-FLM (top-$1$) & 16.53 \\
    FMLM+ (scratch) & 20.32 \\
    FMLM+ (MDLM init.)$^{\dagger}$ & \underline{33.97} \\
    FMLM+ (distilled)$^{\dagger}$ & 25.02 \\
    vMF (PC, no SC) & 33.0 \\
    vMF+SC (PC)$^{\ddagger}$ & \textbf{43.5} \\
    \rowcolor{ourrowpink}
    \textbf{Ours: q-argmax} & 30.12 \\
    \bottomrule
  \end{tabular}
\end{table}

Table~\ref{tab:gsm8k-summary} summarizes the full-budget comparison;
Table~\ref{tab:gsm8k-results} gives the broader baseline ladder and the
three-seed SBD q-argmax values used in Figure~\ref{fig:gsm8k-comparison}.
The matched-temperature subset is in Table~\ref{tab:gsm8k-matched-temperature},
and the SBD memoryless curve in Table~\ref{tab:gsm8k-temperature-ladder}.
We report the complete NFE ladder when a
released checkpoint is available and use NR for values that were not
reported.

\begin{table}[!htbp]
  \caption{GSM8K final-answer accuracy (\%) across network-evaluation budgets.
  Dense rows are evaluations of released 250k-update checkpoints, excluding
  teacher/init pretraining costs. vMF PC (with/without SC) and IDLM values
  are taken from their papers. SBD q-argmax reports a mean and a separate
  standard-deviation row over three inference seeds; other dense rows are
  single runs. $\dagger$: pretrained teacher or
  initialization; $\ddagger$: self-conditioning and PC.
  NR means that a value
  is not reported.  AR uses 512 sequential passes.  Bold marks the best
  non-autoregressive result at each budget.}
  \label{tab:gsm8k-results}
  \centering
  \scriptsize
  \setlength{\tabcolsep}{1.8pt}
  \begin{tabular}{l*{11}{r}}
    \toprule
    Model & 1 & 2 & 4 & 8 & 16 & 32 & 64 & 128 & 256 & 512 & 1024 \\
    \midrule
    \multicolumn{12}{l}{\textit{Autoregressive}} \\
    AR (sample) & NR & NR & NR & NR & NR & NR & NR & NR & NR & 53.9 & NR \\
    AR (greedy) & NR & NR & NR & NR & NR & NR & NR & NR & NR & 63.3 & NR \\
    \addlinespace
    \multicolumn{12}{l}{\textit{Discrete diffusion and distillation}} \\
    MDLM & 0.00 & 0.00 & 0.00 & 0.23 & 1.82 & 5.69 & 10.01 & 13.57 & 15.09 & 14.94 & 16.91 \\
    Duo & 0.00 & 0.38 & 1.67 & 3.64 & 10.01 & 13.87 & 16.60 & 15.39 & 18.50 & 18.27 & 18.95 \\
    IDLM-MDLM$^{\dagger}$ & NR & NR & NR & NR & NR & 12.81 & 14.94 & 19.86 & NR & NR & NR \\
    IDLM-Duo$^{\dagger}$ & NR & NR & NR & NR & NR & 15.39 & 19.03 & 21.38 & NR & NR & NR \\
    \addlinespace
    \multicolumn{12}{l}{\textit{Continuous diffusion and flows}} \\
    CANDI (lr $10^{-3}$) & 0.00 & 0.00 & 0.00 & 0.00 & 0.00 & 0.00 & 0.00 & 0.08 & 0.08 & 0.23 & 0.15 \\
    FLM (default) & 0.00 & 0.00 & 0.00 & 0.00 & 0.00 & 0.00 & 0.08 & 0.15 & 0.23 & 0.45 & 0.53 \\
    S-FLM (exact) & \textbf{0.83} & 2.58 & 4.70 & 7.51 & 9.25 & 10.31 & 11.22 & 11.22 & 11.68 & 11.75 & 11.98 \\
    S-FLM (top-$1$) & \textbf{0.83} & 3.34 & 8.04 & 11.22 & 15.47 & 16.45 & 16.07 & 16.07 & 16.60 & 16.45 & 16.53 \\
    FMLM+ (scratch) & 0.15 & 0.15 & 3.87 & 8.19 & 14.48 & 19.26 & 20.92 & 21.68 & 20.62 & 21.99 & 20.32 \\
    FMLM+ (MDLM init.)$^{\dagger}$ & 0.61 & 0.83 & 6.07 & \textbf{16.15} & \textbf{25.85} & 28.81 & 32.30 & 31.99 & 34.72 & 31.61 & 33.97 \\
    FMLM+ (distilled)$^{\dagger}$ & 0.08 & 0.38 & 3.79 & 11.45 & 19.64 & 23.88 & 24.11 & 25.40 & 25.40 & 24.41 & 25.02 \\
    vMF (PC, no SC) & NR & 0.0 & 0.7 & 6.2 & 15.5 & 21.8 & 26.9 & 27.4 & 31.8 & 32.4 & 33.0 \\
    vMF+SC (PC)$^{\ddagger}$ & NR & 0.0 & 1.1 & 9.7 & 21.9 & \textbf{30.9} & \textbf{35.6} & \textbf{39.7} & \textbf{39.4} & \textbf{41.7} & \textbf{43.5} \\
    \addlinespace
    \multicolumn{12}{l}{\emph{Ours}} \\
    Ours: q-argmax, mean & 0.66 & \textbf{3.94} & \textbf{8.34} & 14.99 & 18.15 & 20.85 & 23.63 & 25.60 & 27.12 & 30.00 & 30.12 \\
    Ours: standard deviation & 0.04 & 0.35 & 0.72 & 0.34 & 0.23 & 0.20 & 0.46 & 0.22 & 1.41 & 0.42 & 0.16 \\
    \bottomrule
  \end{tabular}
\end{table}
\FloatBarrier

\subsection{S-FLM S-arch q-argmax transfer}
\label{app:sflm-qargmax}

We also apply q-argmax to the released S-FLM S-arch checkpoint, using SLERP
re-noising at the model's own time labels. The full NFE curve is shown in
Figure~\ref{fig:gsm8k-comparison}; at higher budgets this refinement roughly
doubles GSM8K accuracy relative to S-FLM top-1 decoding. S-FLM and SBD differ
in schedule, truncation, embedding handling, and batch size, so this transfer
does not isolate a kernel effect.

Table~\ref{tab:sarch-qargmax-ladder} gives the full values. This is one
evaluation seed, so it shows transfer feasibility instead of seed-averaged
comparative performance.
\begin{table}[!htbp]
  \caption{GSM8K accuracy of q-argmax on the released S-FLM S-arch checkpoint,
  using SLERP re-noising at the model's own time labels. One evaluation seed.}
  \label{tab:sarch-qargmax-ladder}
  \centering\small
  \begin{tabular}{r|rrrrrrrrrrr}
    \toprule
    NFE & 1 & 2 & 4 & 8 & 16 & 32 & 64 & 128 & 256 & 512 & 1024 \\
    \midrule
    Accuracy (\%) & 0.8 & 3.3 & 8.2 & 14.9 & 20.5 & 23.2 & 26.2 & 28.7 & 29.3 & 30.5 & 31.2 \\
    \bottomrule
  \end{tabular}
\end{table}

\subsection{GSM8K kernel-consistency study}
\label{app:gsm8k-heat-vmf}

We evaluate all 1,319 GSM8K test problems with EMA weights, context length
512, an entropy-spaced grid, and three sampling seeds. The primary readout is
q-argmax with posterior decoding. Quantile-flow and Brownian ODE results are
included as checks, although both continuous samplers are weak on this task.

\paragraph{Kernel cross.}
Moment-matched vMF remains close to Heat, while the mixture mismatch destroys
program accuracy even under valid transport. The information-matched vMF is
a pure time-change control: querying the Heat network at the equal-cosine
time removes its low-budget cost (Table~\ref{tab:gsm8k-kernel-cross-new}).

\begin{table}[!htbp]
\caption{GSM8K accuracy (\%), mean $\pm$ standard deviation over three
inference seeds. Each row names training weights and inference kernel.
Budgets are sampler calls $B$; posterior decoding adds one call, giving
65 and 257 total NFEs.}
\label{tab:gsm8k-kernel-cross-new}
\centering\scriptsize
\setlength{\tabcolsep}{3pt}
\begin{tabular}{lrrrr}
\toprule
Weights $\to$ kernel & \multicolumn{2}{c}{q-argmax} & \multicolumn{2}{c}{quantile ODE} \\
 & $B=64$ & $B=256$ & $B=64$ & $B=256$ \\
\midrule
Heat $\to$ Heat & $23.63\pm.46$ & $27.12\pm1.41$ & $6.92\pm.61$ & $7.56\pm.61$ \\
Heat $\to$ moment vMF & $22.74\pm.08$ & $27.32\pm.24$ & $6.85\pm.69$ & $7.48\pm.69$ \\
Heat $\to$ mixture $.5$ & $0.05\pm.09$ & $0.03\pm.04$ & $0.40\pm.16$ & $0.45\pm.13$ \\
Heat $\to$ mixture $.9$ & $0.00$ & $0.00$ & $0.00$ & $0.00$ \\
Info-vMF $\to$ info-vMF & $23.05\pm.85$ & $27.24\pm.76$ & $7.15\pm.16$ & $7.43\pm.27$ \\
Heat $\to$ info-vMF & $19.81\pm.42$ & $26.59\pm.42$ & $5.94\pm.36$ & $6.85\pm.56$ \\
Heat $\to$ info-vMF, relabelled & $23.10\pm.27$ & $27.39\pm.42$ & $6.87\pm.78$ & $7.43\pm.67$ \\
\bottomrule
\end{tabular}
\end{table}

\paragraph{Kernel and training comparison.}
Table~\ref{tab:heat-vmf-resolution} compares Heat-trained weights sampled
with Heat against information-vMF-trained weights sampled with their own
kernel. It changes the training channel and inference kernel while leaving the Heat table resolution unchanged. At $B=32$ the paired difference is $-1.67\pm0.92$ points
($t=-3.1$, $p\simeq0.09$); at $B=256$ it is $+0.13$ points.

\begin{table}[!htbp]
  \caption{GSM8K kernel/training comparison: Heat-trained Heat inference
  versus information-vMF-trained information-vMF inference. Values are means
  over three inference seeds; differences are info-vMF minus Heat. Budgets
  are sampler calls $B$, before a separate posterior decode.}
  \label{tab:heat-vmf-resolution}
  \centering
  \small
  \setlength{\tabcolsep}{7pt}
  \begin{tabular}{rrrr}
    \toprule
    $B$ & Heat $\to$ Heat & info-vMF $\to$ info-vMF & Difference \\
    \midrule
    1   & 0.66 & 0.96 & $+0.30$ \\
    2   & 3.94 & 4.20 & $+0.25$ \\
    4   & 8.34 & 8.82 & $+0.48$ \\
    8   & 14.99 & 14.25 & $-0.73$ \\
    16  & 18.15 & 18.12 & $-0.03$ \\
    32  & 20.85 & 19.18 & $-1.67\pm0.92$ ($t=-3.1$, $p\simeq0.09$) \\
    64  & 23.63 & 23.05 & $-0.58$ \\
    256 & 27.12 & 27.24 & $+0.13$ \\
    \bottomrule
  \end{tabular}
\end{table}

\paragraph{Denoising probe and relabel maps.}
We build the relabel maps on 512 TinyGSM validation examples that are disjoint
from both the GSM8K test set and the 128-example mechanism probe. For each
mixture time, the map chooses the monotone Heat-network time label with the
highest clean-token accuracy. This gives an upper bound on what a scalar time
change can repair; the probe measurements are in
Table~\ref{tab:gsm8k-denoising-probe}.

\begin{table}[!htbp]
\caption{Code-token probe: one-step top-1 accuracy of the Heat posterior on the
9,345 program tokens in 128 TinyGSM validation answers, excluding PAD.
Unqualified columns use the raw time label; ``best'' chooses the best tested
Heat-network time label.}
\label{tab:gsm8k-denoising-probe}
\centering\scriptsize
\begin{tabular}{rrrrrrr}
\toprule
$r$ & Heat & moment vMF & mix $.5$ & mix $.5$, best & mix $.9$ & mix $.9$, best \\
\midrule
.2 & 1.000 & 1.000 & .996 & .999 & .828 & .971 \\
.5 & .999 & .999 & .994 & .998 & .620 & .945 \\
1.0 & .996 & .996 & .949 & .988 & .381 & .880 \\
1.6 & .984 & .983 & .860 & .960 & .213 & .758 \\
2.5 & .936 & .936 & .814 & .883 & .153 & .481 \\
\bottomrule
\end{tabular}
\end{table}

The best-case map partly repairs the $.5$ mixture but cannot repair $.9$.
At $B=256$ with posterior decoding (257 NFEs), q-argmax moves from $0.03\%$
to $8.72\pm0.60\%$ for $.5$, while $.9$ stays at $0.00\%$; the Heat reference is
$27.12\%$. The mixture
tables match their target means, the conditional-law and intermediate-marginal checks pass at $d=768$, and generated
sequences show the same posterior-level corruption.

\paragraph{Why program accuracy amplifies small token errors.}
The full-answer probe uses the same 128 TinyGSM validation answers, mixture corruption,
and raw time labels as the code-token probe, but counts the full answer region
(code plus PAD, about 459 positions per answer). PAD accounts for about
80\% of these positions and is predicted correctly more than 99\% of the
time, so these per-token accuracies are not directly comparable with
Table~\ref{tab:gsm8k-denoising-probe}. In the full-answer probe, Heat at $r=.5$ has
full-answer-region token accuracy $.9999$ and exact
sequence accuracy $.969$. The $.5$ mixture at $r=1.0$ has per-token accuracy
$.9917$ but only $.250$ exact accuracy, while the $.9$ mixture at $r=.1$ has
per-token accuracy $.9792$ and zero exact accuracy. The
mixture-to-Heat success ratio by program length is reported in
Table~\ref{tab:gsm8k-length-new}.

\begin{table}[!htbp]
\caption{GSM8K success by generated-program length at $B=256$ with
posterior decoding (257 NFEs). The same
1,319 problems define both columns; values average three inference seeds.}
\label{tab:gsm8k-length-new}
\centering\scriptsize
\begin{tabular}{rrrrr}
\toprule Length range (mean) & $n$ & Heat & mix $.5$, relabelled & ratio \\
\midrule
67--108 (96) & 259 & 44.3 & 21.0 & .47 \\
108--125 (116) & 258 & 40.8 & 13.3 & .33 \\
125--142 (132) & 265 & 26.7 & 6.2 & .23 \\
142--164 (152) & 266 & 17.8 & 3.5 & .20 \\
164--290 (189) & 271 & 7.3 & .25 & .03 \\
\bottomrule
\end{tabular}
\end{table}

The success ratio decreases monotonically with length, with correlation
$-0.96$ between length and log ratio. Here the \emph{independent-token model}
assumes that the mismatch independently preserves each generated token with a
length-independent probability $p$; for a program of length $L$, it therefore
predicts a mixture-to-Heat success ratio proportional to $p^L$. The
per-token probability in this diagnostic is measured over the full-answer probe's full
answer region, including PAD, instead of the code-token probe's code-only mask.
The observed decay is steeper, consistent with context-mediated error
propagation. The exponential curve in Figure~\ref{fig:gsm8k-consistency}
is a descriptive fit; it is not a causal test of this explanation.

\paragraph{Matched fine-tuning.}
We fine-tune the same Heat weights for 25k updates on either Heat or the $.5$
mixture, then sample each model with its own kernel. Table~\ref{tab:gsm8k-finetune-new}
reports the results; each arm uses one training seed, and the standard
deviation is over three inference seeds.

\begin{table}[!htbp]
\caption{Matched GSM8K fine-tunes (\%), mean $\pm$ sd over three inference
seeds, with one training seed per arm. The main comparison uses posterior
decoding at $B=256$ (257 NFEs); nearest-codeword controls use $B$ total NFEs.
The posterior-decoded paired mixture-minus-Heat difference is
$-1.09\pm1.80$ points.}
\label{tab:gsm8k-finetune-new}
\centering\small
\begin{tabular}{lr}
\toprule Posterior decoding, EMA & $B=256$ (257 NFEs) \\
\midrule
Heat fine-tune $\to$ Heat & $27.32\pm.52$ \\
Mix $.5$ fine-tune $\to$ mix $.5$ & $26.23\pm1.30$ \\
\bottomrule
\end{tabular}
\par\smallskip
\begin{tabular}{lrr}
\toprule Nearest-codeword decoding & 64 NFEs & 256 NFEs \\
\midrule
Heat fine-tune $\to$ Heat, EMA & $23.83\pm1.00$ & $27.32\pm.52$ \\
Mix $.5$ fine-tune $\to$ mix $.5$, EMA & $21.10\pm.49$ & $26.03\pm1.22$ \\
Heat fine-tune $\to$ Heat, no EMA & $21.53\pm.27$ & $26.00\pm.50$ \\
Mix $.5$ fine-tune $\to$ mix $.5$, no EMA & $19.16\pm.24$ & $23.02\pm1.37$ \\
\bottomrule
\end{tabular}
\end{table}

This shows that restoring train--inference kernel consistency can recover
most of the mismatch loss, but does not determine the performance of a
mixture model trained from scratch. The nearest-decoded EMA comparison gives
a paired gap of $-1.29\pm1.71$ points at 256 NFEs. With posterior decoding,
the gap is $-1.09\pm1.80$ at 257 NFEs; the three seed differences are
$+0.5$, $-3.0$, and $-0.8$ points. The residual gap is not resolved by these
three inference seeds.
\FloatBarrier

\subsection{GSM8K checkpoint sweep}

Figure~\ref{fig:gsm8k-ours-ablations} summarizes the checkpoint comparison;
Table~\ref{tab:gsm8k-ours-sweep} reports all available NFE measurements for
the five checkpoints used in our GSM8K ablation.  Each percentage is computed
on the 1,319-example GSM8K test set; the number of correct programs is shown in
parentheses.  Entropy sampling is strongest from 16 to 256 NFEs, whereas the
online adaptive configuration is strongest at 1 to 8 and 512 to 1,024 NFEs.

\begin{figure}[!htbp]
  \centering
  \includegraphics[width=0.75\linewidth]{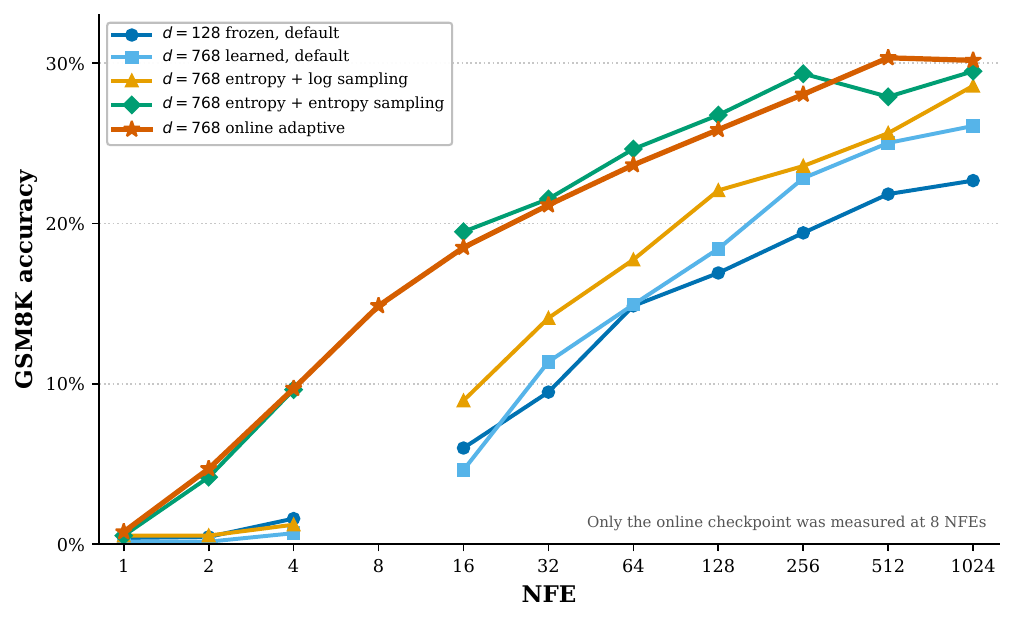}
  \caption{Our GSM8K checkpoint and sampling ablations. All configurations
  use plain q-argmax without Gumbel perturbations, score updates,
  self-conditioning, or pretrained assistance. Missing NFE measurements
  are omitted; Table~\ref{tab:gsm8k-ours-sweep} gives the numerical results.}
  \label{fig:gsm8k-ours-ablations}
\end{figure}

\begin{table}[!htbp]
  \caption{GSM8K final-answer accuracy (\%) across NFE budgets for our
  TinyGSM-trained checkpoints. Parentheses give the number of correct examples
  out of 1,319. Bold marks the best result at each budget; NR means not reported.}
  \label{tab:gsm8k-ours-sweep}
  \centering
  \scriptsize
  \setlength{\tabcolsep}{5pt}
  \begin{tabular}{rrrrrr}
    \toprule
    NFE & \shortstack{$d=128$ frozen\\default}
        & \shortstack{$d=768$ learned\\default}
        & \shortstack{$d=768$ entropy\\+ log sampling}
        & \shortstack{$d=768$ entropy\\+ entropy sampling}
        & \shortstack{$d=768$ online\\adaptive} \\
    \midrule
       1 &  0.38 (5)   &  0.15 (2)   &  0.53 (7)   &  0.53 (7)   & \textbf{0.76 (10)} \\
       2 &  0.45 (6)   &  0.15 (2)   &  0.53 (7)   &  4.17 (55)  & \textbf{4.70 (62)} \\
       4 &  1.59 (21)  &  0.68 (9)   &  1.21 (16)  &  9.63 (127) & \textbf{9.70 (128)} \\
       8 & NR          & NR          & NR          & NR          & \textbf{14.86} \\
      16 &  5.99 (79)  &  4.62 (61)  &  8.95 (118) & \textbf{19.48 (257)} & 18.50 (244) \\
      32 &  9.48 (125) & 11.37 (150) & 14.10 (186) & \textbf{21.53 (284)} & 21.15 (279) \\
      64 & 14.86 (196) & 14.94 (197) & 17.74 (234) & \textbf{24.64 (325)} & 23.65 (312) \\
     128 & 16.91 (223) & 18.42 (243) & 22.06 (291) & \textbf{26.76 (353)} & 25.85 (341) \\
     256 & 19.41 (256) & 22.82 (301) & 23.58 (311) & \textbf{29.34 (387)} & 28.05 (370) \\
     512 & 21.83 (288) & 25.02 (330) & 25.63 (338) & 27.90 (368) & \textbf{30.33 (400)} \\
    1024 & 22.67 (299) & 26.08 (344) & 28.58 (377) & 29.49 (389) & \textbf{30.17 (398)} \\
    \bottomrule
  \end{tabular}
\end{table}

\paragraph{Full-test continuous-sampler sweep.}
Table~\ref{tab:gsm8k-continuous-sweep} reports ODE and Heun SDE accuracy
on all 1,319 GSM8K test questions. ODE is stronger at 32 and 64 NFEs,
while Heun SDE is stronger from 128 NFEs onward. Both achieve their
highest reported accuracy at 256 NFEs, with ODE matching that value at
512 NFEs. Increasing the budget beyond 256 gives no further gain in
this sweep. These full-test measurements are separate from the
generated-trajectory diagnostics in
Table~\ref{tab:commitment-generated-diagnostics}, which use 64 examples
per condition and separately record a final diagnostic call.

\begin{table}[!htbp]
  \caption{GSM8K executed-answer accuracy (\%) for continuous samplers on
  the full 1,319-question test set. Budgets are reported NFEs. Bold marks
  the better sampler at each budget. This table is separate from the
  checkpoint and generated-trajectory ablations.}
  \label{tab:gsm8k-continuous-sweep}
  \centering
  \small
  \setlength{\tabcolsep}{14pt}
  \begin{tabular}{rrr}
    \toprule
    NFE & ODE & Heun SDE \\
    \midrule
      32 & \textbf{3.64} & 0.00 \\
      64 & \textbf{7.81} & 7.20 \\
     128 & 7.96 & \textbf{10.08} \\
     256 & 8.34 & \textbf{12.21} \\
     512 & 8.34 & \textbf{11.60} \\
    1024 & 8.26 & \textbf{11.98} \\
    \bottomrule
  \end{tabular}
\end{table}

\FloatBarrier
\subsection{OpenWebText evaluation}
\label{app:owt-protocol}

Appendix~\ref{app:owt-generated-samples} gives unedited generated excerpts
at four target entropy levels and six NFE budgets.

\paragraph{Generation settings.}
Our reported OWT model uses a freely learned $d=768$ codebook and the
online adaptive training schedule, with 1M updates.
For each method and NFE budget, we sweep the same 15 whole-trajectory
temperatures, $T\in\{0.50,0.55,\ldots,1.20\}$. Apart from temperature,
each method's sampler settings remain fixed throughout the sweep.
Temperature is applied at every sampling
step: to the categorical transition for MDLM and Duo, and to the posterior
weights used by S-FLM and our mixture score.  We use
$\mathrm{NFE}\in\{32,64,128,256,512,1024\}$. Our model and the MDLM,
Duo, and S-FLM models shown as temperature sweeps use 1M training updates.
The main paper shows 1,024 NFEs;
Figure~\ref{fig:owt-all-budgets} gives all six budgets.
The FLM, FMLM, and FMLM+ markers are our evaluations of released
checkpoints with their native samplers, not paper-reported numbers or
temperature sweeps. We do not treat them as matched-training-budget
comparisons. For FMLM+, we retain four posterior-refinement rounds,
$p=0.999$, fresh noise, and an equal step budget per refinement round. Only the 128-NFE
point uses the authors' released OWT configuration: 32 inner flow steps
per round, for $32\times4=128$ evaluations. At the other budgets,
$\dagger$ marks our budget-matched extrapolation of the inner step count
to $\mathrm{NFE}/4$, including $256\times4$ at 1,024 NFEs. These values correspond to directly measured operating points, with perplexity and entropy evaluated at those settings.

\paragraph{Hybrid sampler.}
For our $d=768$ model, we use an entropy grid and initialize from the uniform
distribution. Above $r=2.064065$, we use stochastic Heun score updates. Below this
switch, $k$ controls the q-projection frequency: $k=1$ projects at every
step, $k=2$ every second step, and $k=4$ every fourth step. Intervening
steps use continuous stochastic score/SDE updates. At a fixed NFE budget, increasing $k$
therefore allocates fewer updates to q-projection and more to the score
sampler. Here $k$ is a cadence, not top-$k$ posterior truncation.
The default is $k=1$; at 1,024 NFEs we also evaluate $k=2$ and $k=4$
with the same temperature sweep and all other sampler settings fixed.

In the Gumbel configuration, q-projection uses the perturbed posterior
without the additional commitment gate (the confidence gating defined in Section \ref{sec:spherical-brownian-diffusion}).
One fixed Gumbel noise tensor is reused at every q-projection. Its effective
strength is
\begin{equation}
  \gamma(r)=0.25\,
  \frac{\log(r/r_{\min})}{\log(r_{\max}/r_{\min})}.
  \label{eq:owt-gumbel-strength}
\end{equation}
Thus $0.25$ is a base multiplier, not a final temperature: the strength is
$0.25$ at $r_{\max}$ and decreases toward zero at $r_{\min}$. Since q-projection begins below the switch, its first effective strength is already smaller than $0.25$. Whole-trajectory temperature is swept while this schedule and the other settings remain fixed. Budgets are expressed in network evaluations, not in grid points; each Heun update requires two evaluations.

\paragraph{Generative perplexity.}
We use the sample-level quality and diversity metrics of partition
generative modeling \citep{deschenaux2025partitiongenerativemodelingmasked}.
The main temperature sweeps and isolated reference points use $N=512$
sequences per setting. The plotted $k=2$ and $k=4$ variants instead use
64 sequences, with 32 at each of seeds 1 and 2; the plotted $k=1$ curve
uses 512, with 64 at each of seeds 1 to 8. We also evaluate $k=1$
with the same 64-sample protocol as $k=2/4$ for the controlled
temperature comparison below. All sequences have length 1,024 and are
scored with the same pretrained GPT-2 Large reference model and revision
\citep{radford2019language}, using its tokenizer. For a sample
$\mathbf x$, we compute
\begin{equation}
  \mathrm{PPL}_{\mathrm{gen}}(\mathbf x)
  =\exp\!\left(-\frac{1}{m}\sum_{i=1}^{m}
  \log p_{\mathrm{GPT2\text{-}L}}(x_i\mid x_{<i})\right),
  \label{eq:gen-ppl}
\end{equation}
where $m$ is the number of scored tokens before the first end-of-text
marker. Tokens at and after that marker are excluded from the score.
This measures the reference model likelihood assigned to generated text. Lower values indicate text that the
reference model finds more likely, but can also result from repetitive
outputs.

\paragraph{Unigram entropy.}
Within each sample, let $c(v,\mathbf x)$ count occurrences of token $v$ and
$L_{\mathbf x}$ be the number of tokens used in the entropy calculation.
The empirical frequencies are
$\widehat p_{\mathbf x}(v)=c(v,\mathbf x)/L_{\mathbf x}$. We compute
\begin{equation}
  H(\mathbf x)=-\sum_{v\in\mathcal V}
  \widehat p_{\mathbf x}(v)\log\widehat p_{\mathbf x}(v).
  \label{eq:unigram-entropy}
\end{equation}
We take $0\log0=0$ and use natural logarithms, so entropy is measured in
nats. Repeated use of a small set of tokens gives low entropy; a broader
token distribution gives higher entropy. This is a within-sample token
diversity measure, not the entropy of the learned posterior or a measure of
semantic diversity across the batch.

\paragraph{Averaging and frontier interpretation.}
Each temperature contributes one point obtained by averaging the two
metrics separately across samples:
\begin{equation}
  \overline H=\frac1N\sum_{n=1}^N H(\mathbf x^{(n)}),
  \qquad
  \overline{\mathrm{PPL}}_{\mathrm{gen}}
  =\frac1N\sum_{n=1}^N
  \mathrm{PPL}_{\mathrm{gen}}(\mathbf x^{(n)}).
  \label{eq:owt-metric-averaging}
\end{equation}
We average per-sample perplexities after exponentiation; this differs from
exponentiating the pooled mean negative log-likelihood. We likewise
average per-sample unigram entropies rather than pooling token counts
across samples.

The plotted curves retain only measured Pareto-nondominated points:
a point is excluded if another measured point has at least as much entropy
and no larger perplexity, with one strict improvement. We connect the
retained points without fitting a curve. The temperature-matched ablation
uses all 15 settings before this filtering.

For each NFE budget, the temperature sweep gives a quality-diversity
curve. At a shared entropy, lower generative perplexity is better.
Perplexity alone can favor collapsed samples, while entropy alone does
not guarantee coherent text; the paired metrics expose this trade-off.
We compare only the entropy range reached by both methods and do not
extrapolate a curve beyond its measured range. The sweep gives an empirical frontier under the stated sampler settings, without asserting global optimality across all possible samplers.

\paragraph{Results across sampling budgets.}
Figure~\ref{fig:owt-all-budgets} gives the complete NFE sweep.
At 128 NFEs, our default $k=1$ sampler has lower perplexity than S-FLM
at low and moderate entropy, but the advantage disappears at the
high-entropy end. At 1,024 NFEs, it has lower perplexity throughout
their shared entropy range. MDLM and Duo generally achieve lower
perplexity at matched entropy. Increasing NFE mainly improves our
low- and moderate-entropy performance, while the default sampler's
high-entropy coverage remains limited. The isolated FLM, FMLM, and
FMLM+ reference points are not full temperature sweeps, so we do not
infer their frontiers from these markers.

\paragraph{Effect of q-update frequency.}
The controlled $k$ ablation uses 64 samples per temperature and variant,
with 32 samples at each of seeds 1 and 2. It holds the model, nominal
NFE budget, noise switch, grid construction rule, Gumbel schedule,
whole-trajectory temperature, and reference-model revision fixed,
changing the allocation between q-updates and continuous stochastic score
updates below the switch. Entropy and perplexity both increase from
$k=1$ to $k=2$ to $k=4$ at every tested temperature.
This operating-point shift should be distinguished from a comparison at
matched entropy, which can pair different temperatures: in the common
entropy range, $k=1$ generally has lower perplexity than $k=2$ and $k=4$.
Larger $k$ extends the attained entropy range, so no $k=1$ comparison is
available beyond its measured endpoint. Budgets refer to network calls,
not grid points: $k=1$ uses 443 two-call score updates and 138 one-call
q-updates, $k=2$ uses 479 and 65, and $k=4$ uses 496 and 32.
The realised NFEs are therefore 1,024, 1,023, and 1,024, respectively;
$k=2$ leaves one evaluation unused because of the two-call Heun updates.
Table~\ref{tab:owt-k-matched-temperature} gives representative temperatures
from the matched-64-sample data, separate from the 512-sample $k=1$
curve in Figure~\ref{fig:owt-all-budgets}. These aggregate means do not provide error bars
or a statistical significance test.

\begin{table}[!htbp]
  \caption{Temperature-matched OWT $k$ ablation. Each entry uses 64 samples
  (two seeds, 32 each) at the same nominal 1,024-NFE budget; realised NFEs
  are 1,024 / 1,023 / 1,024 for $k=1/2/4$. $H$ is mean unigram entropy
  in nats and PPL is mean generative perplexity.}
  \label{tab:owt-k-matched-temperature}
  \centering
  \small
  \setlength{\tabcolsep}{7pt}
  \begin{tabular}{rrrrrrr}
    \toprule
    & \multicolumn{2}{c}{$k=1$} & \multicolumn{2}{c}{$k=2$}
      & \multicolumn{2}{c}{$k=4$} \\
    \cmidrule(lr){2-3}\cmidrule(lr){4-5}\cmidrule(lr){6-7}
    $T$ & $H$ & PPL & $H$ & PPL & $H$ & PPL \\
    \midrule
    0.80 & 4.56 & 16.32 & 4.61 & 20.82 & 4.78 & 26.13 \\
    1.00 & 5.18 & 35.12 & 5.24 & 43.58 & 5.36 & 59.42 \\
    1.20 & 5.36 & 65.10 & 5.57 & 136.87 & 5.75 & 230.57 \\
    \bottomrule
  \end{tabular}
\end{table}

\begin{figure}[p]
  \centering
  \includegraphics[width=0.495\linewidth]{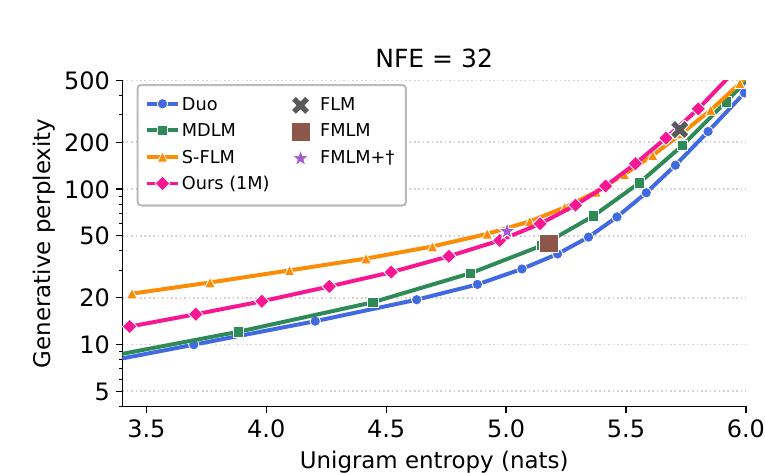}\hfill
  \includegraphics[width=0.495\linewidth]{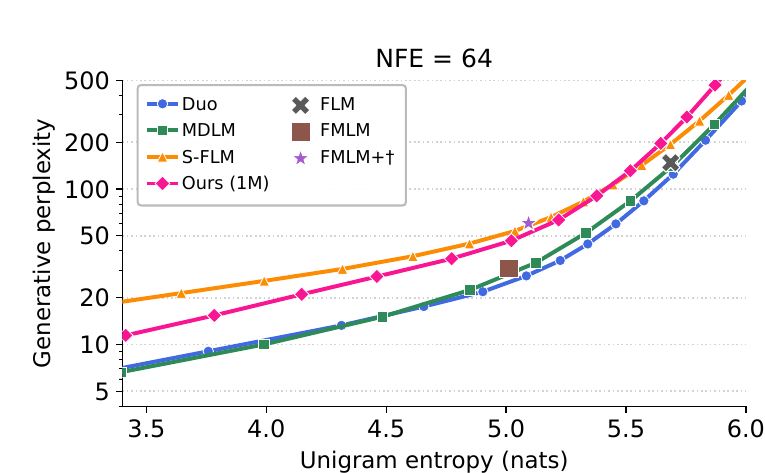}\\[0.5em]
  \includegraphics[width=0.495\linewidth]{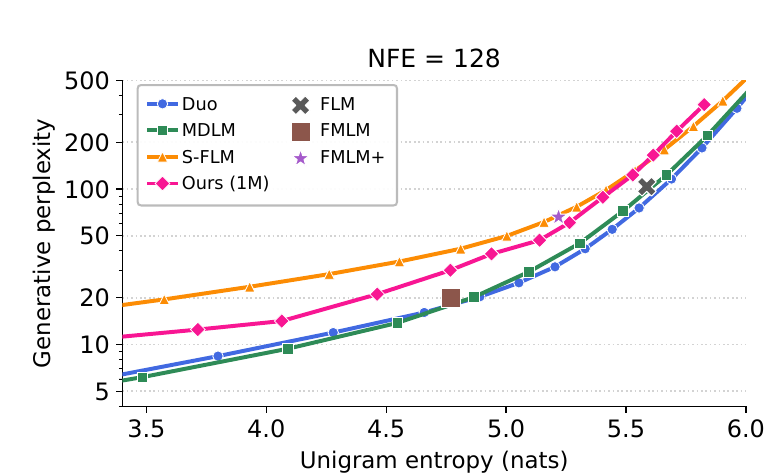}\hfill
  \includegraphics[width=0.495\linewidth]{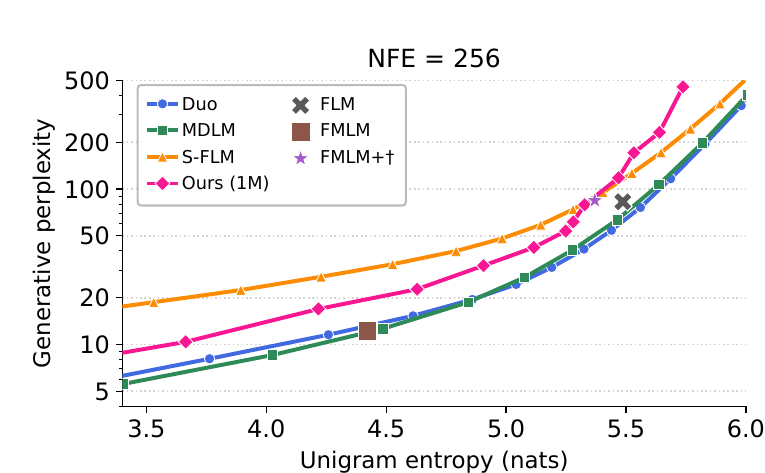}\\[0.5em]
  \includegraphics[width=0.495\linewidth]{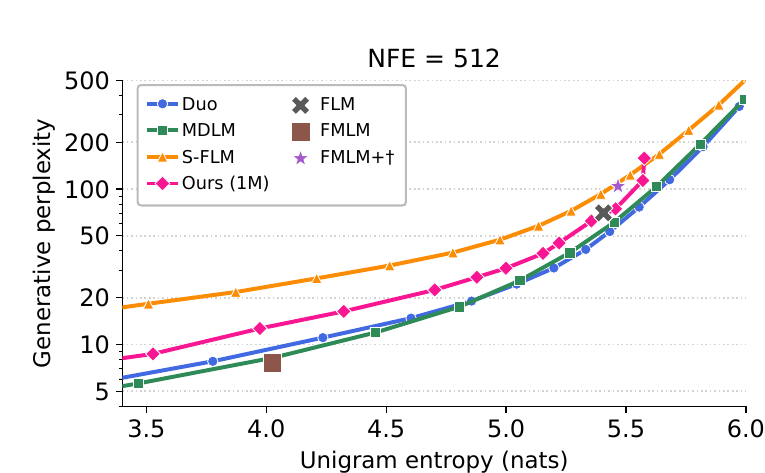}\hfill
  \includegraphics[width=0.495\linewidth]{output/pdf/owt_frontier_nfe1024.pdf}
  \caption{OpenWebText frontiers at all six budgets. Our 1M-update model
  uses $k=1$ unless noted. Curves show measured Pareto-filtered points
  from 15 temperatures, with 512 samples per point except $k=2/4$ (64).
  Reference markers use 512 samples; $k=2$ realises 1,023 evaluations.
  FMLM+ $\dagger$ denotes our budget-matched extrapolation of the inner
  flow-step count with four refinement rounds ($256\times4$ at 1,024);
  only $32\times4=128$ is the released OWT configuration.
  Lower perplexity is better at matched entropy.}
  \label{fig:owt-all-budgets}
\end{figure}
\FloatBarrier

\Needspace{28\baselineskip}
\paragraph{Evolution during training.}
Figure~\ref{fig:owt-checkpoint-progress} compares our OWT frontiers after
50k, 100k, 500k, and 1M training updates at 256 and 1,024 NFEs.
\citet{pynadath2026generativefrontiersevaluationmatters} report that
50k-update frontiers can already closely resemble those at 1M updates
for MDLM, Duo, and CANDI. Our early checkpoints also produce a clear
quality-diversity trade-off, but later training lowers perplexity at
matched entropy, particularly in the intermediate- and high-entropy
range. Most of the visible improvement occurs by 500k updates, while
the 500k and 1M curves are close in both panels. 

\begin{figure}[!htbp]
  \centering
  \includegraphics[width=\linewidth]{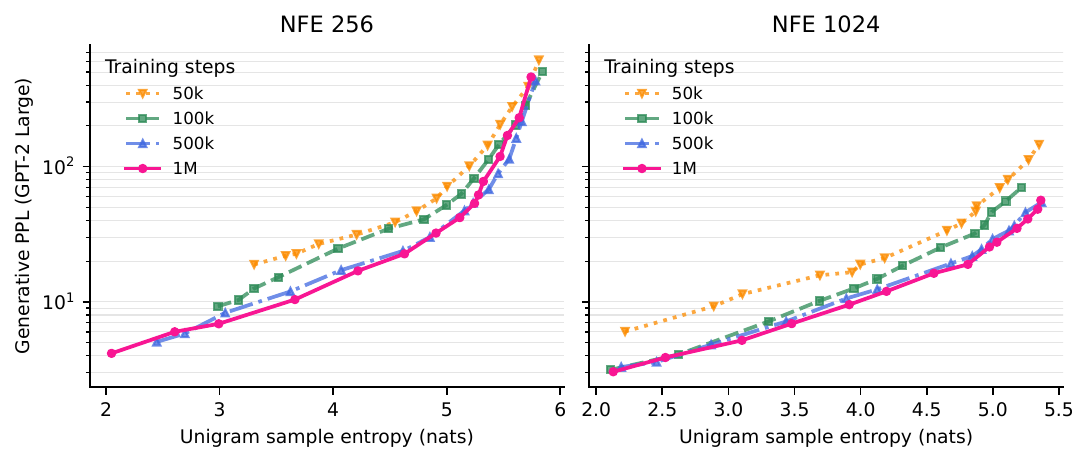}
  \caption{Evolution of the OWT generative frontier during training.
  Curves compare checkpoints at 50k, 100k, 500k, and 1M updates,
  with 256 NFEs (left) and 1,024 NFEs (right). This 128-sample checkpoint
  diagnostic uses GPT-2 Large to measure generative perplexity;
  lower perplexity is better at matched unigram entropy. The 500k
  and 1M curves are close, although both improve over the early
  checkpoints across much of the shared entropy range. Sample counts
  differ from the main 512-sample frontier evaluation.}
  \label{fig:owt-checkpoint-progress}
\end{figure}
\FloatBarrier

\subsection{Discrete commitment across tasks}
\label{app:commitment-diagnostics}

We distinguish posterior diagnostics on forward-noised reference examples
from measurements along actual generation trajectories. The former test
denoising states obtained from known clean sequences; the latter include
states induced by the sampler's own predictions. Their statistics need not
agree. These additional diagnostic runs complement the main benchmark results and the OWT temperature sweeps.
\paragraph{Diagnostic setup.}
Table~\ref{tab:commitment-diagnostic-setup} identifies the checkpoints
and grids. Language checkpoints use exponential-moving-average (EMA)
weights. Generated-trajectory diagnostics use 64 examples per condition
and sampling seeds 1 and 2. Forward readiness uses 128 validation
examples, two noise seeds, and 41 radii; no forward GSM8K probe is run.
OWT score temperatures are
$T\in\{0.60,0.75,0.90,1.05,1.20\}$.

\begin{table}[!htbp]
  \caption{Setup for the additional commitment diagnostics. These runs
  are separate from the matched-NFE Sudoku accuracy sweep.}
  \label{tab:commitment-diagnostic-setup}
  \centering
  \small
  \setlength{\tabcolsep}{4pt}
  \begin{tabular}{@{}p{0.24\linewidth}p{0.39\linewidth}p{0.32\linewidth}@{}}
    \toprule
    Task & Checkpoint & Trajectory settings \\
    \midrule
    Sudoku easy/medium & 8-layer DiT, embedding 512; $d=768$, learned-online,
      20k updates
      & $T=1$, log-radius grid \\
    TinyGSM / GSM8K & Online 250k checkpoint, EMA
      & Length 512, $T=1$, entropy grid \\
    OWT & Free-online 1M checkpoint, EMA
      & Length 1,024, entropy grid; score-temperature sweep \\
    \bottomrule
  \end{tabular}
\end{table}

\paragraph{Posterior diagnostics.}
At position $\ell$, write $q_k^\ell(v)=q_\theta^\ell(v\mid\mathbf z_k,r_k)$.
The token-posterior entropy, confidence indicator, and adjacent-state MAP
stability indicator are
\begin{equation}
  \begin{aligned}
    H_q^\ell(k)&=-\sum_v q_k^\ell(v)\log q_k^\ell(v),\\
    C_{q,c}^\ell(k)&=\mathbf 1\!\left[\max_v q_k^\ell(v)>c\right],\\
    S_q^\ell(k)&=\mathbf 1\!\left[
      \arg\max_v q_k^\ell(v)=\arg\max_v q_{k+1}^\ell(v)\right].
  \end{aligned}
  \label{eq:commitment-readiness-metrics}
\end{equation}
Entropy uses natural logarithms. We record confidence thresholds
$c\in\{0.9,0.99,0.999\}$ and show $c=0.99$ in
Figure~\ref{fig:language-onpolicy-readiness}. Confidence and stability rates summarize
the corresponding indicators over the evaluated positions and examples.
Stability measures agreement between token hypotheses, not agreement with
the correct answer. Posterior entropy also differs from the unigram
entropy of generated text (Equation~\ref{eq:unigram-entropy}). Prompt
positions are excluded throughout. The primary masks contain unknown
Sudoku digits, reference non-PAD TinyGSM answer positions, all GSM8K
answer slots including PAD, and all OWT positions. A separate GSM8K
diagnostic restricts to predicted non-PAD positions. Because the masks differ, their aggregate confidence statistics answer different questions. 
Adjacent-state stability compares predictions at the sampler's actual
successive radii. The q-argmax and Heun grids have different radius
increments at the matched language-call budget; both radii are recorded.
Their stability values are therefore not directly matched over a common
noise interval. Comparing stability across samplers therefore requires matched radius increments.

\paragraph{Diagnostic call accounting.}
Language trajectories use 256 sampling calls: 256 q-argmax steps,
128 two-call Heun SDE steps, or 111 Heun steps plus 34 q-argmax steps
for the native OWT hybrid ($2\times111+34=256$).
Each run also records one separate final
diagnostic call, giving 257 network evaluations in total. The reasoning
comparison below is therefore matched in both sampling and total call
counts. Sudoku trajectory diagnostics instead use the same 90-step noise
grid for both samplers, requiring 90 q-argmax calls versus 180 Heun calls.
These diagnostics compare posterior behavior along a shared grid; matched-NFE accuracy is reported separately in Table~\ref{tab:sudoku-full-nfe}.

\Needspace{4\baselineskip}
\paragraph{Forward-noised reference examples.}
Table~\ref{tab:commitment-forward-readiness} reports diagnostics near
$r=3.04$. Sudoku posteriors are almost fully resolved, TinyGSM is
intermediate, and OWT retains much greater uncertainty. These results describe readiness for a single projection; repeated-projection behavior is examined below.

\begin{table}[!htbp]
  \caption{Posterior diagnostics on forward-noised reference examples near
  $r=3.04$. Entropy is in nats; confidence and adjacent-noise MAP stability
  are percentages. Measurements are rounded.}
  \label{tab:commitment-forward-readiness}
  \centering
  \small
  \setlength{\tabcolsep}{8pt}
  \begin{tabular}{lrrr}
    \toprule
    Task & Posterior entropy & $\max q>0.99$ & MAP stability \\
    \midrule
    Sudoku easy & $\approx0$ & 100 & 100 \\
    Sudoku medium & $\approx0$ & 99.99 & 100 \\
    TinyGSM & 0.20 & 79.4 & 97.2 \\
    OWT & 5.61 & 0.51 & 55.3 \\
    \bottomrule
  \end{tabular}
\end{table}

\paragraph{Generated trajectories.}
The fixed-checkpoint Sudoku sweep in
Table~\ref{tab:sudoku-full-nfe} shows that inference procedure changes
accuracy at matched NFE. On hard puzzles, increasing the budget from
16 to 180 NFEs gives gains of 10.74 points for q-argmax and 8.56 points for
gated q-argmax. Nevertheless, trajectory
diagnostics near $r=3.15$ find that approximately 98\% to 99.5\% of unknown
Sudoku cells have $\max q>0.99$ for both q-argmax and Heun SDE, with
almost perfectly stable adjacent-state MAP predictions. These readiness traces use easy and medium puzzles with a learned \(d=768\), 20k-update checkpoint; the accuracy sweep uses the frozen \(d=128\) checkpoint.
At these measured noise levels on the shared 90-step grid, the SDE is
not simply uncertain while q-argmax is sharp. Because this diagnostic uses a different checkpoint and unequal call counts, it cannot explain the matched-NFE accuracy gap by itself.
The samplers act differently on their
predictions: q-argmax repeatedly selects a codeword and re-noises it,
whereas the SDE evolves a continuous state with the soft mixture score.

Table~\ref{tab:commitment-generated-diagnostics} gives the separate
reasoning and OWT comparisons at 256 sampling calls plus one diagnostic
call. For TinyGSM, q-argmax has
lower posterior entropy, a larger high-confidence fraction, and higher
accuracy; its accuracy advantage also appears on GSM8K. The pattern is consistent with a refinement effect, although confidence does not directly measure content-token correctness. TinyGSM's primary
statistics exclude reference PAD. GSM8K's PAD-inclusive confidence must
instead be interpreted alongside its separate predicted-non-PAD diagnostic.

\begin{table}[!htbp]
  \caption{Additional generated-trajectory diagnostics with 256 sampling
  calls plus one separate final diagnostic call (257 total evaluations).
  Accuracy and high-confidence fractions are percentages; both entropy
  metrics use nats but measure different quantities. These runs are
  separate from the main checkpoint sweeps.}
  \label{tab:commitment-generated-diagnostics}
  \centering
  \small
  \setlength{\tabcolsep}{8pt}
  \begin{tabular}{llrr}
    \toprule
    Task & Metric & q-argmax & Heun SDE \\
    \midrule
    TinyGSM & Posterior entropy & 0.12 & 0.32 \\
            & Positions with $\max q>0.99$ & 83.6 & 70.9 \\
            & Executed-answer accuracy & 39.1 & 20.3 \\
    \addlinespace
    GSM8K & Executed-answer accuracy & 24.2 & 12.5 \\
    \addlinespace
    OWT & Sample unigram entropy & $\approx0.014$ & 5.31 \\
    \bottomrule
  \end{tabular}
\end{table}

For OWT, pure q-argmax can become stable and confident while producing
almost constant text. Its near-zero sample unigram entropy is evidence
of within-sample token-diversity collapse, not successful generation.
The hybrid's continuous stochastic phase provides an alternative to
early repeated MAP commitment. The late-stage $k$ ablation in
Table~\ref{tab:owt-k-matched-temperature} separately shows that reducing
projection frequency increases both entropy and perplexity at fixed
temperature. The ablation therefore establishes only the trade-off among the tested projection frequencies.
\Needspace{5\baselineskip}
\paragraph{Irreversible freezing and padding.}
A separate GSM8K diagnostic uses a 512-token context and
1,024 NFEs, covering all 1,319 questions with sampling seeds 3 and 4.
Naive permanent confidence freezing solves 0 of 1,319 problems.
It is distinct from the 64-example trajectory diagnostics above, although
it uses the same context length as the main reasoning experiments.
One possible failure mechanism is premature PAD commitment:
confident PAD predictions can freeze positions that later need to become
content, removing the revision channel. Sudoku has no corresponding
variable-length output decision: every solution cell must contain a digit.

\paragraph{Interpretation.}
Our working explanation separates stable token hypotheses from the
desirability of mode-seeking. Constrained evaluation rewards recovery of
a correct answer, whereas unconditional text evaluation also requires
diversity. High confidence can accompany an incorrect token, premature PAD, or repetitive output, making it insufficient for irreversible commitment.

\begin{figure}[!htbp]
  \centering
  \includegraphics[width=\linewidth]{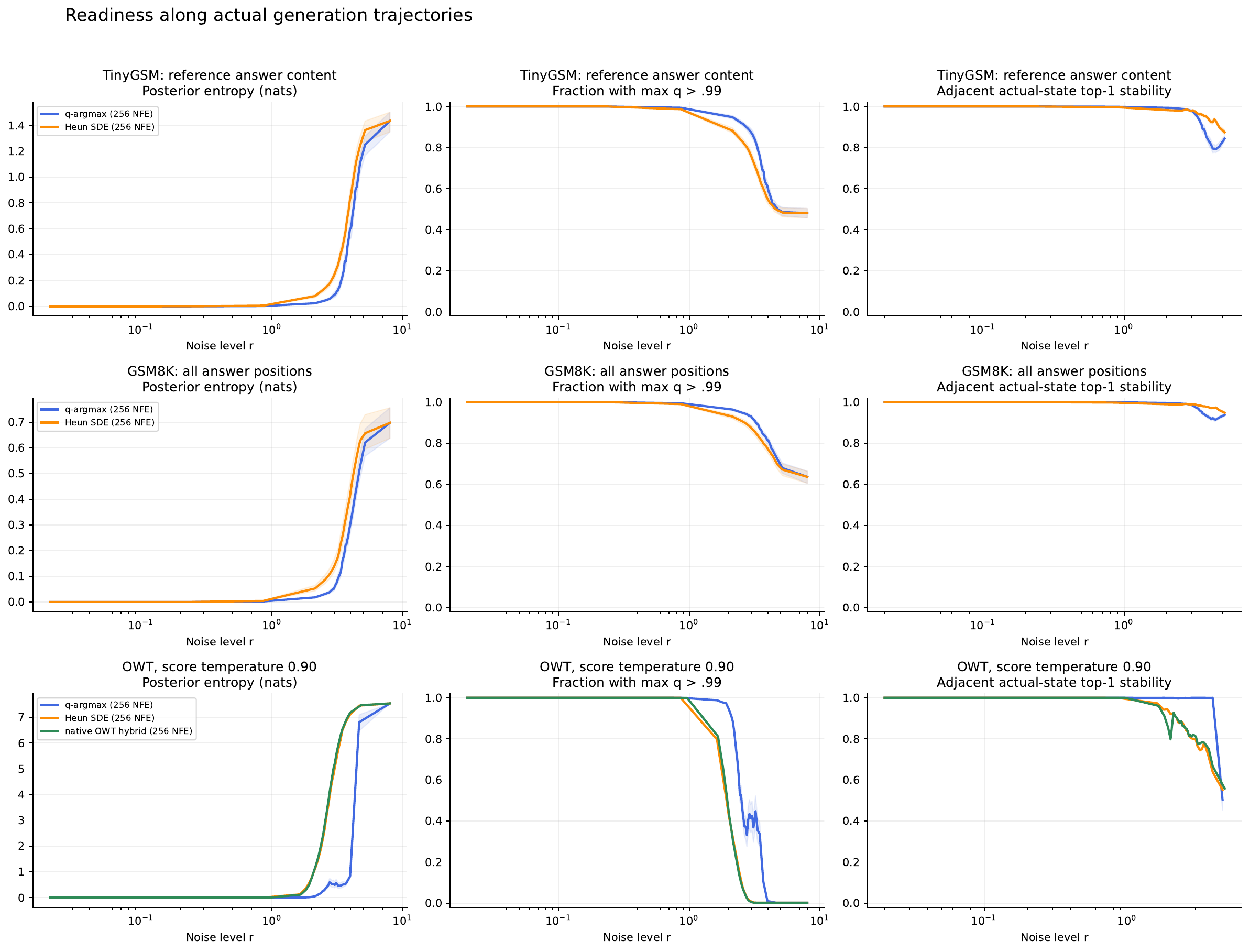}
  \caption{Posterior readiness along generated language trajectories.
  Rows: TinyGSM reference non-PAD answer positions, GSM8K answer slots
  including PAD, and OWT at score temperature $T=0.90$.
  Columns: posterior entropy, the fraction with $\max q>0.99$, and
  adjacent-state MAP stability. Blue: q-argmax; orange: Heun SDE;
  green: the native OWT hybrid. Each method uses 256 sampling calls
  plus one separate final diagnostic call. The horizontal axis is noise
  level $r$; generation proceeds from right to left. Stability uses
  different adjacent-radius increments across samplers and is not a
  common-interval comparison. GSM8K confidence includes PAD and should
  not be read as content-only certainty.}
  \label{fig:language-onpolicy-readiness}
\end{figure}

\subsection{Learned codebook geometry}
\label{app:codebook-geometry}
\label{sec:experiments:codebook-geometry}

\paragraph{Checkpoint and controls.}
This is a separate OWT diagnostic with $d=128$, not the $d=768$,
1M-update model used for the generative frontier. We analyze the free
learned codebook at 150k updates,
with anchor weight zero and separation weight one. Noise levels are drawn log-uniformly for the first 100k updates and from the entropy-based schedule from 100k to 150k. The controls are a frozen max-min packing
and a random spherical codebook at the same dimension. We compute all
diagnostics with the same evaluation code.

\begin{figure}[!htbp]
  \centering
  \includegraphics[width=\linewidth]{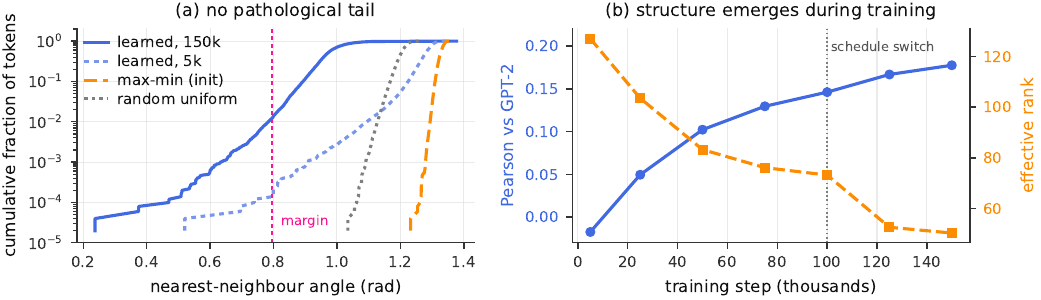}
  \caption{Geometry of the learned spherical codebook ($d=128$).
  \textbf{(a)} Empirical CDF of nearest-neighbor (NN) angles at 5k and
  150k updates, compared with frozen max-min and random uniform controls.
  Only 1.2\% fall below the separation margin
  $\arccos(0.7)\simeq0.7954$ radians.
  \textbf{(b)} GPT-2 cosine alignment increases while centered effective
  rank (participation ratio) decreases. The dashed line marks the
  log-to-entropy training-schedule switch at 100k; changes after it do
  not isolate the schedule's effect. This is a single-run diagnostic.}
  \label{fig:codebook-geometry}
\end{figure}

\paragraph{Geometry and alignment metrics.}
For each unit codeword, the nearest-neighbor angle is
\begin{equation}
  \theta_v=\arccos\!\left(
    \max_{w\neq v}\langle\mathbf e_v,\mathbf e_w\rangle\right).
  \label{eq:codebook-nn-angle}
\end{equation}
We summarize its median, first percentile, and minimum over the vocabulary.
Effective rank is the participation ratio of the centered covariance
spectrum: if $\lambda_i$ are its eigenvalues,
\begin{equation}
  r_{\mathrm{eff}}=
  \frac{(\sum_i\lambda_i)^2}{\sum_i\lambda_i^2}.
  \label{eq:codebook-effective-rank}
\end{equation}
Centering matters: at 150k, the uncentered value is 44.04, compared
with 50.38 after centering (50.4 when rounded).

Table~\ref{tab:codebook-geometry} reports the codebook diagnostics. The
primary Pearson alignment metric correlates codebook and GPT-2
pairwise cosine similarities on two million uniformly sampled pairs
over the full vocabulary. Only Pearson uses this pair sample. Top-10
overlap is evaluated over all 50,257 tokens shared with GPT-2, without
token sampling: it is the fraction whose nearest codebook neighbor also
appears among their ten nearest GPT-2 neighbors, using cosine similarity.
Leading-space agreement
measures whether a token and its nearest codebook neighbor share the
same leading-space form. This token-form diagnostic gives 55.0\% for both controls; semantic agreement is evaluated separately, and no analytical chance level is assumed.
\begin{table}[!htbp]
  \caption{OWT codebook diagnostics ($d=128$, 150k updates).
  Angles are in radians; rank is centered. Above-margin counts use NN
  cosine $>0.7$, or angle $<\arccos(0.7)\simeq0.7954$.
  Top-10 overlap uses all 50,257 shared tokens.}
  \label{tab:codebook-geometry}
  \centering
  \small
  \setlength{\tabcolsep}{6pt}
  \begin{tabular}{lrrr}
    \toprule
    Metric & Learned & Max-min & Random \\
    \midrule
    NN angle: median & 0.970 & 1.327 & 1.202 \\
    NN angle: first percentile & 0.784 & 1.294 & 1.128 \\
    NN angle: minimum & 0.237 & 1.232 & 1.035 \\
    Codewords with NN cosine $>0.7$ & 624 & 0 & 0 \\
    Centered effective rank & 50.4 & 127.9 & 127.7 \\
    Mean pairwise cosine & 0.041 & 0.00005 & $-0.00006$ \\
    Pearson alignment with GPT-2 & 0.176 & 0.0003 & 0.0013 \\
    NN in GPT-2 top ten (\%) & 21.4 & 0.01 & 0.02 \\
    Leading-space agreement (\%) & 95.3 & 55.0 & 55.0 \\
    \bottomrule
  \end{tabular}
\end{table}

\paragraph{Organization and separation.}
The codebook becomes denser and anisotropic, while most codewords remain
well separated: the first-percentile nearest-neighbor angle is 0.784
radians. Only 624 of 50,258 codewords (1.2\%) have NN cosine above
0.7, or angle below $\arccos(0.7)\simeq0.7954$ radians. This is
the positive-argument region of the softplus regularizer in
Equation~\ref{eq:codebook-regularizer}, not the fraction with any nonzero
loss: softplus also gives nonzero penalties below the margin.
Median angular displacement from the initial max-min anchor is
63.7 degrees. Greater displacement is associated with smaller final
nearest-neighbor angles (Spearman correlation $-0.764$).
Frequency rank, with larger ranks denoting rarer tokens, correlates
positively with nearest-neighbor angle (0.735): frequent tokens occupy
denser regions. The above-margin codewords have median
frequency rank 1,015, compared with 25,128 over the full vocabulary.
These correlations suggest denser codebook allocation around frequent tokens, but are observational.

\paragraph{Emergence during training.}
Table~\ref{tab:codebook-trajectory} separates the training trajectory
from the endpoint comparison. Between 5k and 100k, before the noise-draw
change, effective rank decreases from 127.0 to 73.2 while Pearson
alignment rises from $-0.017$ to 0.146 and top-10 overlap from 0.8\%
to 21.1\%. At 5k, alignment and token-form diagnostics remain much
closer to the controls than at later checkpoints. Further changes after 100k
confound additional training with the switch to entropy-based noise
draws. The pre-switch trajectory already shows progressive organization.

\begin{table}[!htbp]
  \caption{Codebook organization through training in one OWT run.
  The noise draw changes from log to entropy at 100k; comparisons across
  that boundary do not isolate the effect of additional training.
  Angles are in radians, and overlap and form agreement are percentages.}
  \label{tab:codebook-trajectory}
  \centering
  \small
  \setlength{\tabcolsep}{6pt}
  \begin{tabular}{rrrrrr}
    \toprule
    Updates & NN median & Eff. rank & Pearson
      & \shortstack{GPT-2\\top-10 (\%)}
      & \shortstack{Space\\agreement (\%)} \\
    \midrule
    5k & 1.279 & 127.0 & $-0.017$ & 0.8 & 57.8 \\
    25k & 1.160 & 103.5 & 0.050 & 6.7 & 79.1 \\
    50k & 1.082 & 83.3 & 0.102 & 14.5 & 85.2 \\
    75k & 1.055 & 76.2 & 0.130 & 18.9 & 86.0 \\
    100k & 1.044 & 73.2 & 0.146 & 21.1 & 86.2 \\
    \addlinespace
    125k & 0.970 & 52.6 & 0.167 & 21.3 & 94.8 \\
    150k & 0.970 & 50.4 & 0.176 & 21.4 & 95.3 \\
    \bottomrule
  \end{tabular}
\end{table}

\paragraph{Sensitivity to pair sampling.}
Pearson alignment depends on which token pairs are evaluated
(Table~\ref{tab:codebook-pair-sampling}). We use uniform sampling over
the full vocabulary as the primary reported metric. Frequency-weighted and restricted-vocabulary results serve as descriptive sensitivity checks alongside the primary headline metrics. The rise in
alignment among frequent tokens is consistent with the density analysis.

\begin{table}[!htbp]
  \caption{Sensitivity of GPT-2 pairwise cosine alignment to pair sampling
  for the 150k learned codebook. Uniform full-vocabulary sampling is the
  primary metric; restricted subsets are diagnostic comparisons.}
  \label{tab:codebook-pair-sampling}
  \centering
  \small
  \setlength{\tabcolsep}{8pt}
  \begin{tabular}{lr}
    \toprule
    Pair sampling & Pearson \\
    \midrule
    Uniform, full vocabulary & 0.176 \\
    Frequency-weighted by unigram prior & 0.210 \\
    Uniform, 20,000 most frequent tokens & 0.277 \\
    Uniform, 8,000 most frequent tokens & 0.382 \\
    Uniform, 2,000 most frequent tokens & 0.463 \\
    \bottomrule
  \end{tabular}
\end{table}

This single-run analysis documents organization of the learned codebook; its effect on generation quality remains unisolated.
Figure~\ref{fig:codebook-geometry} presents the empirical NN-angle CDFs
and the alignment/rank trajectory.
\FloatBarrier

\subsection{Numerical validation of the heat kernel}
\label{app:kernel-validation}

These diagnostics evaluate the conditional corruption channel independently of a vocabulary or trained denoiser. They assess numerical consistency and the asymptotic analysis, while downstream language-model accuracy is evaluated separately.

\paragraph{Clock and semigroup.}
For $d\in\{32,128,768\}$, we estimate the retained mean cosine from forward
draws and compare it with $e^{-r}$. Relative errors are reported only when
the target is at least ten Monte Carlo standard errors above zero;
otherwise the small denominator makes relative error misleading.
This is a consistency check because the clock is defined by
$r=-\log\rho$.

For the semigroup diagnostic, we compare a direct draw at $r_1+r_2$ with
two successive corruptions at $r_1$ and $r_2$. Each draw set contains
400k samples. We compare the empirical distributions of the final cosine
relative to the initial codeword using the two-sample
Kolmogorov-Smirnov distance,
\begin{equation}
  D_{\mathrm{KS}}
  =\sup_c\left|\widehat F_{\mathrm{direct}}(c)
  -\widehat F_{\mathrm{sequential}}(c)\right|.
  \label{eq:semigroup-ks-distance}
\end{equation}
A null control compares two independent direct draw sets at the same
radius and sample count. This estimates the finite-sample floor: a small
distance is not evidence of exact agreement. Across the three dimensions
and seven tested radius pairs, the distance is at most $0.023$, but the
worst discrepancies exceed the null floor. We therefore distinguish the
exact analytic heat semigroup from its approximate numerical sampler.

\paragraph{Moment-matched heat-vMF comparison.}
We use the validated Heat tables described in
Appendix~\ref{app:implementation-details}, at
$d\in\{16,32,64,128,256,512,768,1024\}$. At each operating point, the vMF
concentration is chosen to match the heat law's mean cosine, using
deterministic quadrature on the same cosine grid as the error integrals.
This avoids using a Bessel-ratio approximation outside its stable range.
The coordinate densities include the spherical volume factor
$(1-c^2)^{(d-3)/2}$ and are normalized before integration.

Let $f_H(c)$ and $f_V(c)$ be these cosine densities and
$a_H(c)=\partial_c\log K^H_{d,r}(c)$. We measure the RMS relative radial-score
error and the conditional channel KL:
\begin{align}
  R_{\mathrm{score}}
  &=\left[\int_{-1}^{1}f_H(c)
  \left(\frac{a_H(c)}{\kappa^\star}-1\right)^2\,\mathrm dc\right]^{1/2},
  \label{eq:numerical-score-rms}\\
  \KL(H\Vert V)
  &=\int_{-1}^{1}f_H(c)\log\frac{f_H(c)}{f_V(c)}\,\mathrm dc.
  \label{eq:numerical-channel-kl}
\end{align}
The main scaling sweep follows $r_d=(\log d+\xi)/2$ for
$\xi\in\{0,1,2\}$ and compares
$\widehat m_{\mathrm{score}}=R_{\mathrm{score}}/t_d$ and
$\widehat m_{\mathrm{KL}}=(4\KL/t_d^2)^{1/4}$ with
$m=e^{-\xi/2}$. RMS is not divided by the Gaussian mean-absolute conversion
factor; that factor applies only when starting from a mean-absolute error.
We also report the fixed-$r$ diagnostic separately because it is a different
asymptotic regime. These are codebook-free conditional errors, not posterior
KL under a learned vocabulary.

Figure~\ref{fig:heat-vmf-critical-scaling} gives the numerical validation of
Proposition~\ref{prop:heat-vmf-approximation}: across eight dimensions and
three critical-noise paths, both coefficient estimates approach the predicted
target, with finite-dimensional discrepancies shrinking slowly.
\begin{figure}[t]
  \centering
  \includegraphics[width=0.95\linewidth]{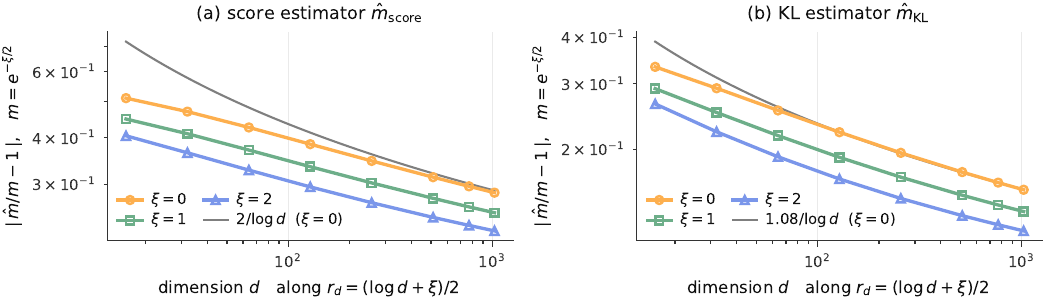}
  \caption{Numerical validation of Proposition~\ref{prop:heat-vmf-approximation}
  along $r_d=(\log d+\xi)/2$ for $\xi\in\{0,1,2\}$ and $d=16$ to $1024$.
  The score- and KL-based coefficient estimates approach the unfitted target
  $m=e^{-\xi/2}$; gray curves show $1/\log d$ guides.}
  \label{fig:heat-vmf-critical-scaling}
\end{figure}

\section{Technical estimates for the heat-vMF proof}
\label{app:heat-vmf-technical}

This appendix supplies the analytic details used in
Appendix~\ref{app:heat-vmf-reduction}. We keep the same notation and fix
$\xi$ throughout. The bounds below justify differentiation and the
likelihood Taylor expansion; the numerical sweep is not used in the proof.

\subsection{Coordinate moments and concentration}
\label{app:coordinate-moments}

In the dimension-free clock $r=(d-1)t/2$, the rescaled cosine obeys
\begin{equation}
  \mathrm dU_r=-U_r\,\mathrm dr
  +\sqrt{\frac{2(d-U_r^2)}{d-1}}\,\mathrm dB_r,
  \qquad U_0=\sqrt d.
  \label{eq:appendix-coordinate-sde}
\end{equation}
At $r=r_d$, variation of constants gives
\begin{equation}
  U_{r_d}=m+
  \int_0^{r_d}e^{-(r_d-s)}
  \sqrt{\frac{2(d-U_s^2)}{d-1}}\,\mathrm dB_s.
  \label{eq:appendix-coordinate-martingale}
\end{equation}
The martingale has quadratic variation at most $d/(d-1)$, so
\begin{equation}
  \Pr_H(|U-m|\geq x)
  \leq2\exp\!\left[-\frac{d-1}{2d}x^2\right].
  \label{eq:appendix-heat-subgaussian-tail}
\end{equation}
The generator applied to $u^j$ also gives the exact recursion
\begin{equation}
  \frac{\mathrm d}{\mathrm dr}\E[U_r^j]
  =-\frac{j(d+j-2)}{d-1}\E[U_r^j]
   +\frac{j(j-1)d}{d-1}\E[U_r^{j-2}].
  \label{eq:appendix-moment-recursion}
\end{equation}
Induction in $j$ yields bounded moments of every fixed order.  Solving the
recursion explicitly for $j\leq4$ also gives
\begin{equation}
  \Var_H[\operatorname{He}_2(U-m)]
  =2+O(t_d+d^{-1}).
  \label{eq:appendix-hermite-variance}
\end{equation}

The vMF coordinate density is uniformly strongly log-concave.  Indeed, its
negative log-density has
\begin{equation}
  W_d''(u)
  =\frac{(d-3)(d+u^2)}{(d-u^2)^2}
  \geq\frac{d-3}{d},
  \label{eq:appendix-vmf-convexity}
\end{equation}
and its mean is $m$ by construction.  It therefore has a Gaussian tail of
the same form as Equation~\ref{eq:appendix-heat-subgaussian-tail}, up to
absolute constants.

\subsection{Proof of the heat-Poisson estimate}
\label{app:heat-poisson-proof}

Formally expanding the operator in
Equation~\ref{eq:appendix-exact-poisson-bridge} to first order gives
\begin{equation}
  \log\frac{K^H_{d,r_d}(u/\sqrt d)}{P_{q_d,d}(u)}
  =-\frac{m^2t_d}{2}\operatorname{He}_2(u-m)+\text{remainder},
  \label{eq:appendix-formal-heat-poisson}
\end{equation}
because direct differentiation of
Equation~\ref{eq:appendix-poisson-kernel} yields, on the central scale,
\begin{equation}
  \frac{D(D-1)P_{q_d,d}(u)}{P_{q_d,d}(u)}
  =m^2\operatorname{He}_2(u-m)
   +O\!\left(\frac{1+|u|^4}{d}\right).
  \label{eq:appendix-poisson-operator-leading}
\end{equation}
Here and below $\operatorname{He}_2(x)=x^2-1$.

The remainders in Lemma~\ref{lem:heat-poisson-central} also satisfy the
following bound for some fixed polynomial exponent $k$ and every
sufficiently large fixed $A$, uniformly on
$|u-m|\leq A\sqrt{\log d}$, with
\begin{equation}
  |R_d^H(u)|+|S_d^H(u)|
  \leq C(t_d^2+t_d/d)(1+|u|^k).
  \label{eq:appendix-heat-poisson-central-bound}
\end{equation}

\begin{proof}[Proof of Lemma~\ref{lem:heat-poisson-central}]
Define the rescaled Poisson kernel
\begin{equation}
  \Pi_d(z,u)
  :=P_{z/\sqrt d,d}(u)
  =\left(1-\frac{z^2}{d}\right)
   \left(1-\frac{2zu}{d}+\frac{z^2}{d}\right)^{-d/2}.
  \label{eq:appendix-rescaled-poisson}
\end{equation}
Thus $P_{q_d,d}(u)=\Pi_d(m,u)$.  If $Z\sim\mathcal N(0,1)$ and
$W_d=t_d/2+i\sqrt{t_d}Z$, then
\begin{equation}
  \E[e^{nW_d}]
  =e^{-n(n-1)t_d/2}.
\end{equation}
Termwise expectation in the absolutely convergent zonal expansion therefore
turns Equation~\ref{eq:appendix-exact-poisson-bridge} into the exact identity
\begin{equation}
  K^H_{d,r_d}(u/\sqrt d)
  =\E\!\left[\Pi_d(me^{W_d},u)\right].
  \label{eq:appendix-complex-gaussian-bridge}
\end{equation}
Indeed, $|q_de^{W_d}|=q_de^{t_d/2}<1$ for all sufficiently large $d$.

We next control the analytic Taylor remainder.  Put
\begin{equation}
  f_{d,u}(s)=\frac{\Pi_d(me^s,u)}{\Pi_d(m,u)}.
\end{equation}
For every fixed $A$, direct differentiation of
Equation~\ref{eq:appendix-rescaled-poisson} gives fixed integers $K$ and
constants $C_A$ such that, for $j\leq1$, $\ell\leq4$, and
$|u-m|\leq A\sqrt{\log d}$,
\begin{equation}
  \left|\partial_u^j\partial_s^\ell
  f_{d,u}(\theta W_d)\right|
  \leq C_A(1+|u|)^K
  \exp\!\left\{C_A t_d(1+|u|)(1+Z^2)\right\},
  \qquad 0\leq\theta\leq1.
  \label{eq:appendix-analytic-derivative-bound}
\end{equation}
To verify this bound, note that $|me^{\theta W_d}|=O(1)$ and the two
denominators in Equation~\ref{eq:appendix-rescaled-poisson} are bounded away
from zero.  Its logarithmic derivatives are therefore polynomial in $u$.
For the ratio itself, taking real parts and using
$1-\cos(\theta\sqrt{t_d}Z)\leq\theta^2t_dZ^2/2$ gives the exponential factor
in Equation~\ref{eq:appendix-analytic-derivative-bound}.  Since
$t_d(1+|u|)=o(1)$ on this window, the right-hand side has a uniformly bounded
Gaussian expectation after multiplication by $|W_d|^4$.

Taylor expansion through degree three, followed by expectation, uses
\begin{equation}
  \E W_d=\frac{t_d}{2},\qquad
  \E W_d^2=-t_d+\frac{t_d^2}{4},\qquad
  \E W_d^3=-\frac{3t_d^2}{2}+\frac{t_d^3}{8},
\end{equation}
and $\E|W_d|^4=O(t_d^2)$.  Equation~\ref{eq:appendix-analytic-derivative-bound}
then yields, also after one $u$ derivative,
\begin{equation}
  \frac{K^H_{d,r_d}(u/\sqrt d)}{P_{q_d,d}(u)}
  =1-\frac{t_d}{2}B_d(u)
   +O\!\left(t_d^2(1+|u|)^K\right),
  \label{eq:appendix-heat-poisson-ratio-taylor}
\end{equation}
where
\begin{equation}
  B_d(u)=
  \frac{[D(D-1)\Pi_d](m,u)}{\Pi_d(m,u)},
  \qquad D=z\partial_z.
\end{equation}
For clarity, the first-order coefficient follows because
$f_{d,u}''(0)-f_{d,u}'(0)=B_d(u)$; all second- and third-moment contributions
not shown explicitly are $O(t_d^2)$.

The coefficient is elementary.  From
\begin{equation}
  D\log\Pi_d(z,u)
  =\frac{dz(u-z)}{d-2zu+z^2}-\frac{2z^2}{d-z^2}
\end{equation}
and
$B_d=D^2\log\Pi_d-D\log\Pi_d+(D\log\Pi_d)^2$ at $z=m$, expansion of the
two denominators gives, for $j\leq1$,
\begin{equation}
  \partial_u^j\!\left[
    B_d(u)-m^2\operatorname{He}_2(u-m)
  \right]
  =O\!\left(\frac{1+|u|^4}{d}\right).
  \label{eq:appendix-poisson-operator-c1}
\end{equation}
On the central window, $t_dB_d(u)=o(1)$.  Applying $\log(1+x)$ to
Equation~\ref{eq:appendix-heat-poisson-ratio-taylor}, differentiating once,
and using Equation~\ref{eq:appendix-poisson-operator-c1} proves
Equations~\ref{eq:appendix-heat-poisson-density},
\ref{eq:appendix-heat-poisson-score}, and the pointwise bound
in Equation~\ref{eq:appendix-heat-poisson-central-bound}.

It remains to pass from the central window to the heat-weighted norms.  The
moment recursion in Equation~\ref{eq:appendix-moment-recursion} gives
$\sup_d\E_H|U|^p<\infty$ for every fixed $p$, and
Equation~\ref{eq:appendix-heat-subgaussian-tail} makes the complement of
$|U-m|\leq A\sqrt{\log d}$ smaller than any prescribed polynomial order when
$A$ is chosen large enough.  We only need polynomial global bounds for the
remainders and the radial score.  We record them next.

Let $K_t(\mathbf z,\mathbf e)$ denote the spherical heat kernel relative to
uniform measure, with generator $\tfrac12\Delta_{\mathbb S^{d-1}}$.  Positivity
of the spectral coefficients and $|P_{n,d}(c)|\leq1$ give
\begin{equation}
  K_{t_d}(\mathbf z,\mathbf e)
  \leq K_{t_d}(\mathbf e,\mathbf e)
  \leq P_{q_d,d}(\sqrt d)=\exp(O(\sqrt d)).
  \label{eq:appendix-heat-global-upper}
\end{equation}
The Li-Yau parabolic Harnack inequality on the sphere
\citep{li1986parabolic} gives
\begin{equation}
  K_{t_d/2}(\mathbf e,\mathbf e)
  \leq 2^{(d-1)/2}
  \exp\!\left(\frac{\operatorname{dist}(\mathbf z,\mathbf e)^2}{t_d}\right)
  K_{t_d}(\mathbf z,\mathbf e).
  \label{eq:appendix-li-yau-lower}
\end{equation}
Since $K_{t_d/2}(\mathbf e,\mathbf e)\geq1$ and the spherical distance is at
most $\pi$, Equations~\ref{eq:appendix-heat-global-upper} and
\ref{eq:appendix-li-yau-lower}, together with the explicit Poisson formula,
imply
\begin{equation}
  \left|\log\frac{K^H_{d,r_d}(u/\sqrt d)}{P_{q_d,d}(u)}\right|=O(d)
  \qquad (|u|<\sqrt d).
  \label{eq:appendix-global-log-ratio}
\end{equation}
Thus the density remainder is polynomially bounded on its full support, and
the sub-Gaussian tail upgrades its central $L^2$ and $L^3$ estimates to
Equations~\ref{eq:appendix-heat-poisson-density-bound}.

For the differentiated tail, apply Hamilton's gradient estimate to
$K_{t_d/2+s}(\mathbf z,\mathbf e)$ for $0\leq s\leq t_d/2$
\citep{kotschwar2007hamilton}.  Its spacetime maximum is
$M_d=K_{t_d/2}(\mathbf e,\mathbf e)$, and
\begin{equation}
  M_d\leq P_{\sqrt{q_d},d}(\sqrt d)=\exp(O(d^{3/4})).
\end{equation}
After rescaling from generator $\Delta$ to $\tfrac12\Delta$, Hamilton's
estimate and Equation~\ref{eq:appendix-li-yau-lower} give
\begin{equation}
  |\nabla\log K_{t_d}(\mathbf z,\mathbf e)|^2
  \leq \frac{C}{t_d}\log\frac{M_d}{K_{t_d}(\mathbf z,\mathbf e)}
  \leq \frac{Cd}{t_d}.
  \label{eq:appendix-hamilton-score-bound}
\end{equation}

For $c=\langle\mathbf z,\mathbf e\rangle$ and $u=\sqrt d\,c$, radiality gives
almost everywhere
\begin{equation}
  |\partial_u\log K_{t_d}|^2
  =\frac{|\nabla\log K_{t_d}|^2}{d(1-c^2)}.
  \label{eq:appendix-radial-gradient-relation}
\end{equation}
On $|c|\leq1/2$, this is $O(t_d^{-1})$, so the sub-Gaussian tail controls the
part outside the central window.  On $|c|>1/2$, using
Equation~\ref{eq:appendix-heat-global-upper} and the uniform-coordinate
density gives
\begin{align}
  \E_H\!\left[|\partial_u\log K_{t_d}|^2
              \mathbf1_{\{|C|>1/2\}}\right]
  &\leq \frac{C}{t_d}e^{C\sqrt d}
    \int_{|c|>1/2}\frac{\nu_d(\mathrm dc)}{1-c^2} \\
  &\leq \operatorname{poly}(d)e^{C\sqrt d}
    \left(\frac34\right)^{(d-5)/2}
  =o(d^{-B})
  \label{eq:appendix-endpoint-score-tail}
\end{align}
for every fixed $B$, where $\nu_d$ is the uniform coordinate law of $C$.
Finally, $\partial_u\log P_{q_d,d}(u)=O(1)$ globally, and the leading linear
term in Equation~\ref{eq:appendix-heat-poisson-score} has the same
sub-Gaussian tail.  This proves
Equation~\ref{eq:appendix-heat-poisson-score-bound}.  Centering $R_d^H$
changes its $L^p$ norm by at most a factor of two, completing the proof.
\end{proof}

\subsection{Explicit Poisson-vMF calibration}
\label{app:poisson-vmf-calibration}

This part of the comparison is explicit.  Let $\mu_d$ be the uniform
coordinate law
\begin{equation}
  \mu_d(\mathrm du)
  =c_d\left(1-\frac{u^2}{d}\right)^{(d-3)/2}
  \mathbf 1_{\{|u|<\sqrt d\}}\,\mathrm du,
  \label{eq:appendix-uniform-coordinate-law}
\end{equation}
and define its log-partition function
\begin{equation}
  \psi_d(\lambda)=\log\int e^{\lambda u}\,\mu_d(\mathrm du).
\end{equation}
The exact even moments
\begin{equation}
  \E_{\mu_d}[U^{2j}]
  =d^j\frac{(1/2)_j}{(d/2)_j}
\end{equation}
give, uniformly for $\lambda$ in compact sets,
\begin{equation}
  \psi_d(\lambda)
  =\frac{\lambda^2}{2}-\frac{\lambda^4}{4d}+O(d^{-2}),
  \qquad
  \psi_d'(\lambda)
  =\lambda-\frac{\lambda^3}{d}+O(d^{-2}).
  \label{eq:appendix-uniform-log-partition}
\end{equation}
Writing $\lambda_d^\star=\kappa_d^\star/\sqrt d$, moment matching is
$\psi_d'(\lambda_d^\star)=m$.  Hence
\begin{equation}
  \lambda_d^\star=m+\frac{m^3}{d}+O(d^{-2}),
  \qquad
  \kappa_d^\star
  =m\sqrt d+\frac{m^3}{\sqrt d}+O(d^{-3/2}).
  \label{eq:appendix-vmf-concentration-expansion}
\end{equation}

For $q_d=m/\sqrt d$, direct expansion of
Equation~\ref{eq:appendix-poisson-kernel} gives
\begin{align}
  \log P_{q_d,d}(u)
  &=mu-\frac{m^2}{2}
    +\frac{-m^2+m^4/4-m^3u+m^2u^2}{d}
    +O\!\left(\frac{1+|u|^3}{d^2}\right),
  \label{eq:appendix-poisson-log-expansion}\\
  \log K^V_{d,\kappa_d^\star}(u/\sqrt d)
  &=mu-\frac{m^2}{2}
    +\frac{m^3u-3m^4/4}{d}
    +O\!\left(\frac{1+|u|}{d^2}\right).
  \label{eq:appendix-vmf-log-expansion}
\end{align}
Subtracting and differentiating proves
Equations~\ref{eq:appendix-poisson-vmf-density} and
\ref{eq:appendix-poisson-vmf-score}.
The deliberately loose polynomial envelopes in
Equations~\ref{eq:appendix-poisson-vmf-density} and
\ref{eq:appendix-poisson-vmf-score} are convenient on the growing central
region.  On the full support, the explicit formulas give a bound polynomial
in $d$ for the log-ratio and its derivative.  Splitting at
$|u-m|=A\sqrt{\log d}$ and using the heat-law sub-Gaussian estimate proved in
Equation~\ref{eq:appendix-heat-subgaussian-tail} therefore upgrades the
displayed remainders to $O(d^{-2})$ in the fixed $L^p(H)$ norms used below,
after increasing $A$ if necessary.  No uniform Bessel asymptotic is required
for this Poisson-vMF step.

\subsection{Exponentially weighted likelihood tails}
\label{app:likelihood-tail-control}

For completeness, the exponentially weighted likelihood tail can be
controlled without assuming exponential moments of the residual.  If
$X_d=L_d-\E_HL_d$ and
$G_d=\{|U-m|\leq A\sqrt{\log d}\}$, then the exact change of measure gives
\begin{equation}
  \E_H[e^{-X_d}\mathbf1_{G_d^c}]
  =e^{\KL(H\Vert V)}\Pr_V(G_d^c).
  \label{eq:appendix-tail-change-measure}
\end{equation}
The prefactor is bounded.  Spectral orthogonality and
Equation~\ref{eq:appendix-heat-coefficient-factorization} imply
\begin{equation}
  \int (K^H_{d,r_d})^2\,\mathrm d\mu_d
  \leq\sum_{n\geq0}h_{n,d}q_d^{2n}
  =P_{q_d^2,d}(\sqrt d)=O(1).
\end{equation}
Hence $\E_H\log K^H\leq O(1)$, while
$-\E_H\log K^V=-\kappa_d^\star\E_HC+\psi_d(\lambda_d^\star)=O(1)$.
Thus $\KL(H\Vert V)=O(1)$.  Equations~\ref{eq:appendix-heat-subgaussian-tail},
\ref{eq:appendix-vmf-convexity}, and
\ref{eq:appendix-tail-change-measure} make the exponentially weighted
complementary term smaller than any required polynomial order by choosing
$A$ sufficiently large.  In addition,
Equation~\ref{eq:appendix-density-remainder} gives
$\|X_d\|_{L^3(H)}=O(t_d)$; H\"older's inequality and
Equation~\ref{eq:appendix-heat-subgaussian-tail} then control the linear and
quadratic complementary terms.  On $G_d$,
the central expansions in Lemma~\ref{lem:heat-poisson-central} and
Equation~\ref{eq:appendix-poisson-vmf-density} give
$\sup_{G_d}|X_d|=o(1)$, while the global $L^3$ bound controls the cubic Taylor
remainder.  Thus this argument requires no additional tail assumption.

\clearpage
\raggedbottom
\section{OpenWebText generated samples}
\label{app:owt-generated-samples}

We show the first 140 words of sample index 0 at each saved point
of our 1M-update OWT model, without quality ranking or text edits.
Temperatures select the nearest saved nondominated point to each
target entropy. Reported $H$ and GenPPL describe the source point,
not the passage; different NFEs are separate generations, not revisions.

\texttt{[U+FFFD]} marks a decoder replacement character already in
the output. These passages are not verified factual statements.

\subsection{Target entropy \texorpdfstring{$H=4.5$}{H=4.5} nats}
\label{app:owt-samples-h4-5}

\begin{owtsamplebox}{Ours (1M), budget: 32 NFE\hfill $H_\mathrm{target}=4.5,\ k=1$}
{\color{black!65} $T=0.70$; $H=4.5213$ nats;
GenPPL $=29.17$; 32 calls; $n=512$.}\par
\textless{}\textbar{}endoftext\textbar{}\textgreater{} it.”

At the conference, the designation attracted the support of the state’s development player, Gov. Peter L. Turner, who said the state would be looking for a jack presence.

Mobhan said the move was part of the state's "undefastic vision" for the state. In that description, he noted discussions between the town of Baltimore Smith and the surrounding community the need to find local funding for the development.

He also referred to the cap of the amount that an entity can fund Pro. in the time they go to the state to serve on the U.S. Coast.

Mobhan said the had changed in the past.

Games and the end of its game

Burson said that some of the Moody’s schools, and the future of the state’s history, will be under pressure to fund the program.

Andy Jensen, the longtime
\par{\color{black!55}\itshape [Excerpt ends here.]}
\end{owtsamplebox}

\begin{owtsamplebox}{Ours (1M), budget: 64 NFE\hfill $H_\mathrm{target}=4.5,\ k=1$}
{\color{black!65} $T=0.75$; $H=4.4612$ nats;
GenPPL $=27.33$; 64 calls; $n=512$.}\par
\textless{}\textbar{}endoftext\textbar{}\textgreater{} it.”

At the time, though, there was an army of loud tour musicians \textemdash{} “on the building, trying to get it from 6 a.m. to 3 a.m.,” Dorander said.

“We don’t know if we can keep to with it. Why do the club?”

One or two sessions discussing the tour with the rest of the announcement.

“When they say it was possibly terrorism, it’s an invasion, which is a big deal. They just can’t battle the street,” she said.

Pam also wondered what happened the third day shaking the conductor’s hand.

“A pile of thought it had to do with the rest of the band and the sound of it,” she said.

“You don’t know where people are going,’ you say, ‘We don’t want a reason or how the rest of this happened.’

“It took us half a week
\par{\color{black!55}\itshape [Excerpt ends here.]}
\end{owtsamplebox}

\begin{owtsamplebox}{Ours (1M), budget: 128 NFE\hfill $H_\mathrm{target}=4.5,\ k=1$}
{\color{black!65} $T=0.75$; $H=4.4625$ nats;
GenPPL $=21.06$; 127 calls; $n=512$.}\par
\textless{}\textbar{}endoftext\textbar{}\textgreater{}.

“Most people said this happen at the end, but many don’t want to get up the road \textellipsis{} trying to get it at 3 a.m. to 7 a.m.,” Mathic said.

“We don’t know if it will happen, but then we will have to stay.”

Meanwhile, buses were shutting around last year before the river, but it would be a longer time to go back, especially in the frozen foothills of the park.

The trio expects it will remain a project.

On Friday afternoon, after a chat, a friend of business, Shirley, made a formal opinion on the tunnel’s construction.

“The road and the main technique, and the bridge itself, that’s important for us,” she said.

“You don’t complain about it. You don’t talk to church. We don’t have homeless people a lot, but [under the road], we don’t have
\par{\color{black!55}\itshape [Excerpt ends here.]}
\end{owtsamplebox}

\begin{owtsamplebox}{Ours (1M), budget: 256 NFE\hfill $H_\mathrm{target}=4.5,\ k=1$}
{\color{black!65} $T=0.75$; $H=4.6289$ nats;
GenPPL $=22.67$; 256 calls; $n=512$.}\par
\textless{}\textbar{}endoftext\textbar{}\textgreater{}.

600,000.

And the error rate is very bad, but we’ll have to go to the Sun to know what a lot will go on yet.

And that:

Two Devapse from Earth’s Sun

From the Sun eye that will begin, and the planets will begin to return to Earth, in a minute, then in two and a half of days.

This will be a pretty day, it will be to see the Sun, one of the Spicon. In two days it will see the Sun.

They will make it to the Sun orbit before view of the entire moon, in the great northern part of the Moon. From there, they will see a moon will unfold between their galactic side and the northern side, much as in the northern part, and east.

The moon will be pointed to see the
\par{\color{black!55}\itshape [Excerpt ends here.]}
\end{owtsamplebox}

\begin{owtsamplebox}{Ours (1M), budget: 512 NFE\hfill $H_\mathrm{target}=4.5,\ k=1$}
{\color{black!65} $T=0.75$; $H=4.3232$ nats;
GenPPL $=16.33$; 512 calls; $n=512$.}\par
\textless{}\textbar{}endoftext\textbar{}\textgreater{} of 10,000,000.

Therefore the creation of the three months of creation, with that day of earth, will be constituted into account of its age, while, on the earth, It will consist of 10,000 years and shall be six days.

And the whole generation which will be composed of the ages, at a time of 70,000 and 50,000 days according to the names of one.

I can say that this generation the ages will be divided to the earth, lasting by 10,000 days, and sixty-three in a longer time.

That will be the whole period of the whole life of the earth seven, in the whole life of all the stars. It is, that the age between two years, all the earth, and the year thirty, can be as many as 34,000 years.

This earth will be formed in the
\par{\color{black!55}\itshape [Excerpt ends here.]}
\end{owtsamplebox}

\begin{owtsamplebox}{Ours (1M), budget: 1,024 NFE\hfill $H_\mathrm{target}=4.5,\ k=1$}
{\color{black!65} $T=0.80$; $H=4.5494$ nats;
GenPPL $=16.20$; 1,024 calls; $n=512$.}\par
\textless{}\textbar{}endoftext\textbar{}\textgreater{} day of October 11, 1955.

For the specific quantity of the operation of this period, which is not to say it is to be given to the manufacturer, or any of the services, may have been called the preceding medium, in provision to the company. So there is an additional factor which is that the operation of the operation, the, after the date of October 1, 1955, prior to the beginning of the operation, appears on the ground as to the quantity of the plant, and the quantity, producing the water produced in this necessary quantity, and does not affect the relation between the water company and the plant, and in the important quantity of water which was produced in the plant where it was produced. Then there is a second factor, but one must hypothesise that there are
\par{\color{black!55}\itshape [Excerpt ends here.]}
\end{owtsamplebox}

\Needspace{8\baselineskip}

\subsection{Target entropy \texorpdfstring{$H=5.0$}{H=5.0} nats}
\label{app:owt-samples-h5-0}

\begin{owtsamplebox}{Ours (1M), budget: 32 NFE\hfill $H_\mathrm{target}=5.0,\ k=1$}
{\color{black!65} $T=0.80$; $H=4.9733$ nats;
GenPPL $=46.68$; 32 calls; $n=512$.}\par
\textless{}\textbar{}endoftext\textbar{}\textgreater{} it.”

Not only that, the move shows the support of the men’s golf team. Rep. Peter L. Schumer, D-N.C., voted for the drawats.

Hobna said the move was part of the state's "irquastic project" for several reasons. In this case, he noted discussions with a group of state athletes and the community and the need to find local benefits for the team.

He also outlined in in the bonds the amount that an organization can pay. and in the time they get to the team they play on the U.S. Coast.

Hobna said this went out in the past.

Games into the General Master' game

Bash said that the growth and growth of interest in soccer, and the future of the state’s pro league will be sorely positioned to support the change.

He said, the next group of legislators
\par{\color{black!55}\itshape [Excerpt ends here.]}
\end{owtsamplebox}

\begin{owtsamplebox}{Ours (1M), budget: 64 NFE\hfill $H_\mathrm{target}=5.0,\ k=1$}
{\color{black!65} $T=0.85$; $H=5.0222$ nats;
GenPPL $=46.51$; 64 calls; $n=512$.}\par
\textless{}\textbar{}endoftext\textbar{}\textgreater{} there.”

At the Sunday, Saturday, there is an army of street music bands \textemdash{} “finding the building, trying to get it around 6 a.m. to 3 a.m.,” Mathas said.

“We don’t know if we can keep away with kids. We love the club.”

Students and musician sessions break up a rally in the downtown, Friday.

Last July, Progressive announced it was interested in the Queen’s Centre Project, which is a landmark venture. They were participating in what will tour the stations, offices and transport rooms.

Corza questioned what happened the teens an affirmation of Ontario’s needs.

“These sorts of things have nothing to do with the values of the province and the values of all,” she said.

“You don’t know where people are going,” she added, “We don’t want to have a stage choir coming to this festival.”

Korice
\par{\color{black!55}\itshape [Excerpt ends here.]}
\end{owtsamplebox}

\begin{owtsamplebox}{Ours (1M), budget: 128 NFE\hfill $H_\mathrm{target}=5.0,\ k=1$}
{\color{black!65} $T=0.85$; $H=4.9393$ nats;
GenPPL $=38.23$; 127 calls; $n=512$.}\par
\textless{}\textbar{}endoftext\textbar{}\textgreater{}.

“We have developed our attitude in the past, but we can’t get our player up the road \textemdash{} trying to see what the model is \textemdash{} and tell them what the idea is,” Casinson said.

“We don’t know if it will happen, but certainly we will have to stay.”

The are expected to set together a club in the Territory, but it would be a long way to go forward, locking down the finals snout against the Malans, although the club expects it might be a way.

At home Camp, in the season, a group of about 30 people, played the game at the small city’s expense.

“We think about how the model works and for the club, that’s important for us,” Burke said.

“You don’t know about it. You don’t know about family. We didn’t want a family
\par{\color{black!55}\itshape [Excerpt ends here.]}
\end{owtsamplebox}

\begin{owtsamplebox}{Ours (1M), budget: 256 NFE\hfill $H_\mathrm{target}=5.0,\ k=1$}
{\color{black!65} $T=0.80$; $H=4.9059$ nats;
GenPPL $=32.15$; 256 calls; $n=512$.}\par
\textless{}\textbar{}endoftext\textbar{}\textgreater{} in the long run, too.

Certainly the end round is very bad, but we’ll have to go to the US to see if a lot will go on yet.\textless{}\textbar{}endoftext\textbar{}\textgreater{}Posted on by: Director Tao Lau- in Shanbe’s Congress

A new training mission that starts today, with the crew in position to go to China, from Chinese coast, 2 or 5 times a day at home.

This will be on Thursday as the crew will begin to climb the main surface station to the International Space Station. The 400 men will then take a prototype.

They will communicate the signal the satellite system receives instead of the a metre, in this case the path of the boom. From there, they will be getting a testator between their balloon information and the water minimum as well \textendash{} as they are working, and
\par{\color{black!55}\itshape [Excerpt ends here.]}
\end{owtsamplebox}

\begin{owtsamplebox}{Ours (1M), budget: 512 NFE\hfill $H_\mathrm{target}=5.0,\ k=1$}
{\color{black!65} $T=0.90$; $H=4.9996$ nats;
GenPPL $=30.93$; 512 calls; $n=512$.}\par
\textless{}\textbar{}endoftext\textbar{}\textgreater{} Thursday, October 7, 2017.

But the main mission achieved an almost exclusively visual point for that day. The mission will be planning to launch once a year, and by Wednesday the Lunar and Mars Mission rockets are expected to send operations for at least six weeks.

The number of spacecraft that have been launched on the moon rests on a scale between 17,000 and 8,000 days, with all the other ones.

Officials say the mission took the 4.6 miles long, 61-mile trip from Mars launched in 1976, about 250 miles away from home to Earth.

That will propel the closest mission to the icy view of Mars that has sailed through its previous journey of 32 years. The rover orbits its core about 1,300 miles away from the sun, and the trajectory of space can be as low as
\par{\color{black!55}\itshape [Excerpt ends here.]}
\end{owtsamplebox}

\begin{owtsamplebox}{Ours (1M), budget: 1,024 NFE\hfill $H_\mathrm{target}=5.0,\ k=1$}
{\color{black!65} $T=0.90$; $H=4.9800$ nats;
GenPPL $=25.60$; 1,024 calls; $n=512$.}\par
\textless{}\textbar{}endoftext\textbar{}\textgreater{}, on March 11, 2017.

In the global age of waste waste problems, waste, a method similar to the one used by U.S. homes and homes in Wilmington, Michigan, would have been a great way to try and respond to that problem. But there are multiple alternative advantages that would be a solution to the problem. American Waste Inc. argued, in April, that it could reduce the amount of waste waste and reduce waste levels of as much as 1.4 billion pounds. This would allow waste to be used in buildings or other areas that could be identified from waste.

“In waste, there is a tremendous amount of waste that is injection into the waste that you put in. So it creates a very simple environment, and it also means that there are very acidic volumes of the waste needed
\par{\color{black!55}\itshape [Excerpt ends here.]}
\end{owtsamplebox}

\Needspace{8\baselineskip}

\subsection{Target entropy \texorpdfstring{$H=5.25$}{H=5.25} nats}
\label{app:owt-samples-h5-25}

\begin{owtsamplebox}{Ours (1M), budget: 32 NFE\hfill $H_\mathrm{target}=5.25,\ k=1$}
{\color{black!65} $T=0.90$; $H=5.2894$ nats;
GenPPL $=78.70$; 32 calls; $n=512$.}\par
\textless{}\textbar{}endoftext\textbar{}\textgreater{} it.”

On Monday afternoon, Christie and appearing at a rally called d’s on Pittsburgh. Rep. Peter L. Mallan (R) \& Tenn., met with a Hall of business owners Larry Klolffer.

Bosan called fostering the "irquastic model" for competing people. Earlier this year, he would work with a group community committee to discuss the problem and the need to boost efforts.\textless{}\textbar{}endoftext\textbar{}\textgreater{}CAP CEO Aie Hansen highlighted an email carrying in the Federal 25 process that an attorney might speak with other local organizations if they were to the committee to serve on the B.C. Greens.

Mosbee said the group he was involved with other members had bought into itself.

“I kind of thought about the talent and amount of involvement we need, and the overall impact on Fair’s (conference) leadership, plan to tackle this program. I believe we have to recruit
\par{\color{black!55}\itshape [Excerpt ends here.]}
\end{owtsamplebox}

\begin{owtsamplebox}{Ours (1M), budget: 64 NFE\hfill $H_\mathrm{target}=5.25,\ k=1$}
{\color{black!65} $T=0.90$; $H=5.2192$ nats;
GenPPL $=63.26$; 64 calls; $n=512$.}\par
\textless{}\textbar{}endoftext\textbar{}\textgreater{} communities.”

At the weekend, residents and vendors \textemdash{} an event of town service weekly \textemdash{} “that the mayor is going to prioritize police from 6 a.m. to 3 a.m.,” Rars said.

Rars said the council “probiously will stand along with people to antagonite the city.”

Books and business libraries have up a quarterly in the city, weeks.

Last week, Progressive announced it was investing in the Queen’s Centre Team, working with the SpotlightSciters to bring in what will become the stations, businesses and transit services.

Rater also questioned what caused the dramatic continued rise of Winnipeg’s needs.

“These sorts of things have nothing to do with the here of the few and the future of life,” she said.

“We don’t know where parents are going,” she added, “We don’t want to have a football bus coming to this city.”
\par{\color{black!55}\itshape [Excerpt ends here.]}
\end{owtsamplebox}

\begin{owtsamplebox}{Ours (1M), budget: 128 NFE\hfill $H_\mathrm{target}=5.25,\ k=1$}
{\color{black!65} $T=0.95$; $H=5.2645$ nats;
GenPPL $=60.77$; 127 calls; $n=512$.}\par
\textless{}\textbar{}endoftext\textbar{}\textgreater{}\texttt{[U+FFFD]}

The list of players with an emphasis is limited. “We can’t do a lot of the field science processing to guess what a game is, and then learn what the game is,” Matto said. “But, because we understand what it is, it will kind of make you down right back to games.”

The coaches need to test building a game in every river, but it will take a longer time to collect player data. In the process, none of the competitors will fit. The team expects it will be able to move those young players and their leaders in a row, a group of roughly 17,000 players to play at the Wildcats’ 14th games.

“We like thinking about the simulation experience and the simulation problem, that’s important for our coaches,” Matto said. “You don’t know about it. You don’t
\par{\color{black!55}\itshape [Excerpt ends here.]}
\end{owtsamplebox}

\begin{owtsamplebox}{Ours (1M), budget: 256 NFE\hfill $H_\mathrm{target}=5.25,\ k=1$}
{\color{black!65} $T=0.90$; $H=5.2485$ nats;
GenPPL $=53.62$; 256 calls; $n=512$.}\par
\textless{}\textbar{}endoftext\textbar{}\textgreater{} Canada until March 1, 2016.

But the exact assessment is unclear: ‘We’ll have to go to military experts to know what a solution is,’ Army spokesman Col. Gen. John Dosman, told reporters’ event.

Dosman cautioned that the importance of deploying the troops would be a problem to an officer who drinks 10, 25 or four days a day in Canada.

He also said this fall the usual plan would be to expand the soldiers base battalion to 10,000 people. But two troops would not meet that option.

That would give the Canadian National Armed Forces an advance of about 10,000 from the Canadian side of the base base. From there, that would mean connecting three training zones between the base base and the infantry reserve, each of which has about 80,000 people.

The detachment could require Wednesday to train
\par{\color{black!55}\itshape [Excerpt ends here.]}
\end{owtsamplebox}

\begin{owtsamplebox}{Ours (1M), budget: 512 NFE\hfill $H_\mathrm{target}=5.25,\ k=1$}
{\color{black!65} $T=1.00$; $H=5.2211$ nats;
GenPPL $=44.93$; 512 calls; $n=512$.}\par
\textless{}\textbar{}endoftext\textbar{}\textgreater{} results of March 1, 2011.

In August, NASA launched the JENN-2 spacecraft, the third spacecraft the agency has built to undertake such a mission, and by July the agency's 12STAR missions will deploy 50 satellite, communications and EVICs. They will provide a limited spacecraft which will enable the agency to transmit data about a mission of 17 neosetary objects, including Jupiter, to the future solar mission.

Jawn was collecting at the 4.6MEs at the west point of the joint probe in 2005. Just two years later, NASA's only space mission is working directly toward the OELISIII mission, a spacecraft that has been at its best stage for seven years. The telescope has an estimated 440 kilometers between nearby objects V and C rockets. At the top of the spacecraft will work an estimated 1,100 passengers, according to the joint
\par{\color{black!55}\itshape [Excerpt ends here.]}
\end{owtsamplebox}

\begin{owtsamplebox}{Ours (1M), budget: 1,024 NFE\hfill $H_\mathrm{target}=5.25,\ k=1$}
{\color{black!65} $T=1.05$; $H=5.2632$ nats;
GenPPL $=41.24$; 1,024 calls; $n=512$.}\par
\textless{}\textbar{}endoftext\textbar{}\textgreater{} price of \$0.79. The approval for the drug is an indication that Shada Pharma, already under way for the drug announced by U.S. and European regulators in January, might come ahead with an outlier clinical drug, despite the fact that B. vulgari is used in mice, to produce clinical testing and discovery trial. Shada has inserted DIMA, a test for proteins found in human cells, bones and other tissues consisting of as many as 1.4 billion genes. It has been used to identify spread of different cells in many different samples, in this case meat, liver, brain and liver, and in a specific source of tissue that is involved in the human genome. “DIMA is a common molecule that occurs in the body, but it is involved in lung and in muscle cord cells and when grown in muscle
\par{\color{black!55}\itshape [Excerpt ends here.]}
\end{owtsamplebox}

\Needspace{8\baselineskip}

\subsection{Target entropy \texorpdfstring{$H=5.5$}{H=5.5} nats}
\label{app:owt-samples-h5-5}

At the 1,024-NFE budget, the default $k=1$ sweep does not reach
this entropy target. Its $k=2$ example below is a separate sampler
configuration and uses 64 source samples rather than 512.

\begin{owtsamplebox}{Ours (1M), budget: 32 NFE\hfill $H_\mathrm{target}=5.5,\ k=1$}
{\color{black!65} $T=1.00$; $H=5.5393$ nats;
GenPPL $=145.56$; 32 calls; $n=512$.}\par
\textless{}\textbar{}endoftext\textbar{}\textgreater{} her.”

ITITIONATION: dinner and showing at a rally called yaleshell on Sept. 24. - surprised to get what K\&a.S. did with a Paganiku: ACT organizer Jeol Perkins.

Oerr says the reservation of the Kinderhaster is tough for numerous people. In this post, he gives examples with two community and local cultures discuss the problem with the horning bird rescue.\textless{}\textbar{}endoftext\textbar{}\textgreater{}ORITA, Md. (AP) \textemdash{} The Fed’s Education Department is working with other local organizations so they can test the money they save on law enforcement residents and their businesses.

Director Jason Taylor said the group of around 5,000 members have lost their wounds.

“A lot of thought about the skills and amount of information we need, and the biggest danger from Capital Education center (ETA) program is if you don’t define where you are or recruit any group of people with
\par{\color{black!55}\itshape [Excerpt ends here.]}
\end{owtsamplebox}

\begin{owtsamplebox}{Ours (1M), budget: 64 NFE\hfill $H_\mathrm{target}=5.5,\ k=1$}
{\color{black!65} $T=1.00$; $H=5.5185$ nats;
GenPPL $=131.02$; 64 calls; $n=512$.}\par
\textless{}\textbar{}endoftext\textbar{}\textgreater{}\texttt{[U+FFFD]}

A number of mayors told Providence,C., meeting at an event of town management Wednesday that “the executive director is going to define what B.B.C. would be like. Nothing,” Dorell said.

Quence said the committee “propartiously will continue to identify the streets that be of residents.”

Education and job libraries have become increasingly focus in the area, too.

In July, Richmond said it was interested in the District’s Centre Act, which a same-party “strong coalition” will begin on infrastructure, businesses and transit services \textemdash{} a tactic Manager Patricia Patterson described as signaled the broad continued status of Clark’s body.

“These sorts of decisions have nothing to do with the best public facilities \textellipsis{} and the highest city services,” she said.

“You don’t know exactly who people are,” she added, “you don’t want to have a moment to go to [the
\par{\color{black!55}\itshape [Excerpt ends here.]}
\end{owtsamplebox}

\begin{owtsamplebox}{Ours (1M), budget: 128 NFE\hfill $H_\mathrm{target}=5.5,\ k=1$}
{\color{black!65} $T=1.05$; $H=5.5284$ nats;
GenPPL $=123.09$; 127 calls; $n=512$.}\par
\textless{}\textbar{}endoftext\textbar{}\textgreater{}\texttt{[U+FFFD]}

The list of people with positive stature is limited. “We can’t do a lot of research and then try to guess what a gene most people do before they know the person is,” Paulas said. “But then, to understand what it is, it is kind of becoming just silent out in a situation.”

So, people may look like a god once every century, but it would take a long time to recover their DNA. But the looming potential field of aging people looks complex. The team hopes it might be better to move these young people from their neighbors in a college, a group of maybe college individuals, and to collect data on others’ feelings and preferences.

“We can learn about the intro arts and the technical community, that feed into using data for their insight,” Paulas says. “You
\par{\color{black!55}\itshape [Excerpt ends here.]}
\end{owtsamplebox}

\begin{owtsamplebox}{Ours (1M), budget: 256 NFE\hfill $H_\mathrm{target}=5.5,\ k=1$}
{\color{black!65} $T=1.05$; $H=5.4690$ nats;
GenPPL $=118.06$; 256 calls; $n=512$.}\par
\textless{}\textbar{}endoftext\textbar{}\textgreater{} service until March 15, 2016.

And the sad thing is that Yahoo doesn’t have to have to limit it, with never wanting to be so trusted and not yet.\textless{}\textbar{}endoftext\textbar{}\textgreater{}Posted on:: Joe Tao McDonald/WassTV’s Flickr

\mbox{-}\mbox{-} PT is Radio One LIVE. View the form in here and listen to this title stream from 11,0595, for a moment of change.

FRE announced this week the radio station will be equipped with the AM surface monitor for fast resonation speeds. With two days already end, the OEMIADRAN satellites will test the radio components for the specific purpose of designing a field designed for the best synchronization of weather weather studies. They will begin with a satellite measuring two satellites added into their own definitions, an ocean receiving magnetom objects as high as 15,000 miles.

The station will provide the radio stations and
\par{\color{black!55}\itshape [Excerpt ends here.]}
\end{owtsamplebox}

\begin{owtsamplebox}{Ours (1M), budget: 512 NFE\hfill $H_\mathrm{target}=5.5,\ k=1$}
{\color{black!65} $T=1.10$; $H=5.4565$ nats;
GenPPL $=74.59$; 512 calls; $n=512$.}\par
\textless{}\textbar{}endoftext\textbar{}\textgreater{}, in particular26,27.

With the single thermore molecule BALIS2, a two-dimensional experimental model was developed to design such a compound. In October 1979 the German Regiwald University AMIRDEER/AMICE collaboration Otto A. Fagokamp conducted a demonstration of concrete, the mechanism for converting acid from a reaction that has six hydrogen molecules combined, namely the chemical unit (like carbon release buckhanine, for example), that store and store in lighting, helium-like material for helium molecules in contact.25 Six years later when Heinrich Erbmann developed the model for the AMIRDEER/AMICE collaboration, he resulted in his most experimental experiment for screening,28 which was the first chemical material for an meturous ion compound. The first product of the new approach was this zinc atom, which was being optimized for the “clinging electrode”, but not yet bound to the interpean crystals.29 Example of the compound 87236-814 which
\par{\color{black!55}\itshape [Excerpt ends here.]}
\end{owtsamplebox}

\begin{owtsamplebox}{Ours (1M), budget: 1,024 NFE\hfill $H_\mathrm{target}=5.5,\ k=2$}
{\color{black!65} $T=1.15$; $H=5.5074$ nats;
GenPPL $=97.11$; 1,023 calls; $n=64$.}\par
\textless{}\textbar{}endoftext\textbar{}\textgreater{} proteins are amassing among genes. This findings provide a potential multam of B1-related gene expression indicating that the two homotypes are T, prima and prima6. There are also several subtypes, resulting in each B1 gene sequence. T, prima3,4 and 2.5 are identical to TGP. Aimensional OR of B1 gene genes identified in these results reveal cross-forming between LAT and F to protein regulator as well as regulator and allinetics. These results indicate a widely shared search for T or B1 gene variants. 3. Functional findings These findings include the presence of three B1-specific T intermedi groups that were identified in the same template that were selected for proteins belonging to a single-element gene. The results of this duplication are described in laboratory and outside sequence findings g. Stice coformation is more prone to adstian deformation alteration than in activated
\par{\color{black!55}\itshape [Excerpt ends here.]}
\end{owtsamplebox}

\end{document}